\def\eqref#1{equation~\ref{#1}}
\def\1{\bm{1}}
\DeclareMathAlphabet{\mathsfit}{\encodingdefault}{\sfdefault}{m}{sl}
\SetMathAlphabet{\mathsfit}{bold}{\encodingdefault}{\sfdefault}{bx}{n}
\newcommand{\Stab}{\mathrm{Stab}}
\newcommand{\Orb}{\mathrm{Orb}}
\newcommand{\Cl}{\mathrm{Cl}}
\newcommand{\Deg}{\mathrm{Deg}}
\newcommand{\Wr}[1]{\ \mathrm{Wr}_\Omega\ }
\renewcommand{\wr}[1]{\ \mathrm{wr}_{#1}\ }
\newcommand{\groupname}{\texttt{name}}
\newcommand{\Desc}[2]{\State \makebox[5em][l]{#1}#2}
\theoremstyle{plain}
\newtheorem{theorem}{Theorem}[section]
\newtheorem{proposition}[theorem]{Proposition}
\newtheorem{lemma}[theorem]{Lemma}
\newtheorem{corollary}[theorem]{Corollary}
\theoremstyle{definition}
\newtheorem{definition}[theorem]{Definition}
\theoremstyle{remark}
\newtheorem{remark}[theorem]{Remark}
\theoremstyle{plain}
\newenvironment{manualtheorem}[1]{%
  \manualtheoreminner
}{\endmanualtheoreminner}
\newenvironment{manuallemma}[1]{%
  \manuallemmainner
}{\endmanuallemmainner}
\newenvironment{manualcorollary}[1]{%
  \manualcorollaryinner
}{\endmanualcorollaryinner}
\title{Equivariant Symmetry Breaking Sets}
\author{\name YuQing Xie \email xyuqing@mit.edu \\
      \addr Department of Electrical Engineering and Computer Science\\
      Massachusetts Institute of Technology
      \AND
      \name Tess Smidt \email tsmidt@mit.edu \\
      \addr Department of Electrical Engineering and Computer Science\\
      Massachusetts Institute of Technology}
\begin{document}

\maketitle

\begin{abstract}
    Equivariant neural networks (ENNs) have been shown to be extremely effective in applications involving underlying symmetries. By construction ENNs cannot produce lower symmetry outputs given a higher symmetry input. However, symmetry breaking occurs in many physical systems and we may obtain a less symmetric stable state from an initial highly symmetric one. Hence, it is imperative that we understand how to systematically break symmetry in ENNs. In this work, we propose a novel symmetry breaking framework that is fully equivariant and is the first which fully addresses spontaneous symmetry breaking. We emphasize that our approach is general and applicable to equivariance under any group. To achieve this, we introduce the idea of symmetry breaking sets (SBS). Rather than redesign existing networks, we design sets of symmetry breaking objects which we feed into our network based on the symmetry of our inputs and outputs. We show there is a natural way to define equivariance on these sets, which gives an additional constraint. Minimizing the size of these sets equates to data efficiency. We prove that minimizing these sets translates to a well studied group theory problem, and tabulate solutions to this problem for the point groups. Finally, we provide some examples of symmetry breaking to demonstrate how our approach works in practice. The code for these examples is available at \url{https://github.com/atomicarchitects/equivariant-SBS}.
\end{abstract}

\section{Introduction}
\label{sec:introduction}

Equivariant neural networks have emerged as a promising class of models for domains with latent symmetry \citep{wang2022surprising}. This is especially useful for scientific and geometric data, where the underlying symmetries are often well known. For example, the coordinates of a molecule may be different under rotations and translations, but the molecule remains the same. Traditional neural networks must learn this symmetry through data augmentation or other training schemes. In contrast, equivariant neural networks already incorporate these symmetries and can focus on the underlying physics. ENNs have achieved state-of-the-art results on numerous tasks including molecular dynamics, molecular generation, and protein folding \citep{batatia2022mace,batzner20223,daigavane2023symphony,ganea2021independent,hoogeboom2022equivariant,jia2020pushing,jumper2021highly,liao2022equiformer}.

A consequence of the symmetries built-in to ENNs is that their outputs must have equal or higher symmetry than their inputs \citep{smidt2021finding}. However, we frequently encounter situations where we desire a lower symmetry output given a higher symmetry input. We refer to an input-output pair where the output has lower symmetry as a symmetry breaking sample. Such symmetry breaking samples occur frequently in physical systems. In physics, these are classified into two types: explicit symmetry breaking and spontaneous symmetry breaking \citep{castellani2003meaning}. In explicit symmetry breaking, the governing laws are manifestly asymmetric explaining the symmetry breaking samples while in spontaneous symmetry breaking the laws are symmetric but we still observe symmetry breaking samples. Notably, in spontaneous symmetry breaking there are multiple correct outputs for a given input and we have a modified form of equivariance. We expand on this in Section~\ref{sec:symm_break_problem}.

One class of approaches conducive for explicit symmetry breaking is learning to break symmetry. For example, \citet{smidt2021finding} showed that the gradients of the loss function can be used to learn a symmetry breaking order parameter. This lets us identify what type of missing asymmetry is needed to correctly model the results. Another related approach is approximate and relaxed equivariant networks \citep{huang2023approximately,van2022relaxing,wang2023relaxed,wang2022approximately}. These networks have similar architectures to equivariant models, but allow nontrivial weights in the layers to break equivariance. Hence, they can learn how much symmetry to preserve to fit the data distribution. However, since these methods explicitly break equivariance, they are not appropriate for spontaneous symmetry breaking. If all symmetrically related lower symmetry outputs are equally likely in the data, then relaxed networks will see the distribution as symmetric and fail to break symmetry. Further, since equivariance is broken, the method is not guaranteed to behave properly when shown data transformed under the group.

In the case of spontaneous symmetry breaking, there exist some works which partially solve the problem. \citet{balachandar2022breaking} design a symmetry detection algorithm and an orientation aware linear layer for mirror plane symmetries. However, the scope of their methods are specific to mirror symmetries and point cloud data. Finally, \citet{kaba2023symmetry} take the approach of defining a relaxed version of equivariance which takes input symmetry into account. They derive modified constraints linear layers would need to satisfy for this relaxed equivariance and argue such models can give lower symmetry outputs. However, they mention these conditions do not reduce as easily as for the usual equivariant linear layers. Further this method still does not provide a mechanism to sample all possible outputs.

Our work focuses on the spontaneous symmetry breaking case. This case is extremely important as spontaneous symmetry breaking occurs in many physical phenomena \citep{beekman2019introduction}. Hence the widespread adoption of machine learning techniques for scientific applications will inevitably run into symmetry breaking issues. Examples of existing applications which may run into difficulties include crystal distortion, predicting ground state solutions from Hamiltonians, and solving PDEs \citep{jafary2019applying,lewis2024improved,lino2022multi}. In the crystal distortion case, there are distortions from high symmetry to low symmetry structures \citep{kay1949xcv}. The lowest energy states of physical systems (ground states of the Hamiltonian) are often low symmetry and famously explains the Higgs mechanism for giving particles mass \citep{higgs1964broken}. Finally, Karman vortex sheets are a well known example of symmetry breaking in fluid simulations \citep{tang1997symmetry}.

In this work, we provide the first general solution for the spontaneous symmetry breaking problem in equivariant networks. We identify that the key difficulty is how to allow equivariant networks to output a set of valid lower symmetry outputs. Similar to \citet{smidt2021finding}, we use the natural idea of providing additional symmetry breaking parameters as input to the model. However, rather than learning these parameters, we show that we can sample them from a symmetry breaking set (SBS) that we design based only on the input and output symmetries. In particular, we prove that minimizing the size of the equivariant SBSs is equivalent to a fundamental group theory question. We emphasize that this fully characterizes how to efficiently break symmetry with equivariant SBSs for any group. Counter-intuitively, we find that it is sometimes beneficial to break more symmetry than needed. 

The main features of our method are the following:
\begin{enumerate}
    \item \textbf{Equivariance:} Our framework is fully equivariant. That we can achieve this is a key point of this work and allows simulation of spontaneous symmetry breaking.
    
    \item \textbf{Simple to implement:} Our approach only requires a designing a set of additional inputs into an equivariant network. We have fully characterized the design of such sets.
    
    \item \textbf{Generalizability:} We emphasize that our characterization of SBSs applies to any groups.
\end{enumerate}
We would like to point out that to achieve our results, we assume we can detect the symmetry of our input and outputs. Further, there is the more general problem of treating symmetrically related outputs as the same in our loss. We discuss these limitations in Appendix~\ref{app:limitations}.

The rest of this paper is organized as follows. In Section~\ref{sec:symm_break_problem} we formalize the symmetry breaking problem, the distinction between explicit and spontaneous symmetry breaking, and the type of task performed in the spontaneous symmetry breaking setting. In Section~\ref{sec:full_break}, we examine the case where we break all symmetries of our input. We motivate the idea of a SBS and show that imposing equivariance leads to an additional constraint of closure under the normalizer. The intuition is that the normalizer characterizes all orientations of our data which do not change its symmetry. Next, we translate bounds on the size of the equivariant SBSs into the purely group theoretical problem of finding complements. We have tabulated these complements in Appendix~\ref{app:full_classification} for the point groups. In Section~\ref{sec:partial_break}, we generalize to the case where we may still share some symmetries with our input. In Section~\ref{sec:SBS_construct} we describe how to construct SBSs in an actual implementation. Having fully described our framework, we then highlight how our method relates to prior symmetry breaking works in Section~\ref{sec:related_works}. Finally, in Section~\ref{sec:experiments}, we introduce examples of symmetry breaking and demonstrate how our method works in practice.

\textbf{Notation and background:} An overview of the notation and common symbols used can be found in Appendix~\ref{app:notation}. A brief overview of mathematical concepts needed in the paper can be found in Appendix~\ref{app:math_background} and an overview of ENNs can be found in Appendix~\ref{app:ENNs_overview}.

\section{Symmetry breaking problem}
\label{sec:symm_break_problem}


Here, we make precise what we mean by a symmetry breaking and the issue it poses for equivariant neural networks. We begin with the following observation first made in \citet{smidt2021finding}.

\begin{lemma}
    Let $X$ and $Y$ be spaces equipped with a group action of $G$. We can choose an equivariant $f:X\to Y$ such that $f(x)=y$ only if $\Stab_G(y)\geq\Stab_G(x)$. \label{Lemma:equi_stab_simple}
\end{lemma}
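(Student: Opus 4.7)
The plan is to unpack the definitions and exploit equivariance directly on elements of the stabilizer of $x$. Concretely, I would start by fixing $x\in X$ and $y\in Y$ with $f(x)=y$, and then show the containment of subgroups $\Stab_G(x)\subseteq\Stab_G(y)$ by taking an arbitrary $g\in\Stab_G(x)$ and chasing what equivariance implies for $g\cdot y$.

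The central computation is a one-line application of equivariance: for $g\in\Stab_G(x)$ we have $g\cdot x=x$ by definition of the stabilizer, and therefore
\[
g\cdot y \;=\; g\cdot f(x) \;=\; f(g\cdot x) \;=\; f(x) \;=\; y,
\]
which says exactly that $g\in\Stab_G(y)$. This shows $\Stab_G(x)\subseteq\Stab_G(y)$, which under the subgroup ordering is the desired statement $\Stab_G(y)\geq\Stab_G(x)$.

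There is no real obstacle here; the argument is essentially definitional once one remembers that equivariance means $f(g\cdot x)=g\cdot f(x)$. I would close by remarking on the contrapositive form that is useful for the rest of the paper: if $\Stab_G(x)\not\subseteq\Stab_G(y)$, then no equivariant map can send $x$ to $y$, which is the version of the statement that motivates the symmetry breaking framework in the subsequent sections. It may also be worth noting explicitly that the lemma does not require $X$ or $Y$ to carry any additional structure beyond a $G$-action and does not require $f$ to be continuous, linear, or otherwise restricted — equivariance alone suffices.
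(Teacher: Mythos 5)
Your argument is correct and is essentially identical to the paper's own proof of this direction (Appendix~\ref{app:proof_equi_stab} proves a stronger biconditional under a transitivity hypothesis, but its ``only if'' step is exactly your one-line computation $gy = f(gx) = f(x) = y$ for $g\in\Stab_G(x)$). No gaps; the remark about the contrapositive matches how the lemma is used in Section~\ref{sec:symm_break_problem}.
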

See Appendix~\ref{Lemma:equi_stab} for a generalization of this lemma and a proof.

Hence, the output of an equivariant function must have at least the symmetry of the input. This motivates the following definition of symmetry breaking at the individual sample level.

\begin{definition}[Symmetry breaking sample]
    Let $G$ be a group. A sample with input $x$ and output $y$ is symmetry breaking with respect to $G$ if $\Stab_G(y)\not\geq\Stab_G(x)$.
\end{definition}

Lemma~\ref{Lemma:equi_stab_simple} tells us a symmetry breaking sample with respect to $G$ can never be perfectly modeled by a $G$-equivariant function. In experimental samples, we may have random noise in our observations of our outputs which causes samples to be symmetry breaking. In such cases equivariance is beneficial since it can help remove the noise. However, in some cases we truly have a symmetry breaking sample even with a perfect measurement. In physics this is classified into two cases: explicit symmetry breaking and spontaneous symmetry breaking. In the following discussion, it is useful to view the underlying model as a set valued function. A typical function $f:X\to Y$ can be thought of as a set valued function $F:X\to\mathcal{P}(Y)$ defined as $F(x)=\{f(x)\}$. Note that if there is an action of $G$ on $Y$, one can naturally define an action on $\mathcal{P}(Y)$ such that $U\subseteq Y$ transforms as $gU=\{gu:u\in U\}$. Hence there is a natural way to define equivariance of set valued functions.

In explicit symmetry breaking, the underlying physics of the system is asymmetric. For example, there may be an unknown electric field which breaks rotation symmetry. 
\begin{definition}[Explicit symmetry breaking]
    Let $G$ be a group. A function $F:X\to\mathcal{P}(Y)$ which is not $G$-equivariant explicitly symmetry breaks $G$.
    \label{def:explicit_break}
\end{definition}
In such cases using an equivariant function actually prevents us from learning the true function.


In spontaneous symmetry breaking, the underlying physics of the system is symmetric, however there is a set of stable lower symmetry outputs which are equally likely.

\begin{definition}[Spontaneous symmetry breaking (SSB) function]
    Let $G$ be a group with actions defined on spaces $X,Y$. Let $F:X\to\mathcal{P}(Y)$ be $G$-equivariant. We say $F$ spontaneous symmetry breaks at $x$ if there is some $y\in F(x)$ such that $\Stab_G(x)>\Stab_G(y)$.
    \label{def:spont_break}
\end{definition}

The key things here are that the set valued function $F$ is equivariant even though individual observed samples $(x,y)$ for $y\in F(x)$ may be symmetry breaking. Hence, our problem becomes the following.

\textbf{Problem:} How does one create an equivariant architecture which can output a set of possibly lower symmetry outputs?

\subsection{Examples of explicit and spontaneous symmetry breaking}

\subsubsection{Double well potential}

The double well potential is a classic example of symmetry breaking in physics. Consider a 1-dimensional energy potential defined as $U(x)=x^4-2x^2+1$. Note that this system has reflection symmetry about $x=0$ since substituting $x\to-x$ does not change the potential
\[U(-x)=(-x)^4-2(-x)^2+1=x^4-2x^2+1=U(x).\]

\begin{figure}[!h]
    \centering
    \includegraphics[width=0.5\linewidth]{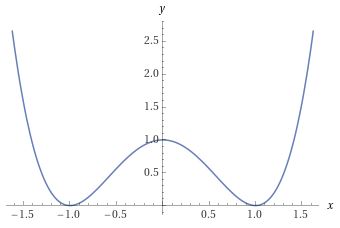}
    \caption{A classic double well potential of $x^4-2x^2+1$. Clearly this potential has reflection symmetry. The two minima are $x=1$ and $x=-1$.}
    \label{fig:double_well}
\end{figure}

In physics we often care about finding the ground state solutions which corresponds to the value of $x$ minimizing any given potential. In the potential above, we see that
\[U(x)=x^4-2x^2+1=(x^2-1)^2=(x-1)^2(x+1)^2\]
so there two global minima at $x=\pm1$. However, individually these minima break reflection symmetry since $1\neq -1$. Hence if our sample consists of input potential $U(x)$ (perhaps encoded using polynomial coefficients) and output $x=1$, then it is a symmetry breaking sample with respect to reflection. Similarly the input-output pair of $U(x), x=-1$ would also be a symmetry breaking sample.

Note that by Lemma~\ref{Lemma:equi_stab_simple}, a network equivariant to reflection across $x=0$ would be unable to directly output the minima. Such a network realizes that $1$ and $-1$ are symmetric and would return the average of $0$.

Perhaps in some cases, we decide to only consider positive solutions. In this case we explicitly break the reflection symmetry and methods such as finding order parameters and relaxed equivariance learn to break such symmetry \cite{huang2023approximately,van2022relaxing,smidt2021finding,wang2022approximately,wang2023relaxed}. Since these learned functions are not equivariant, they are explicitly symmetry breaking by Definition~\ref{def:explicit_break}.

However, in many other cases both solutions are valid. In this case, the set $\{1,-1\}$ of minima is actually invariant under reflection but the individual solutions we want are not. We would satisfy Definition~\ref{def:spont_break} since for a set-valued function returning both minima we remain equivariant, while picking either of the individual minima gives a symmetry breaking sample.

\subsubsection{Ferromagnetism}

A good physical example of symmetry breaking is ferromagnetism. Individual atoms in such materials have a magnetic dipole moment and due to strong interaction between neighboring atoms, the moments tend to align at cool enough temperatures \cite{aharoni2000introduction}. Hence, if we heat and cool a ferromagnetic material, we obtain regions where the dipole moments of individual atoms align. These regions are known as magnetic domains. Because the dipoles are aligned, there is a net magnetic moment in the domain which breaks rotational symmetry.

In the presence of a strong external magnetic field $\mathbf{B}$, the moments of the magnetic domains tend to align with the external field. This is an example of explicit symmetry breaking since the external field explicitly breaks rotational symmetry. From the perspective of Definition~\ref{def:explicit_break} we have a non-equivariant function since we prefer the direction which aligns with the external field.

However, if there is no external magnetic field, the direction of the magnetic moment in each domain is uniformly random. This is an example of spontaneous symmetry breaking, the governing laws are symmetric yet we observe asymmetry in the individual domains. From the perspective of Definition~\ref{def:spont_break} we have a set of vectors in all orientations as valid moments and we do not need to break equivariance. However choosing a particular moment would give a symmetry breaking sample.

These two cases are depicted in Figure~\ref{fig:explicit_spont_ferro}.

\begin{figure}[!h]
    \centering
    \subfloat[Explicit symmetry breaking]{\includegraphics[width=0.35\textwidth]{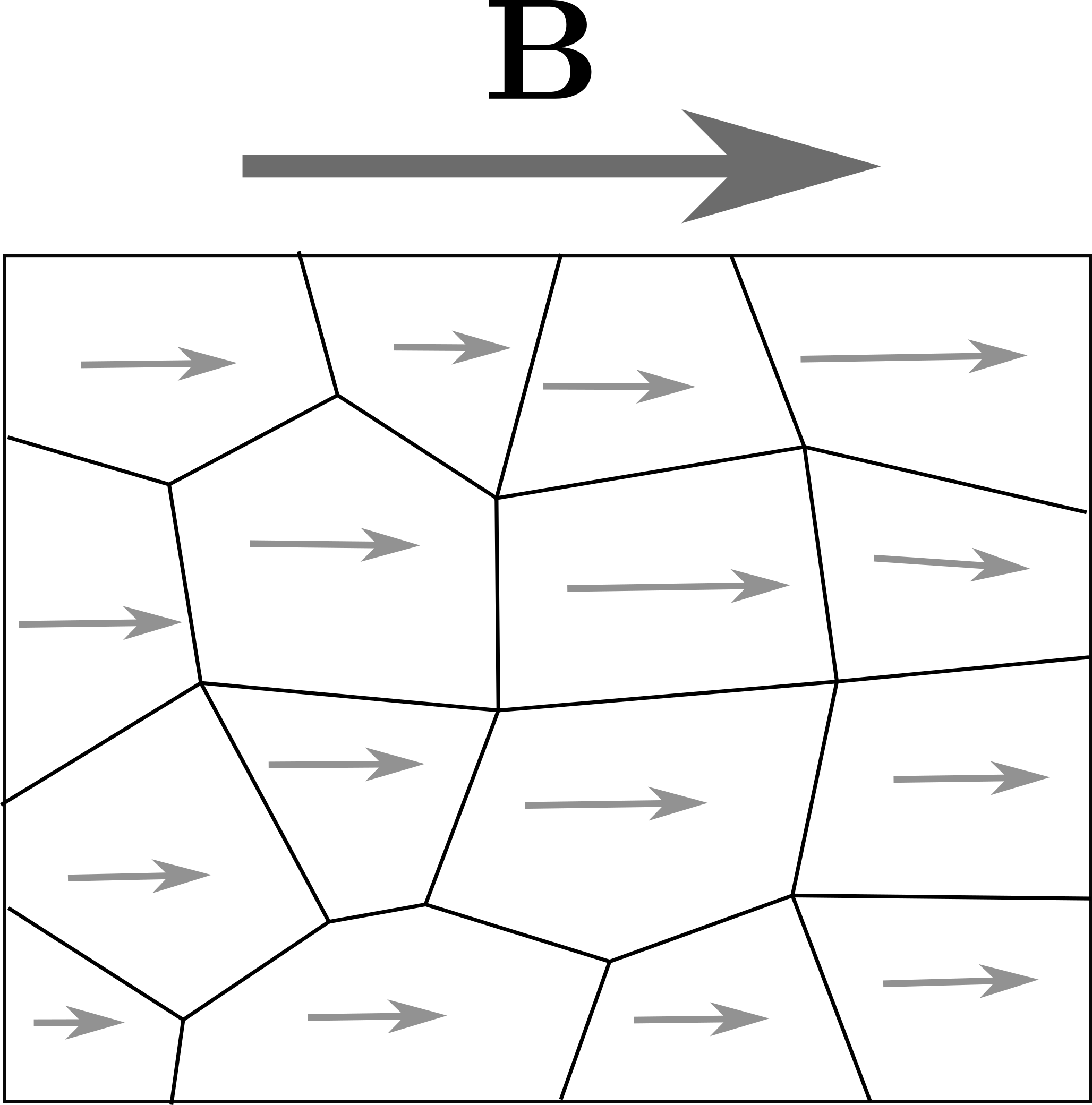}}
        \qquad\qquad\qquad\qquad
    \subfloat[Spontaneous symmetry breaking]{\includegraphics[width=0.35\textwidth]{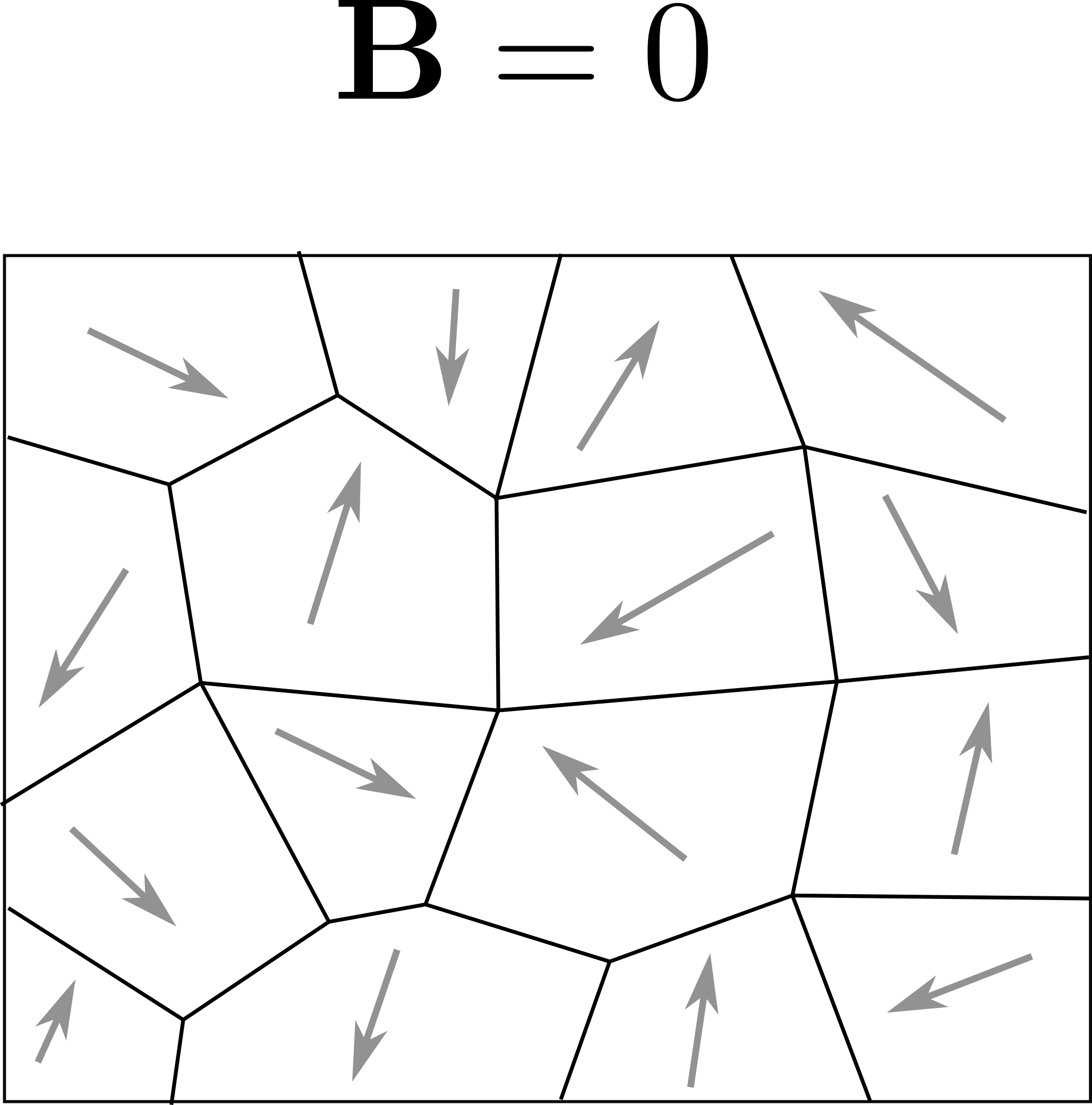}}

    \caption{(a) Example of explicit symmetry breaking. Magnetic moment in domains align with a strong external magnetic field $\mathbf{B}$. The external field explicitly breaks symmetry of the system. (b) Example of spontaneous symmetry breaking. Presence of a moment in each domain breaks rotational symmetry. However, there is no magnetic field so governing laws of the system are symmetric. Consequently the observed moments are uniformly random in orientation.}

    \label{fig:explicit_spont_ferro}
\end{figure}

\section{Fully broken symmetry}
\label{sec:full_break}
First, we consider the case where we break all symmetry of our input. Here, our desired outputs $y$ share no symmetry with $x$. In other words $\Stab_S(y)=\{e\}$. This will lay the foundation for analyzing the general case of partially broken symmetry. Let the symmetry group of our data $x$ be $S$.

\subsection{Symmetry breaking set (SBS)}
When there is symmetry breaking there are multiple equally valid symmetrically related outputs. The purpose of a symmetry breaking object is to allow an equivariant network to pick one of them. In principle, we want all symmetrically related outputs to be equally likely so it makes sense to think of a set $B$ of symmetry breaking objects we sample from.

For any $s\in S$ and $b\in B$, since $sb$ is symmetrically related to $b$ it is natural to also include it in $B$. Hence, acting with $s$ on the elements of $B$ should leave the set unchanged. Further, for any $b\in B$, the stabilizer $\Stab_S(b)$ must be trivial since we want to break all symmetries of our input. This is exactly the definition of a free group action of $S$ on $B$. Hence, we define a symmetry breaking set as follows.

\begin{definition}[Symmetry breaking set]
    Let $S$ be a symmetry group. Let $B$ be a set of elements which $S$ acts on. Then $B$ is a symmetry breaking set (SBS) if the action of $S$ on $B$ is a free action.
\end{definition}

\subsection{Equivariant SBS}
\label{sec:equivariant_SBS}
However, the above definition of an SBS is insufficient when considering equivariance. Here, we show that we need the stronger constraint of closure under the normalizer.

To illustrate the problem, consider a network which is $SO(3)$ equivariant and a triangular prism aligned so that the triangular faces are parallel to the $xy$ plane. Suppose our task was just to pick a point of the prism. A naive way to break the symmetry is to have an ordered pair of unit vectors. The first vector is in the $xy$ plane and points towards one of the triangle vertices. The second vector points up or down in the $z$ direction, corresponding to the upper or lower triangle.

\begin{figure}[!h]
    \centering
    \subfloat[]{\includegraphics[width=0.40\textwidth]{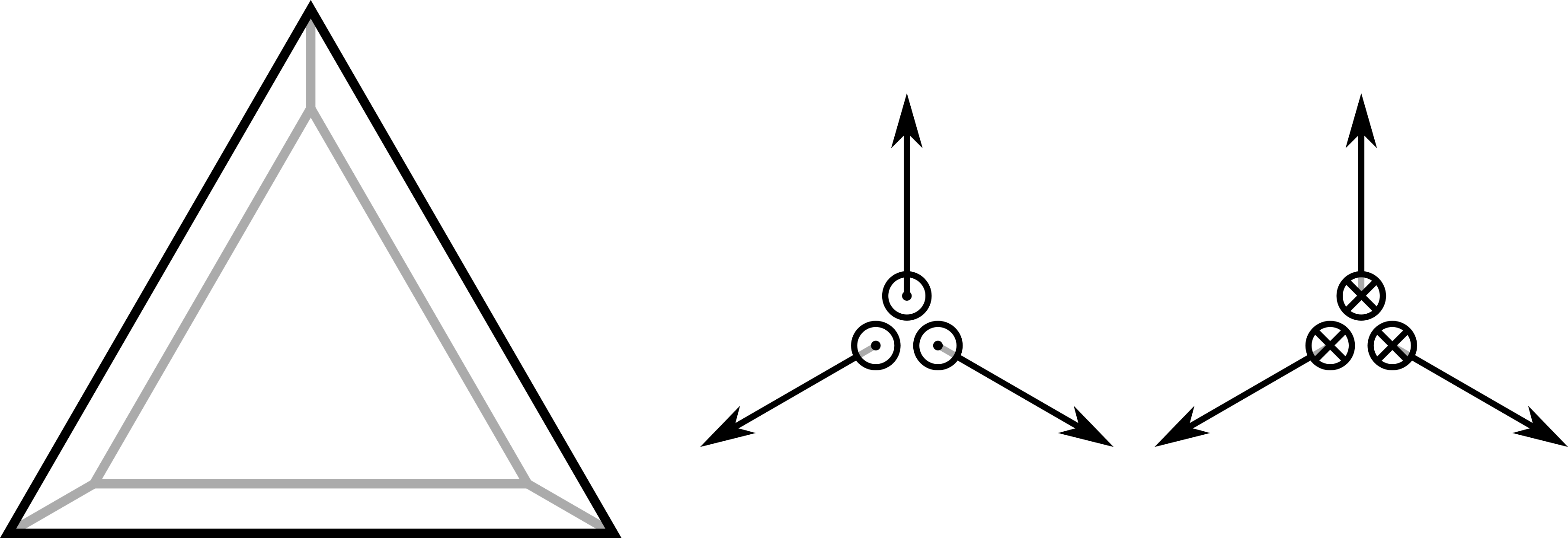}}
        \qquad\qquad
    \subfloat[]{\includegraphics[width=0.40\textwidth]{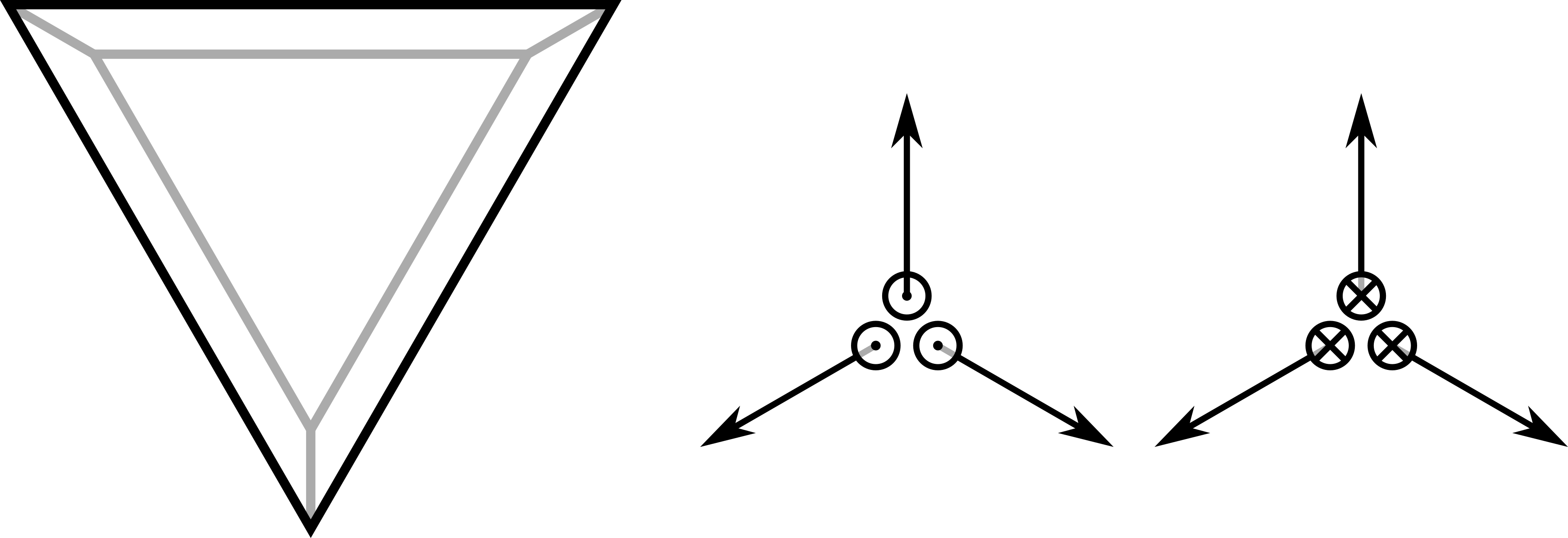}}

    \caption{(a) Naive way to break symmetry in a triangular prism where one vector points to a vertex of a triangle and a second vector points to the lower or outer triangle. (b) A rotated version of the triangular prism in. Note that the same symmetry breaking objects now point to edges of the triangle rather than vertices. However, both prisms have the exact same symmetry elements.}

    \label{fig:naive_symm_break_triangular_prism}
\end{figure}

However, consider the same prism but rotated $180^\circ$ around $z$. We can check that the symmetry groups are exactly the same so we want the same SBS. But the symmetry breaking objects are related differently. In the second prism, the first vector points to an edge rather than a vertex. For equivariance, our symmetry breaking objects should be related to both prisms in the same way. So our choice of SBS was not equivariant.

Here, one may simply choose a canonical orientation and decide that we will rotate the original SBS by $180^\circ$ in the latter case. However, our input may be arbitrarily complicated, and it may be hard to decide on a canonicalization. Further, canonicalization may introduce discontinuities. Hence, we would like to construct SBSs to be only dependent on the symmetry of our data, not how our data is represented. To understand exactly what additional condition is necessary, we need to carefully investigate how the symmetry breaking scheme works.

\begin{figure}[h!]
    \centering
    \includegraphics[width=0.41\textwidth]{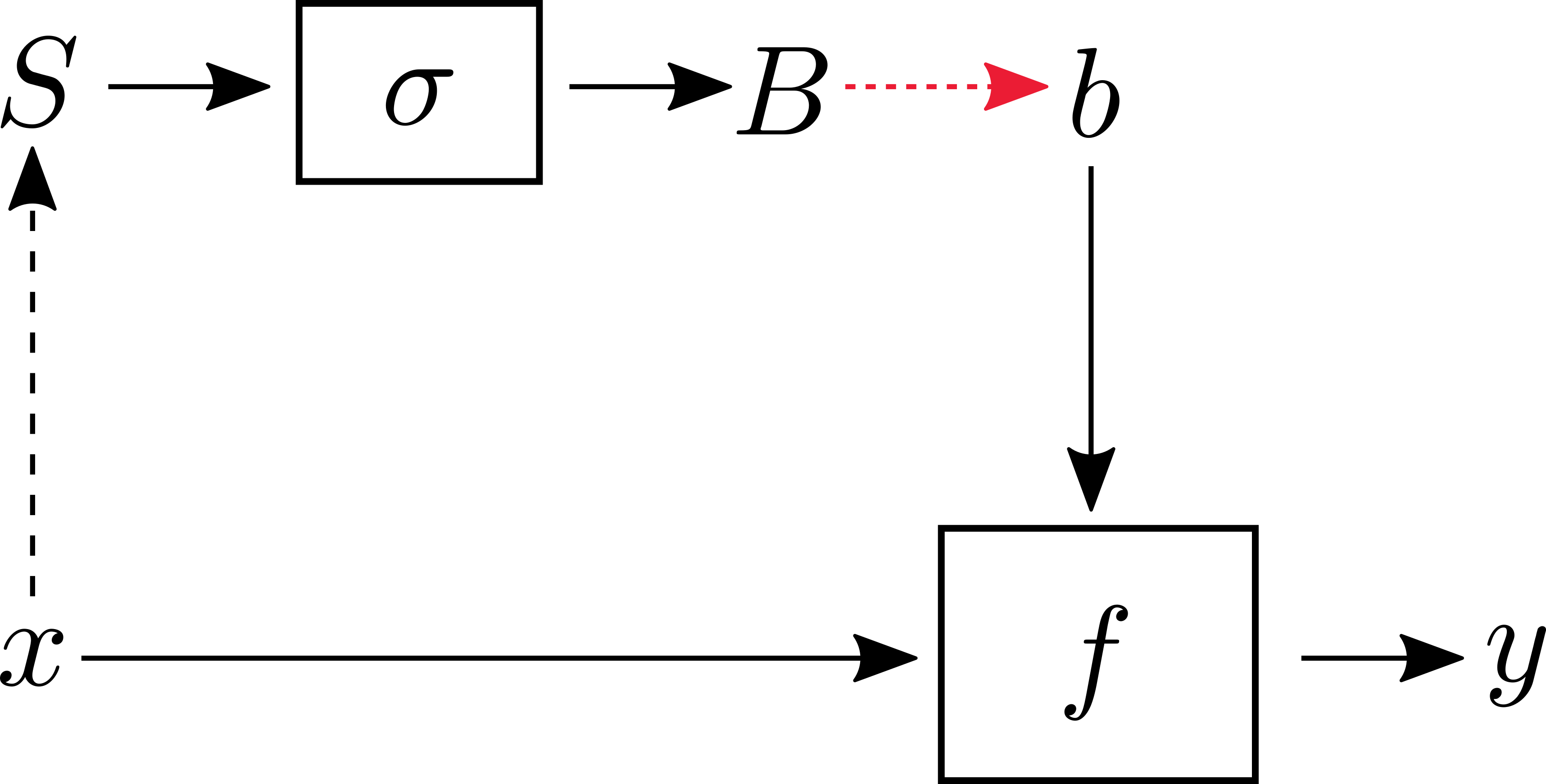}
    \caption{Diagram of how we might structure our symmetry breaking scheme. From our data $x$, we may obtain its symmetry $S$. This $S$ is then fed into a function $\sigma$ which gives us the set of symmetry breaking objects needed. We sample a $b$ from this set breaking the symmetry of our input and feed this $b$ along with the input $x$ into our equivariant function $f$. Finally we obtain an output $y$ which has lower symmetry than the input $x$.}
    \label{fig:full_symm_break}
\end{figure}

Let $f$ be our $G$-equivariant function and $x$ be our input data. Suppose we know the symmetry $S$ of our input. Let $\mathbf{B}$ be some set with a group action of $G$ defined on it. We would like to obtain our set of symmetry breaking objects based on just information about the input symmetry. So suppose we have a function $\sigma:\mathrm{Sub}(G)\to\mathcal{P}(\mathbf{B})$ that does so. This function takes in a subgroup symmetry and gives a SBS composed of elements from $\mathbf{B}$. Then the symmetry breaking step happens when we take a random sample $b$ from the SBS. This symmetry breaking object is then fed into our equivariant function, allowing it to break symmetry. A diagram of this process is shown in Figure~\ref{fig:full_symm_break}.

Certainly, since we break the symmetry of our input data, we break equivariance. However, imagine we give our function all possible symmetry breaking objects and collect at the end the set $Y$ of all outputs our model gives. This process shown in Figure~\ref{fig:full_symm_break_all} would then not need to break any symmetry. This is because all outputs as a set has the same symmetry as the input. Hence we can impose equivariance on our process.

\begin{figure}[h!]
    \centering
    \includegraphics[width=0.34\textwidth]{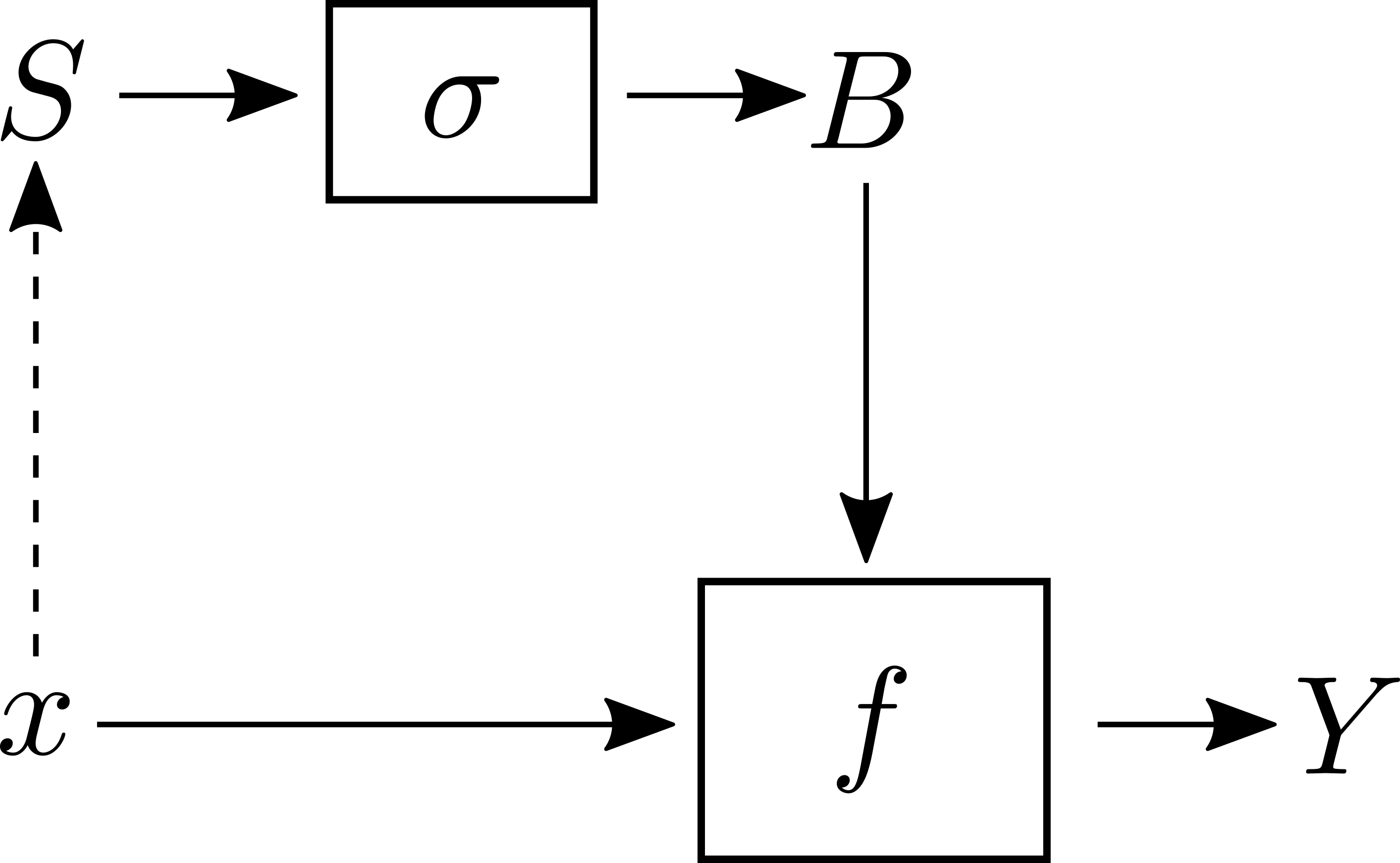}
    \caption{Diagram of how we break symmetry, but now we keep all possible outputs.}
    \label{fig:full_symm_break_all}
\end{figure}

It is well known that the composition of equivariant functions remains equivariant. Hence, we just need to impose equivariance on $\sigma$. In order to do so, we must understand how the input and output transform. Suppose we act on our data with some group element $g\in G$. Then it becomes $gx$. Since $S$ is the symmetry of our original data, we find $gx=gsx=(gsg^{-1})(gx)$ for any $s\in S$. So the symmetry of the transformed data is $gSg^{-1}$. Hence, the input of $\sigma$ transforms as conjugation. Next, recall the output of $\sigma$ is some subset $B$ of elements of $\mathbf{B}$. Since there is a group action for $G$ defined on $\mathbf{B}$, we can define an action on $B$ by just acting on its elements and forming a new set. Figure~\ref{fig:full_symm_break_acted_all} shows how our procedure would change if it were equivariant, and we act on our input by some group element $g$.

\begin{figure}[h!]
    \centering
    \includegraphics[width=0.43\textwidth]{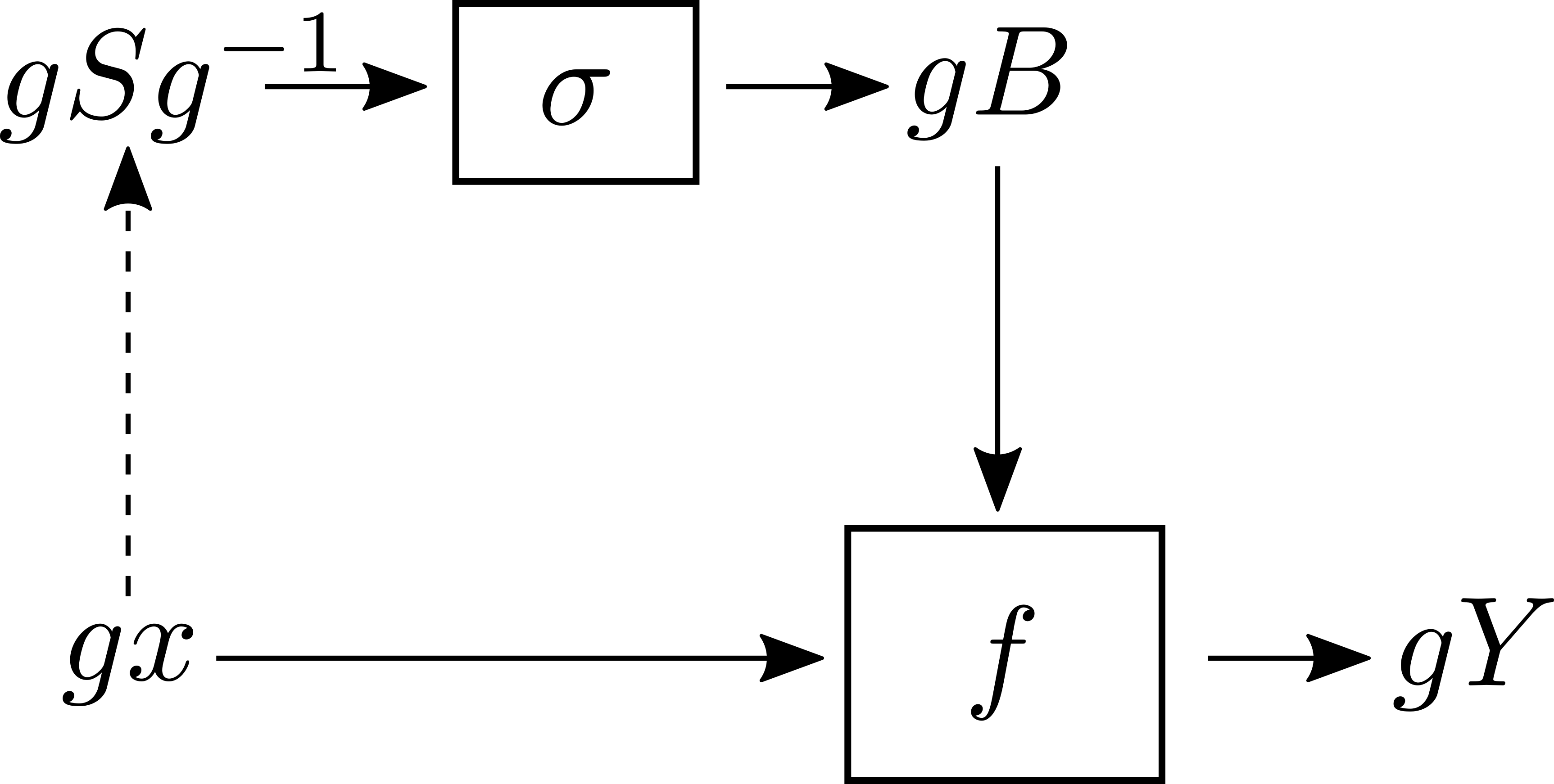}
    \caption{Diagram of what happens when we act on the input with some group element $g$.}
    \label{fig:full_symm_break_acted_all}
\end{figure}

We can now clearly see the issue. The input into $\sigma$ transforms under conjugation, and the stabilizer of a subgroup under conjugation is precisely the definition of the normalizer $N_G(S)$. However, in many cases $N_G(S)$ is a supergroup of $S$. Therefore by Lemma~\ref{Lemma:equi_stab}, our SBS not only needs to be invariant under $S$, but also be invariant under $N_G(S)$. See Appendix~\ref{app:proof_normalizer} for a more formal justification.

Hence, we will call a SBS equivariant if it can be the output of an equivariant $\sigma$. The intuition above tells us this amounts to closure under the normalizer $N_G(S)$.

\begin{definition}[Equivariant symmetry breaking sets]
    \label{ESBS}
    Let $S$ be a subgroup symmetry of a group $G$ and $\mathbf{B}$ be a set with an action of $G$ defined on it. Let $B\subset\mathbf{B}$ be a SBS. Then $B$ is $G$-equivariant if $\forall g\in N_G(S)$ we have $B=gB$.
\end{definition}

\subsection{Ideal case and complement of normal subgroups}
\label{sec:ideal_full_SBS}

Now that we know how to equivariantly break a symmetry, we would like to understand how to do so efficiently. Intuitively, we expect a smaller SBS to be better. If we have a larger SBS, multiple symmetry breaking objects map to the same output so the network needs to learn that these are the same. Reducing the SBS would decrease the equivalences our network needs to learn. In the ideal case, exactly one symmetry breaking parameter corresponds to each output. Since our outputs are related by symmetry transformations (transitive) under $S$, this corresponds to the equivariant SBS being transitive under $S$. It turns out, we can equate constructing ideal equivariant SBSs to the constructing complements of normal subgroups. A slightly weaker version of this statement can be found in Theorem 3.1.4 of \cite{kurzweil2004theory}.

\begin{theorem}
    Let $G$ be a group and $S$ a subgroup. Let $B$ be a $G$-equivariant SBS for $S$. Then it is possible to choose an ideal $B$ if and only if $S$ has a complement in $N_G(S)$. 
    
    If $b$ is an element where $\Stab_{N_G(S)}(b)$ is a complement of $S$ in $N_G(S)$, then $\Orb_S(b)$ is an ideal $G$-equivariant SBS.
    \label{Theorem:complement}
\end{theorem}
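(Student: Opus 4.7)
The plan is to prove the two directions by establishing a direct correspondence between ideal $G$-equivariant SBSs and complements of $S$ inside $N_G(S)$, leveraging throughout that $S$ is normal in $N_G(S)$ by the very definition of the normalizer. The second sentence of the theorem encodes the constructive ``if'' direction; the reverse direction amounts to showing that the $N_G(S)$-stabilizer of any chosen point in an ideal SBS is forced to be a complement of $S$.

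For the construction, suppose $H := \Stab_{N_G(S)}(b)$ satisfies $N_G(S) = SH$ with $S \cap H = \{e\}$, and set $B := \Orb_S(b)$. I would verify the three defining properties in turn. Freeness of the $S$-action on $B$ follows because $\Stab_S(b) = \Stab_{N_G(S)}(b) \cap S = H \cap S = \{e\}$, and the conjugation identity $\Stab_S(sb) = s\,\Stab_S(b)\,s^{-1}$ spreads triviality of the stabilizer from $b$ to the whole orbit. $S$-transitivity is immediate from $B = \Orb_S(b)$. Closure under $N_G(S)$ is the one nontrivial check: given $g \in N_G(S)$, factor $g = s_0 h$ with $s_0 \in S$ and $h \in H$; then for any $sb \in B$ one computes $g(sb) = s_0\,h\,s\,b = s_0(hsh^{-1})(hb) = s_0(hsh^{-1})\,b$, which lies in $\Orb_S(b)$ because $hsh^{-1} \in S$ by the normality of $S$ in $N_G(S)$.

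For the converse, let $B$ be any ideal $G$-equivariant SBS and pick $b \in B$. Set $H := \Stab_{N_G(S)}(b)$. Then $H \cap S = \Stab_S(b) = \{e\}$ by the freeness clause of the SBS definition. To see $SH = N_G(S)$, take any $g \in N_G(S)$: equivariance of $B$ under $N_G(S)$ gives $gb \in B$, and $S$-transitivity of $B$ then yields $gb = sb$ for some $s \in S$, forcing $s^{-1}g \in H$ and hence $g = s(s^{-1}g) \in SH$. Therefore $H$ is a complement of $S$ in $N_G(S)$, which is exactly what is needed to invoke the construction in the previous paragraph and conclude the ``only if'' direction of the first assertion.

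The main conceptual subtlety, and what I expect to be the trickiest bookkeeping, is juggling the two overlapping actions in play---the $S$-action that witnesses freeness and transitivity, and the enveloping $N_G(S)$-action that witnesses equivariance---and repeatedly using $S \trianglelefteq N_G(S)$ to pass between them. Once that correspondence is internalized, each of the three verifications reduces to a short group-theoretic manipulation, and the only place where any care is really required is in the closure step $g(sb) = s_0(hsh^{-1})b$, where the semidirect-product-like factorization $N_G(S) = SH$ does all the work.
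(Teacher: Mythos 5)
Your proof is correct and follows essentially the same route as the paper's: the converse direction (ideal $\Rightarrow$ complement) is argued identically via transitivity and freeness at a chosen point, and your forward construction $B=\Orb_S(b)$ is the orbit-stabilizer translation of the paper's coset construction $B=N_G(S)/H$ (which the paper uses precisely so that an element with stabilizer $H$ is guaranteed to exist, namely the coset $H$ itself). Your closure computation $g(sb)=s_0(hsh^{-1})b$ is a valid, slightly more explicit version of the paper's verification that $N_G(S)$ preserves the set.
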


\begin{remark}
    It turns out the complement if it exists is isomorphic to $N_G(S)/S$. We can intuitively think of $N_G(S)/S$ as giving all possible orientations of our data such that its symmetry remains unchanged.
\end{remark}

The proof of this theorem is in Appendix~\ref{app:proof_complement}. Finding complements of normal subgroups is a well studied group theory problem \cite{kurzweil2004theory}. For the point groups, which are the finite subgroups of $O(3)$, we have tabulated the complements if they exist in Appendix~\ref{app:full_classification}.

 The intuition for this theorem comes from the following observation. Essentially equivariant SBSs consist of orbits of $N_G(S)$. By the orbit-stabilizer theorem, we can reduce the size of an orbit by making an element $b$ generating the orbit more symmetric. However, $b$ must still fully break the symmetry of $S$. The complement, if it exists, is essentially the maximal symmetry $b$ can have while still breaking the symmetry of $S$.

\subsection{Nonideal equivariant SBSs}
\label{sec:nonideal_full_SBS}

In the case where we cannot achieve an ideal equivariant SBS, we would still like to characterize how efficient it is. To do this, we define what we call the degeneracy of an equivariant SBS. In general, each orbit under $S$ gives us one SBS which can be matched one to one to our outputs.

\begin{definition}[Degeneracy]
    Let $B$ be a $G$-equivariant SBS for $S$. We define the degeneracy to be
    \[\Deg_S(B)=|B/S|.\]
\end{definition}

Note that an ideal equivariant SBS $B_{ideal}$ (if it exists) has exactly $1$ orbit of $S$, so $\Deg_S(B_{ideal})=1$. We would also like to understand how small we can make the degeneracy if we cannot make it $1$. It turns out Theorem~\ref{Theorem:complement} allows us to convert this to a group theory problem. 
\begin{corollary}
\label{cor:SBS_bound}
    Let $G$ be a group and $S$ a subgroup. Let $M$ be such that $S\leq M\leq N_G(S)$. Let $B$ be a $G$-equivariant SBS for $S$ which is transitive under $N_G(S)$. Then it is possible to choose $B$ such that every $S$-orbit is also a $M$-orbit if and only if $S$ has a complement in $M$. In particular,
    \[\Deg_S(B)\leq|N_G(S)/M|.\]
\end{corollary}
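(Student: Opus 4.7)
The strategy is to mimic the proof of Theorem~\ref{Theorem:complement}, but with $M$ playing the role that $N_G(S)$ played there, and then read off the bound from the construction.

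For the forward direction ($\Rightarrow$), I would pick any $b \in B$ and extract a complement directly from its $M$-stabilizer. The assumed equality $Sb = Mb$ together with freeness of the $S$-action gives $|Sb| = |S|$, so orbit--stabilizer yields $|\Stab_M(b)| = |M|/|S|$. The intersection $\Stab_M(b) \cap S \subseteq \Stab_S(b) = \{e\}$ is trivial, and since $S \trianglelefteq M$ (automatic from $M \leq N_G(S)$), the product $\Stab_M(b) \cdot S$ is a subgroup of $M$ of order $|\Stab_M(b)| \cdot |S| = |M|$, hence equals $M$. Thus $\Stab_M(b)$ is a complement of $S$ in $M$.

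For the backward direction ($\Leftarrow$), given a complement $H$ of $S$ in $M$, I would build $B$ just as in the proof of Theorem~\ref{Theorem:complement}: choose any ambient $G$-set $\mathbf{B}$ with a point $b_0$ having $\Stab_G(b_0) = H$ (for concreteness, $\mathbf{B} = G/H$ and $b_0 = eH$), and set $B := N_G(S) \cdot b_0$. This is a single $N_G(S)$-orbit by construction, so closed under the normalizer. The SBS property follows from a short conjugation calculation: for $b' = g b_0 \in B$, any $s \in S \cap gHg^{-1}$ satisfies $g^{-1} s g \in H \cap S = \{e\}$ using $g \in N_G(S)$, so $s = e$. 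At $b_0$ itself, $\Stab_M(b_0) = M \cap H = H$, so $|Mb_0| = |M|/|H| = |S| = |Sb_0|$ and the two orbits coincide.

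The step I expect to require the most care is propagating this orbit equality from $b_0$ to an arbitrary $b' = g b_0$: one needs $\Stab_M(g b_0) = M \cap gHg^{-1}$ to again be a complement of $S$ in $M$, which reduces to showing $|M \cap gHg^{-1}| = |H|$. This is the technical heart of the argument and leans crucially on $g \in N_G(S)$ together with the product decomposition $M = HS$; a similar conjugation-plus-order argument to the one used for the SBS property should do the job. Once that is in hand, the bound is a one-line count: since $|H||S| = |M|$ and $|B| = |N_G(S)|/|H|$,
\[
\Deg_S(B) \;=\; \frac{|B|}{|S|} \;=\; \frac{|N_G(S)|}{|H|\,|S|} \;=\; \frac{|N_G(S)|}{|M|} \;=\; |N_G(S)/M|,
\]
meeting the claimed upper bound with equality and completing the proof.
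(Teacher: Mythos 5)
Your overall strategy --- rerun the proof of Theorem~\ref{Theorem:complement} with $M$ in place of $N_G(S)$, then count --- is exactly the paper's: it restricts $B$ to a single $M$-orbit $B'=Mb=Sb$, observes that $B'$ is an ideal $M$-equivariant SBS for $S$ (since $N_M(S)=M$), invokes Theorem~\ref{Theorem:complement} in both directions, and lifts back to $G$-equivariance via $B=N_G(S)B'$. However, two points in your write-up are genuine problems rather than deferred routine checks. First, every step of your argument is a cardinality computation ($|\Stab_M(b)|=|M|/|S|$, ``a subgroup of order $|M|$ hence equals $M$'', $|M\cap gHg^{-1}|=|H|$). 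The corollary is stated for arbitrary groups and the paper applies it to infinite ones (e.g.\ $G=SO(2)$ in Section~\ref{sec:related_works}), where none of these order arguments survive. The fix is the coset argument from the paper's proof of Theorem~\ref{Theorem:complement}: from $Mb=Sb$, every $m\in M$ satisfies $mb=sb$ for some $s\in S$, so $s^{-1}m\in\Stab_M(b)$ and $M=S\cdot\Stab_M(b)$ directly, with no counting.

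Second, and more seriously, the step you flag as the ``technical heart'' --- that $\Stab_M(gb_0)=M\cap gHg^{-1}$ is again a complement of $S$ in $M$ for every $g\in N_G(S)$ --- is false in general, so no ``similar conjugation-plus-order argument'' will close it. By Dedekind's modular law (using $S\le M$) together with $gSg^{-1}=S$, one gets $S\,(M\cap gHg^{-1})=M\cap S(gHg^{-1})=M\cap gMg^{-1}$, which equals $M$ only when $g$ normalizes $M$; and $N_G(S)$ need not normalize an intermediate subgroup $M$. For $g\notin N_G(M)$ the $S$-orbit of $gb_0$ is then strictly contained in its $M$-orbit, so the constructed $B=N_G(S)\cdot b_0$ does not in general satisfy ``every $S$-orbit is an $M$-orbit''; establishing the backward direction requires either an extra hypothesis or a different choice of $B$. (To be fair, the paper's one-line lift $B=N_G(S)B'$ is silent on this same point.) Note that your final degeneracy count is nevertheless correct for this $B$ in the finite case --- $\Deg_S(B)=|B|/|S|=|N_G(S)|/(|H||S|)=|N_G(S)/M|$ uses only freeness of the $S$-action --- but as written your proof of the orbit-coincidence claim has a hole exactly where you predicted it would.
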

See Appendix~\ref{app:proof_SBS_bound} for a proof. In the ideal case we can make $M$ to be $N_G(S)$ so the above formula gives an degeneracy of $1$ as expected.

\section{Partially broken symmetry}
\label{sec:partial_break}
We can now use our framework for full symmetry breaking to understand the case of partial symmetry breaking. In this case, our desired output may share some nontrivial subgroup symmetry $K\leq S$ with our input. Note the case of $K=\mathbf{1}$ corresponds to full symmetry breaking and $K=S$ corresponds to no symmetry breaking.

\subsection{Partial SBS}

Similar to the full symmetry breaking case, we would like to create a set of objects which we can use to break our symmetry. Now we can relax the restriction of free action. Intuitively, we can allow our symmetry breaking objects to share symmetry with our input, as long as it is lower symmetry than our outputs. However, the symmetrically related outputs may be invariant under different subgroups of $S$. Recall that if some element $y$ gets transformed to $sy$, its stabilizer $K$ gets transformed to $sKs^{-1}$. Hence, the stabilizers of the outputs are the subgroups conjugate to $K$ under $S$ denoted as $\Cl_S(K)$. Based on this intuition, we can define partial SBS as follows.

 \begin{definition}[Partial SBSs]
    Let $S$ be a symmetry group and $K$ a subgroup of $S$. Let $P$ be a set of elements with an action by $S$. Then $P$ is a $K$-partial SBS if for any $p\in P$, there exists some $K'\in\Cl_S(K)$ such that $K'\geq\Stab_{S}(p)$.
\end{definition}

Certainly, a full SBS is a partial one as well since the stabilizers of all its elements under $S$ is the trivial group. In general, we can always break more symmetry than needed and still obtain our desired output. However, it is useful to consider the case where we only break the necessary symmetries. Counter-intuitively, we discuss in Section~\ref{sec:optimality_of_exact} that this turns out to not always be optimal.

\begin{definition}[Exact partial SBS]
    Let $S$ be a group and $K$ a subgroup of $S$. Let $P$ be a $K$-partial SBS for $S$. We say $P$ is exact if for all $p\in P$, we have $\Stab_S(p)\in\Cl_S(K)$.
\end{definition}

\subsection{Equivariant partial SBS}
Similar to before, we define equivariant partial SBSs. The idea is the same, but now we need to identify the symmetry of the input and the set of conjugate symmetries for the output. Define
\[\mathrm{SubCl}(G)=\{(S,\Cl_S(K)) : S\in\mathrm{Sub}(G), K\leq S\}.\]
Let $\mathbf{P}$ be a set with a group action of $G$ defined on it. As before, the idea is that we have an function $\pi:\mathrm{SubCl}(G)\to \mathcal{P}(\mathbf{P})$ which outputs our partial SBS. The condition of equivariance for our partial SBS is imposing equivariance on $\pi$.

\begin{figure}[h!]
    \centering
    \includegraphics[width=0.44\textwidth]{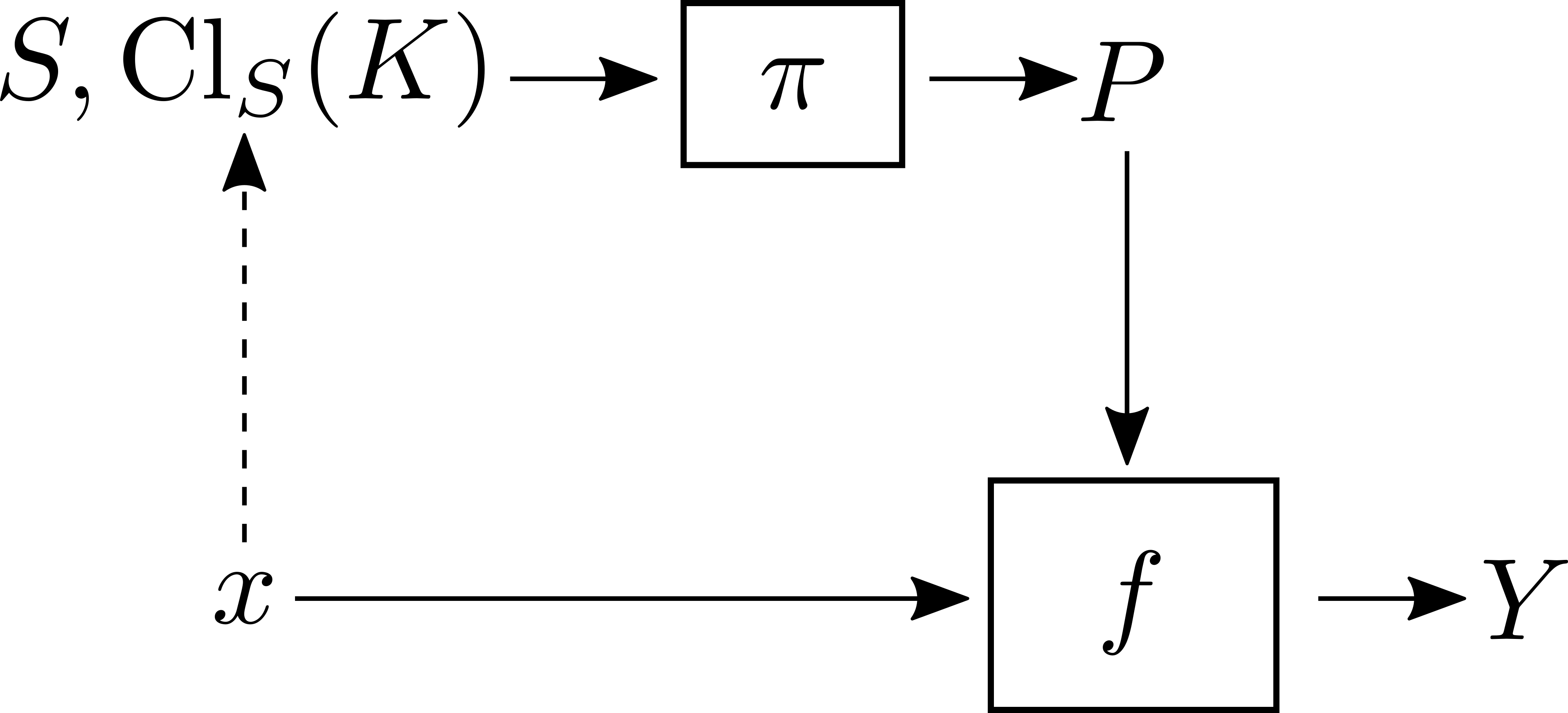}
    \caption{Diagram of how we perform partial symmetry breaking. Here, we need to specify not just the symmetry of our input but also the symmetries of our output. Since any of our outputs are equally valid, it only makes sense to specify the set of conjugate subgroups $\Cl_S(K)$ our outputs are symmetric under.}
    \label{fig:part_symm_break}
\end{figure}

The symmetry breaking scheme is depicted in Figure~\ref{fig:part_symm_break}. As before, we can impose equivariance on this diagram. We need to know how $\Cl_S(K)$ transforms. Note that if our input gets acted by $g$, we expect the outputs to also get acted by $g$. Since $K$ is the stabilizer of one of the outputs, we expect $K$ to transform to $gKg^{-1}$. Hence we have the transformation
\[\Cl_S(K)\to\Cl_{gSg^{-1}}(gKg^{-1}).\]
Similar to before, by Lemma~\ref{Lemma:equi_stab} we need the output of $\pi$ to also be invariant under the stabilizer of the input. Noting that the normalizer is defined as the stabilizer of $S$ under conjugation, we can define a generalized normalizer as the stabilizer of $S,\Cl_S(K)$.

\begin{figure}[h!]
    \centering
    \includegraphics[width=0.63\textwidth]{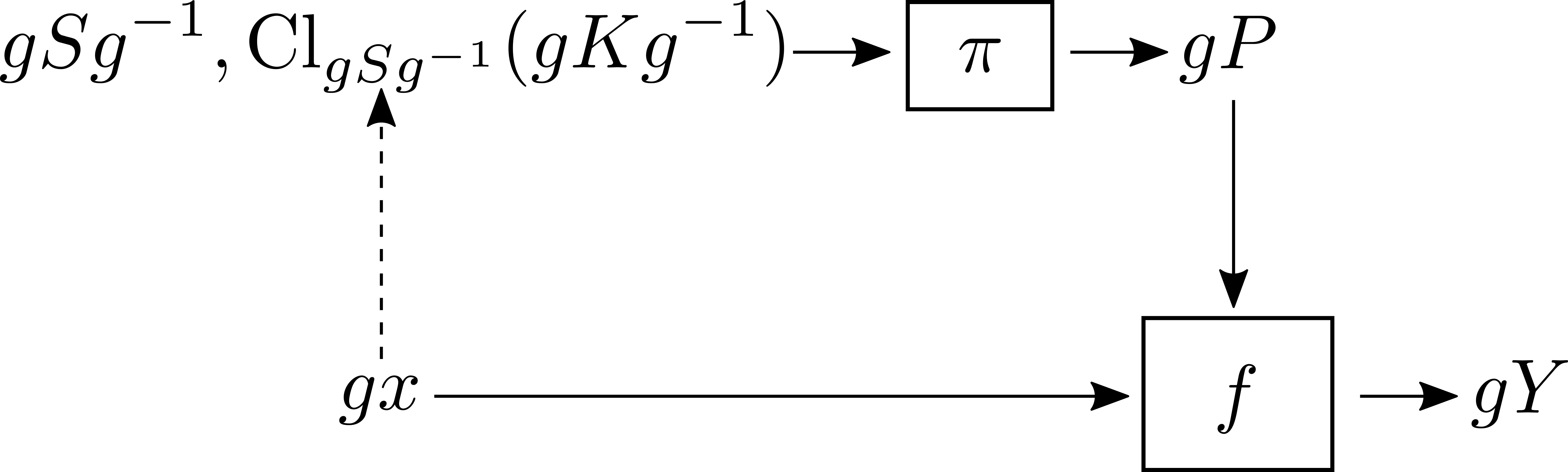}
    \caption{Diagram of how our symmetry scheme changes when we transform our input by some group element $g\in G$.}
    \label{fig:part_symm_break_acted}
\end{figure}

\begin{definition}[Generalized normalizer]
    Define the generalized normalizer $N_G(S,K)$ to be
    \[N_G(S,K)=\{g : gKg^{-1}\in\Cl_S(K), g\in N_G(S)\}.\]
\end{definition}

We can now define equivariant partial SBSs using closure under this generalized normalizer. See Appendix~\ref{app:proof_generalized_normalizer} for a more formal justification.

\begin{definition}[Equivariant partial SBSs]
    Let $S$ be a subgroup symmetry of a group $G$. Let $P$ be a $K$-partial SBS. Then $P$ breaks the symmetry $G$-equivariantly if $\forall g\in N_G(S,K)$ we have $P=gP$.

    \label{def:equi_part}
\end{definition}
Note that closure under $N_G(S,K)$ is a weaker condition than closure under $N_G(S)$. Hence any equivariant full SBS is also an equivariant $K$-partial SBS for any $K$.

\subsection{Ideal equivariant partial SBS}

Similar to the full symmetry breaking case, we ideally would like to have a one to one correspondence between elements in our equivariant SBS and our symmetrically related outputs. For this to happen, we clearly need our SBS to be exact and for our SBS to be transitive under $S$. We can generalize Theorem~\ref{Theorem:complement} to obtain a necessary and sufficient condition to have an ideal equivariant partial SBS.

\begin{theorem}
    Let $G$ be a group and $S$ and $K$ be subgroups $K\leq S\leq G$. Let $P$ be a $G$-equivariant $K$-partial SBS. Then we can choose an ideal $P$ (exact and transitive under $S$) if and only if $N_S(K)/K$ has a complement in $N_{N_G(S,K)}(K)/K$.

    If $p$ is an element such that $\Stab_{N_G(S,K)}(p)/K$ is a complement of $N_S(K)/K$ in $N_{N_G(S,K)}(K)/K$, then $\Orb_S(p)$ is an ideal $G$-equivariant $K$-partial SBS
    \label{Theorem:partial_SBS_complements}
\end{theorem}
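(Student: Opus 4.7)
The plan is to reduce Theorem~\ref{Theorem:partial_SBS_complements} to the fully broken case (Theorem~\ref{Theorem:complement}) by passing to the quotient modulo $K$. The key observation is that an exact partial SBS always contains a representative $p$ with $\Stab_S(p)=K$ \emph{exactly} (not merely a conjugate), and for such a $p$ the stabilizer $H:=\Stab_{N_G(S,K)}(p)$ automatically normalizes $K$. This lets us transcribe the partial-SBS condition ``$S\cdot H=N_G(S,K)$ with $S\cap H=K$'' into a genuine complement condition inside the smaller group $N_{N_G(S,K)}(K)/K$.

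For the forward direction, suppose $P$ is an ideal $G$-equivariant $K$-partial SBS and pick $p\in P$ with $\Stab_S(p)=K$ (always possible after conjugating within the $S$-orbit, thanks to exactness). Transitivity of $P$ under $S$ together with closure under $N_G(S,K)$ (Definition~\ref{def:equi_part}) yields $N_G(S,K)=S\cdot H$, while exactness plus the choice of $p$ yields $S\cap H=K$. I would then verify $H\le N_{N_G(S,K)}(K)$: for $h\in H$ the conjugate $hKh^{-1}$ still fixes $p$ and sits inside $S$ (since $h\in N_G(S)$), so it is contained in $S\cap\Stab_G(p)=K$ and hence equals $K$. A short calculation upgrades these two conditions on $H$ to $N_S(K)\cdot H=N_{N_G(S,K)}(K)$ and $N_S(K)\cap H=K$, which modulo $K$ says that $H/K$ is a complement of $N_S(K)/K$ in $N_{N_G(S,K)}(K)/K$. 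Separately I would confirm that $N_S(K)/K$ is normal in the ambient quotient so that ``complement'' makes sense in the usual way.

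For the converse, given a complement $L/K$ I would take the $G$-set $\mathbf{P}:=G/L$ and let $p$ be the identity coset, so $\Stab_G(p)=L$. The inclusion $L\le N_{N_G(S,K)}(K)$ together with the trivial-intersection half of the complement property forces $S\cap L\le N_S(K)\cap L=K$, giving $\Stab_S(p)=K$ and hence exactness. To establish equivariance I would show $N_G(S,K)=S\cdot L$: any $g\in N_G(S,K)$ satisfies $gKg^{-1}=sKs^{-1}$ for some $s\in S$, so $s^{-1}g\in N_{N_G(S,K)}(K)=N_S(K)\cdot L$, which delivers $g\in S\cdot L$. Then $\Orb_S(p)=\Orb_{N_G(S,K)}(p)$, and it is by construction exact and $S$-transitive, hence an ideal $G$-equivariant $K$-partial SBS.

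The main obstacle is the bookkeeping among the four groups $K\le S$, $N_G(S,K)$, $N_{N_G(S,K)}(K)$, and their quotients by $K$. The load-bearing step is the argument that $H\le N_{N_G(S,K)}(K)$ in the forward direction: without it, $H/K$ would not even sit inside the ambient quotient, and the relative-complement condition inside $N_G(S,K)$ could not be rephrased as an honest complement modulo $K$. A secondary subtlety is that the reverse construction needs $S\cdot L$ to exhaust \emph{all} of $N_G(S,K)$ (not just $N_{N_G(S,K)}(K)$), which is exactly where the conjugation trick $s^{-1}g\in N_{N_G(S,K)}(K)$ enters to lift the complement statement from the subgroup $N_{N_G(S,K)}(K)$ up to the larger group $N_G(S,K)$.
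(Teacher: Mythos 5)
Your proposal is correct and follows essentially the same route as the paper: both reduce the statement to a complement condition in the quotient $N_{N_G(S,K)}(K)/K$ by working with a representative $p$ whose stabilizer in $S$ is exactly $K$, and both converse constructions realize the complement as (the preimage of) $\Stab_{N_G(S,K)}(p)$ and take its $S$-orbit. The only difference is presentational: the paper's forward direction packages the computation by building the auxiliary quotient SBS $(N_S(K)/K)u$ and invoking Theorem~\ref{Theorem:complement}, whereas you unpack the same stabilizer argument directly via $N_G(S,K)=S\cdot H$, $S\cap H=K$, and the upgrade to $N_{N_G(S,K)}(K)=N_S(K)\cdot H$.
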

See Appendix~\ref{app:proof_partial_SBS_complements} for a proof.

\subsection{Nonideal equivariant partial SBS}

Similar to the full symmetry breaking case, when we cannot achieve an ideal equivariant partial SBS we want to characterize how efficient our nonideal partial SBS is. Again, the idea is that in the nonideal case, our network needs to map multiple symmetry breaking objects to the same output. We define the degeneracy of $P$ to quantify this multiplicity.
\begin{definition}[Degeneracy]
    Let $G$ be a group, $S$ be a subgroup, and $K$ a subgroup of $S$. Let $P$ be a $G$-equivariant $K$-partial SBS for $S$. Let $T$ be a transversal of $S/K$. Let $P_t$ be such that every $p\in P$ is uniquely written as $p=tp_t$ for some $t\in T$ and $p_t\in P_t$. Then we define
    \[\Deg_{S,K}(P)=|P_t|.\]
\end{definition}
The intuition for this definition is that $P_t$ is the set of objects which together with our input may get mapped to some output $y$ by our equivariant network. In other words, we have $f(x,P_t)=\{y\}$ for equivariant $f$ and all other $P$ get mapped to different symmetrically related outputs. Without loss of generality assume $y$ has $\Stab_S(y)=K$. Then for any symmetrically related output $ty$ (where $t\in T$), we can see from equivariance of $f$ that $f(x,tP_t)=\{ty\}$. It is now clear that the size of $P_t$ counts how many symmetry breaking objects must be mapped to the same output.

Note that in the case $K=\mathbf{1}$, $S/K=S$ so $P_t$ just consists of representatives from $P/S$. So this reduces to the degeneracy defined for full SBS. Also, note that in the ideal case, there is exactly one symmetry breaking object for each output. So degeneracy is $1$ in that case.

We would like to derive bounds on the degeneracy of our equivariant partial SBSs. Similar to the full SBS case, we use Theorem~\ref{Theorem:partial_SBS_complements} to convert this into a group theory question.
\begin{corollary}
    \label{cor:partial_SBS_bound}
    Let $G$ be a group, $S$ a subgroup, and $K$ a subgroup of $S$. Let $K'$ be a subgroup of $K$ and $M$ a subgroup of $N_G(S,K)\cap N_G(S,K')$ which contains $S$. Suppose $P$ is a $G$-equivariant $K$-partial SBS for $S$ which is transitive under $N_G(S,K)$. We can choose $P$ such that $\Stab_{S}(p)\in\Cl_{N_G(S,K)}(K')$ for all $p$ and all $S$-orbits in $P$ are also $M$-orbits if and only if $N_S(K')/K'$ has a complement in $N_M(K')/K'$. Further, such a $P$ has
    \[\Deg_{S,K}(P)\leq|K/K'|\cdot|N_G(S,K)/M|.\]
    \label{corollary:partial_SBS_nonideal}
\end{corollary}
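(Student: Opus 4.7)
The plan is to reduce this corollary to Theorem~\ref{Theorem:partial_SBS_complements} by replacing the ambient group $G$ with $M$, mirroring how Corollary~\ref{cor:SBS_bound} follows from Theorem~\ref{Theorem:complement} in the full-SBS setting. The key enabling observation is that since $M\leq N_G(S,K')$, every element of $M$ lies in $N_G(S)$ and conjugates $K'$ into $\Cl_S(K')$, so $N_M(S,K')=M$. Applying Theorem~\ref{Theorem:partial_SBS_complements} to the triple $(M,S,K')$ then asserts: there exists $p\in\mathbf{P}$ such that $\Orb_S(p)$ is an ideal $M$-equivariant $K'$-partial SBS for $S$ (transitive under $S$ with $\Stab_S(p)\in\Cl_S(K')$) if and only if $N_S(K')/K'$ has a complement in $N_M(K')/K'$. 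This already delivers the biconditional.

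For the sufficiency direction I would choose such a $p$ (so that $\Stab_S(p)=K'$ after an $S$-conjugation, and $\Orb_S(p)=\Orb_M(p)$) and set $P=N_G(S,K)\cdot p$. The verifications split neatly. First, $P$ is $N_G(S,K)$-transitive by construction and $N_G(S,K)$-invariant, hence $G$-equivariant in the sense of Definition~\ref{def:equi_part}. Second, for any $q=np\in P$ the computation $\Stab_S(q)=\{s\in S:n^{-1}sn\in\Stab_S(p)\}=nK'n^{-1}$ (valid because $n\in N_G(S)$ gives $n^{-1}Sn=S$) shows both that $\Stab_S(q)\in\Cl_{N_G(S,K)}(K')$ and that $\Stab_S(q)\leq nKn^{-1}\in\Cl_S(K)$, making $P$ a valid $K$-partial SBS. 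Third, the equality $\Orb_S(p)=\Orb_M(p)$ guaranteed by the complement structure propagates to every $S$-orbit in $P$.

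For the degeneracy bound, once every $S$-orbit coincides with an $M$-orbit, each such orbit has common size $|S|/|K'|$ since all $S$-stabilizers are $N_G(S,K)$-conjugates of $K'$. The number of $M$-orbits inside the $N_G(S,K)$-transitive set $P$ is at most $|N_G(S,K):M|$ by a standard double-coset count: each double coset $MnH$, with $H=\Stab_{N_G(S,K)}(p)$, contains at least $|M|$ elements. Hence $|P|\leq|N_G(S,K):M|\cdot|S|/|K'|$, and combining with the identity $\Deg_{S,K}(P)=|P_t|=|P|/|S/K|$, which follows from the uniqueness built into the definition of $P_t$, I obtain $\Deg_{S,K}(P)\leq|K/K'|\cdot|N_G(S,K)/M|$.

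The main obstacle I expect is verifying that every $S$-orbit inside $P$ really is an $M$-orbit at non-basepoint elements $q=np$ when $n\notin N_G(M)$. Naively this requires $M=S\cdot\Stab_M(q)$, but $\Stab_M(q)$ is determined by a conjugate of $M$ rather than $M$ itself, so the equality $\Orb_S(p)=\Orb_M(p)$ at the basepoint does not automatically transport. Resolving this will likely require either a careful choice of the ambient symmetry breaking space $\mathbf{P}$ so that $\Stab_{N_G(S,K)}(p)$ extends the complement in a way making orbit equality invariant under $N_G(S,K)$-translation, or a functorial argument replacing the direct size comparison with a transport of the orbit equality from $p$ to $q$ via the group action.
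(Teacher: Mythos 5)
Your proposal follows essentially the same route as the paper's proof: both directions reduce to Theorem~\ref{Theorem:partial_SBS_complements} applied with ambient group $M$ via the observation $N_M(S,K')=M$; the converse lifts the resulting ideal $M$-equivariant $K'$-partial SBS to $G$-equivariance by taking the orbit under $N_G(S,K)$; and the degeneracy count is the same arithmetic (the paper organizes it through the transversal $T$ of $S/K$ and the sets $K p$, showing each $S$-orbit contributes $|K/\Stab_S(p)|=|K/K'|$ elements to $P_t$ and that there are at most $|N_G(S,K)/M|$ orbits, rather than through your identity $\Deg_{S,K}(P)=|P|/|S/K|$). The one obstacle you flag --- that $Sp=Mp$ at the basepoint need not automatically transport to a translate $np$ when $n\in N_G(S,K)$ does not normalize $M$ or $\Stab_G(p)$ --- is a genuine subtlety, but the paper's proof does not resolve it either: it simply asserts that the $N_G(S,K)$-orbit of the ideal $M$-equivariant set is the desired $P$, so on this point your write-up is, if anything, more candid than the original.
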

See Appendix~\ref{app:proof_partial_SBS_nonideal} for a proof.

\subsection{Optimality of exact partial symmetry breaking}
\label{sec:optimality_of_exact}
Note that in the previous section, we have been very careful to allow our partial SBS to break more symmetry than needed. Intuitively, we would like to say that it is always optimal to break down exactly to the symmetry of our output. That is, we only need to consider exact partial SBSs. 

Certainly, ignoring any equivariance constraints, given any non-exact $K$-partial SBS, we can construct an exact $K$-partial SBS by picking an element $b$ with $\Stab_S(b)\leq K$ and identifying its orbit under $K$ together as one partial symmetry breaking object $p=Kb$. We construct the orbit of $p$ under action by $S$ as our $K$-partial SBS.

We might expect that some modification of this construction can convert any non-exact equivariant $K$-partial SBS into an exact equivariant $K$-partial symmetry breaking one. Naively, we just take the orbit of the elements in the construction above under $N_G(S,K)$ to obtain $G$-equivariance. However, in Appendix~\ref{app:full_better_exact} we come up with an explicit example where no exact equivariant $K$-partial SBS is smaller than the best equivariant full SBS.

\section{Constructing SBSs}
\label{sec:SBS_construct}
Now that we have fully characterized what equivariant symmetry breaking sets are, we show how to construct them. We focus on $G=O(3)$ in this section though the ideas here are applicable for other groups as well.

\subsection{Expressing subgroups}
\label{sec:subgroup_expression}
First, it is important to have a way of expressing subgroup of $G$, the group we want to be equivariant under. The subgroups of $O(3)$ are well studied and have been completely classified \cite{hahn1983international}. In particular, there are 7 infinite axial families of finite point groups and 7 additional finite ones. There are only 5 infinite subgroups which are closed. However, the names of these subgroups do not specify how they are ``oriented" in $O(3)$. Hence, we propose to represent the subgroups in the following way. We first choose a canonical orientation of the classified point groups. 

\begin{figure}[h!]
    \centering
    \includegraphics[width=0.55\textwidth]{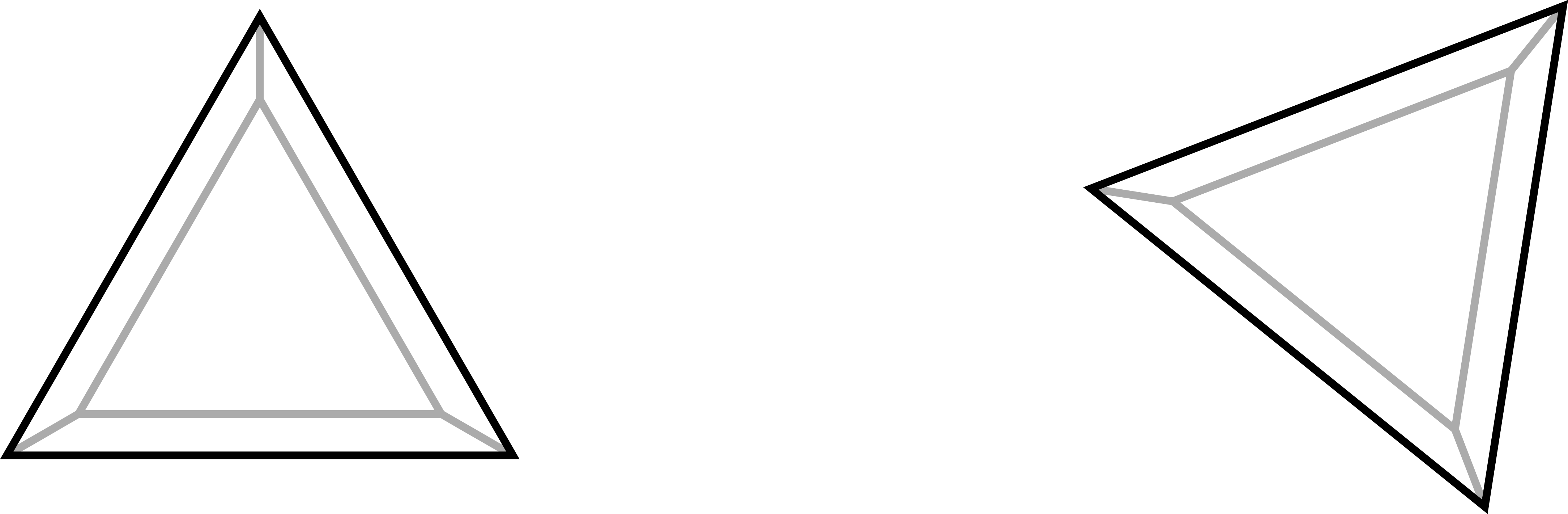}
    \caption{Two identical triangular prisms differing by a rotation. Both have symmetry $D_{6h}$ by name, however the actual symmetry axes differ.}
    \label{fig:same_name_diff_group}
\end{figure}

A standard choice is inspired by the Hermann-Mauguin naming scheme for point groups. For the 7 infinite series of axial groups, we choose to align the high order symmetry axis along the $z$-axis. If in addition there are 2-fold rotations, we choose one of them to be along the $x$-axis. If there are no 2-fold rotations but there are mirror plane parallel to the $z$-axis, we choose the $yz$-plane to be in the group. Of the remaining 7 point groups, 5 are cubic groups. For these we can choose the cube they leave invariant to have sides perpendicular to one of the $x,y,z$ axes. Finally, the remaining 2 point groups are the icosahedral groups with and without inversion. For these we can choose to align a 5-fold axis with the $z$-axis and a 3-fold axis with the $x$-axis.

Next, for any point group $S$ with arbitrary alignment, there is always some $g\in O(3)$ (in fact we can always pick $g\in SO(3)$) such that $g^{-1}Sg$ brings it to the canonical orientations defined above. Hence, we can always express an arbitrary point group $S$ as a pair $g,\groupname(S)$ of a rotation and the name of the group.

To generalize to arbitrary groups, we note that the point groups classify the classes of conjugate subgroups $\Cl_{O(3)}(S)$ of $O(3)$. Hence for general $G$, we need to classify the corresponding classes of conjugate subgroups $\Cl_G(S)$ and choose a canonical representative for each class in order to apply scheme outlined in this section.

\subsection{Representing a set of conjugate subgroups}
\label{sec:represent_conj_subgroups}

In the partial symnmetry breaking case, we also provide a set of conjugate subgroups. Similar to our notation $\Cl_S(K)$, we can specify a set of conjugate subgroups with $(S,K)$, where subgroups $S$ and $K$ are represented in the way described previously.

\subsection{Representing a SBS}
\label{sec:represent_SBS}

The idea is very simple. We start with some object which breaks enough symmetry for our task. To satisfy the equivariance condition of closure under the normalizer or generalized normalized, we can simply take the orbit as our SBS or partial SBS. In principle a SBS can consist of multiple such orbits, but we can always only use one orbit as a SBS and multiple orbits increase the degeneracy. Hence we assume all our SBSs consist of one orbit and we can fully specify a SBS as a pair $(b,N)$ where $b$ is a symmetry breaking object and $N$ is a group over which we take the orbit of $b$. 


In the case of finite group $N$, we can explicitly compute the elements in the orbit. However, if $N$ is infinite then this does not work. In practice, it is usually enough that we can sample lower symmetry outputs. Hence, it suffices to be able to sample the SBS which we can do by sampling an element from $N$.

\subsection{Constructing an equivariant full SBS}
\label{sec:full_SBS_construct}

We would like to construct a full SBS given an input symmetry $S=(g,\groupname(S))$. However, we want to do so in an equivariant way. One way to achieve this is to first consider only any input group in its canonical orientation and construct a SBS $B$ for it. Then simply returning $gB$ would guarantee that our construction is equivariant. Hence, we just need to construct a canonical SBS for each possible point group. Note that this can be viewed as a case of equivariance by canonicalization \cite{kaba2023equivariance}.

Note an equivariant full SBS needs closure under a normalizer. If we work with $O(3)$ equivariance, we simply look up the normalizer $N_{O(3)}(S)$ from Table~\ref{tab:normalizers}. If we are equivariant under some subgroup $G\subset O(3)$ then we note that $N_G(S)=N_{O(3)}(S)\cap G$ which can be used to compute the desired normalizer. All that remains is how to specify a canonical symmetry breaking object for each point group in canonical orientation. If we have such an object, then we obtain the following algorithm for creating equivariant full SBSs. Let $\texttt{Normalizers}$ be a function which takes in a name of a point group and gives the corresponding normalizer classified in Table~\ref{tab:normalizers}.

\begin{algorithm}
    \caption{Equivariant full SBS}
    \label{alg:full_SBS_from_obj}
    \begin{algorithmic}
        \Input
            \Desc{$S$}{Symmetry of input expressed as pair $(g,\groupname(S))$}
            \Desc{$b$}{Canonical symmetry breaking object}
        \EndInput
        \Output
            \Desc{$B$}{Symmetry breaking set expressed as a pair $(b',N)$}
        \EndOutput
        \State $(n,\groupname(N_{O(3)}(S)))\leftarrow\texttt{Normalizers}[\groupname(S)]$
        \State $N\leftarrow(gn,\groupname(N_{O(3)}(S)))$
        \State \Return $(gb,N)$
    \end{algorithmic}
\end{algorithm}

In general, the choice of a canonical symmetry breaking object is flexible. To satisfy the definition, one just needs the corresponding object for $\groupname(S)$ to not share any symmetries with $S=(e,\groupname(S))$. However, as discussed in Section~\ref{sec:ideal_full_SBS}, an ideal SBS should be more efficient than a nonideal one. Hence, if possible we would like to pick such an object so that it generates an ideal SBS. Theorem~\ref{Theorem:complement} tells us exactly the conditions needed to choose an ideal SBS. In particular, we would need the additional condition that $b$ have the symmetry of a complement $H$ while not having the symmetry of $S$. We fully characterized the relevant cases in Appendix~\ref{app:full_classification}.

\subsection{Constructing equivariant partial SBS}
In the partial symmetry breaking case, we want to obtain a partial SBS from the symmetry of the input and the set of conjugate subgroup symmetries of the outputs. Importantly, note that we want closure under $N_G(S,K)$ rather than under $N_G(S)$. To compute $N_G(S,K)$, we use the following fact.
\begin{lemma}
    \label{Lemma:gen_norm_form}
    We have the following formula
    \[N_G(S,K)=S(N_G(S)\cap N_G(K)).\]
\end{lemma}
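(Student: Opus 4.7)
The plan is to prove both inclusions by directly unpacking the definition $N_G(S,K)=\{g\in N_G(S) : gKg^{-1}\in\Cl_S(K)\}$. The key observation is that $gKg^{-1}\in\Cl_S(K)$ means there exists $s\in S$ with $gKg^{-1}=sKs^{-1}$, which is equivalent to $s^{-1}g\in N_G(K)$. This single equivalence essentially drives everything.

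For the inclusion $S(N_G(S)\cap N_G(K))\subseteq N_G(S,K)$, I would take an arbitrary $g=sh$ with $s\in S$ and $h\in N_G(S)\cap N_G(K)$. Since $S\leq N_G(S)$ and $h\in N_G(S)$, the product $g$ lies in $N_G(S)$. Then I compute $gKg^{-1}=s(hKh^{-1})s^{-1}=sKs^{-1}$ using $h\in N_G(K)$, which exhibits $gKg^{-1}$ as an element of $\Cl_S(K)$. Hence $g\in N_G(S,K)$.

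For the reverse inclusion $N_G(S,K)\subseteq S(N_G(S)\cap N_G(K))$, I would take $g\in N_G(S,K)$ and use the definition to obtain $s\in S$ with $gKg^{-1}=sKs^{-1}$. Rearranging gives $(s^{-1}g)K(s^{-1}g)^{-1}=K$, so $s^{-1}g\in N_G(K)$. Moreover $s^{-1}g\in N_G(S)$ because both $s$ and $g$ lie in $N_G(S)$ (noting $S\leq N_G(S)$). Writing $g=s(s^{-1}g)$ then expresses $g$ as a product in $S\cdot(N_G(S)\cap N_G(K))$, completing the proof.

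There is no real obstacle here; the entire argument is a bookkeeping exercise in conjugation, and the only subtlety is remembering that $S\subseteq N_G(S)$ so that the factor $s$ contributes to $N_G(S)$-membership on both sides. One might also remark that $S(N_G(S)\cap N_G(K))$ is indeed a subgroup of $G$ since $S$ is normal in $N_G(S)$ and therefore normal in the supergroup $S(N_G(S)\cap N_G(K))$ of $S$, but this structural remark is not needed for the set equality itself.
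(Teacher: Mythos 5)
Your proof is correct and follows essentially the same route as the paper's: both inclusions are established by unpacking the definition $N_G(S,K)=\{g\in N_G(S): gKg^{-1}=sKs^{-1}\text{ for some }s\in S\}$, rewriting the condition as $s^{-1}g\in N_G(K)$, and noting that $S\leq N_G(S)$ carries the $N_G(S)$-membership through the factorization $g=s(s^{-1}g)$. No gaps.
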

See Appendix~\ref{app:proof_gen_norm_form} for a proof.

Thus we get the following algorithm for computing an equivariant partial SBS.

\begin{algorithm}
    \caption{Equivariant partial SBS from object}
    \label{alg:partial_SBS_from_obj}
    \begin{algorithmic}
        \Input
            \Desc{$S$}{Symmetry of input expressed as pair $(g_S,\groupname(S))$}
            \Desc{$\Cl_S(K)$}{Set of conjugate subgroups expressed as $(S,K)=((g_S,\groupname(S)),(g_K,\groupname(K)))$}
            \Desc{$p$}{Canonical partial symmetry breaking object}
        \EndInput
        \Output
            \Desc{$P$}{Symmetry breaking set expressed as a pair $(p',N)$}
        \EndOutput
        \State $N_1\leftarrow\texttt{Normalizers}[\groupname(S)]$
        \State $(n,\groupname(N_2))\leftarrow\texttt{Normalizers}[\groupname(K)]$
        \State $N_2\leftarrow(g_S^{-1}g_Kn,\groupname(N_2))$
        \State $N\leftarrow N_1\cap N_2$
        \State $N\leftarrow (e,\groupname(S))N$
        \State $N\leftarrow g_SNg_S^{-1}$
        \State \Return $(g_Sp,N)$
    \end{algorithmic}
\end{algorithm}

We emphasize that for any $K'<K$, a $K'$-partial SBS can also serve as a $K$-partial SBS since we can always break more symmetry than needed. In particular, a full SBS often suffices for simplicity.

Similar to the full SBS case, we have flexibility in choosing our canonical object used to generate the partial SBS. To satisfy the definition, all we need is for $\Stab_{G}(p)\leq K'$ for some $K'\in\Cl_S(K)$. An ideal partial SBS is desirable, especially if we wish to ensure we do not break any extra symmetry. The condition given by Theorem~\ref{Theorem:partial_SBS_complements} is more complicated. However, if we have a $\texttt{FindComplement}$ function, we can automate the process of finding a symmetry an object which generates an ideal partial SBS should have. Here, let $\texttt{Quotient}$ be a function which returns a quotient group and a mapping from cosets to elements of the quotient group. We have the following algorithm which returns a pair of subgroups $(H,K)$ such that if $p$ has $H\leq\Stab_{O(3)}(p)$ and $\Stab_S(p)=K$ then $p$ generates an ideal partial SBS.

\begin{algorithm}
    \caption{Ideal partial SBS generating object symmetry}
    \label{alg:idea_partial_SBS_cond}
    \begin{algorithmic}
        \Input
            \Desc{$S$}{Symmetry of input expressed as pair $(g_S,\groupname(S))$}
            \Desc{$\Cl_S(K)$}{Set of conjugate subgroups expressed as $(S,K)=((g_S,\groupname(S)),(g_K,\groupname(K)))$}
        \EndInput
        \Output
            \Desc{$H$}{Symmetry needed for object $p$ to generate ideal partial SBS}
        \EndOutput
        \State $N_1\leftarrow\texttt{Normalizers}[\groupname(S)]$
        \State $(n,\groupname(N_2))\leftarrow\texttt{Normalizers}[\groupname(K)]$
        \State $N_2\leftarrow(g_S^{-1}g_Kn,\groupname(N_2))$
        \State $N\leftarrow N_1\cap N_2$
        \State $N'\leftarrow (e,\groupname(S))\cap N_2$
        \State $(Q_1,\phi)\leftarrow \texttt{Quotient}[N,(g_S^{-1}g_K,K)]$
        \State $Q_2\leftarrow \phi(N')$
        \State $C\leftarrow \texttt{FindComplement}[Q_1,Q_2]$
        \If{$C$ exists}
        \State $(h,\groupname(H))\leftarrow\phi^{-1}(C)$
        \State \Return $((g_Sh,\groupname(H)),K)$
        \Else
        \State \Return None
        \EndIf
    \end{algorithmic}
\end{algorithm}

\subsection{Using a symmetry breaking object}
\label{sec:input_SB_object}
In general, one has the freedom to decide how to incorporate a symmetry breaking object into their model. This is often dependent on which equivariant architecture we apply our framework to and influences the choice of a canonical symmetry breaking object. For example, if we consider images and rotational symmetry, one natural choice is to add an additional image channel for the symmetry breaking object. In this case one would naturally choose to use an asymmetric image as the canonical symmetry breaking object. In the experiments section, we apply our framework to equivariant message passing graph neural networks (GNNs). In that case we can insert the symmetry breaking object as an additional node feature and specify the representation it transforms as.

\section{Relation to other works}
\label{sec:related_works}

Having fully described our framework, it is useful to briefly discuss how our method relates to existing symmetry breaking works.

\subsection{Adaption of explicit symmetry breaking methods}
In \citet{smidt2021finding}, an additional symmetry breaking object is learned from the data. Similarly in relaxed equivariant networks, we learn additional nontrivial weights which we can often interpret as an additional symmetry breaking object \citep{huang2023approximately,van2022relaxing,wang2022surprising,wang2022approximately,wang2023relaxed}. It is therefore natural to ask how these learned symmetry breaking objects relate to our framework.

In the case where we apply the method of \citet{smidt2021finding} to a single symmetry breaking sample $x,y$, then we obtain some additional input $b$ which lets an equivariant model output $y$ when given input $x$. As long as there is an action of $G$ on the learned symmetry breaking object $b$, we can adapt it to form an equivariant SBS by taking an orbit under $N_G(S)$ (or only $N_{G}(S,K)$). In fact with a suitable group transformation we may simply adopt them as canonical symmetry breaking objects for Algorithms \ref{alg:full_SBS_from_obj} and \ref{alg:partial_SBS_from_obj}.

However, these learned symmetry breaking objects will in general not generate an ideal SBS. The procedure of \citet{smidt2021finding} uses gradients of a $G$-invariant loss. Since the loss depends on both $x,y$, we can view the obtained symmetry breaking object as coming from some $G$-equivariant function $b=h(x,y)$. Hence, $b$ will have the symmetry of $\Stab_G(x)\cap\Stab_G(y)$ which is why it can help our equivariant model break the symmetry of input $x$. In fact, this means that we always will obtain an exact SBS in the partial symmetry breaking case. However, our Theorem~\ref{Theorem:complement} and \ref{Theorem:partial_SBS_complements} tells us $b$ may need additional symmetry to generate an ideal SBS which \citet{smidt2021finding} does not guarantee.

\subsection{Relation to spontaneous symmetry breaking methods}

In the work of \citet{balachandar2022breaking}, symmetry breaking is done through introducing nonequivariant components in their architecture. In particular, their symmetry detection algorithm generates a vector perpendicular to the mirror plane and hence can be modified to construct a SBS by taking positive and negative values of the vector.

The framework outlined by \citet{kaba2023symmetry} advocates for a notion of relaxed equivariance where rather than $f(gx)=gf(x)$ we instead just have $f(gx)=g'f(x)$ for some $g'\in g\Stab_G(x)=gS$. In Definition~\ref{def:spont_break}, we argue we should consider a set-valued function $F:X\to\mathcal{Y}$ and impose equivariance for group action on the set. By restricting to a specific member of the output set, we would obtain a function $f:X\to Y$ so that $f(x)\in F(x)$. It is not hard to show that such an $f$ satisfies the relaxed equivariance of \citet{kaba2023symmetry}. This can be implemented by modifying Algorithm~\ref{alg:full_SBS_from_obj} so we pick a specific $b\in B$ rather than returning the entire SBS $B$.

Finally, a natural attempt for breaking symmetry is simply noise injection \citep{liu2019graph,locatello2020object}. Noise which is isotropic with respect to group $G$ can be viewed as a SBS, however, we expect our network would need to learn to map multiple (possibly infinite) input, noise pairs to the same output. A motivation of our work is that by using knowledge of input and output symmetry, we can reduce this degeneracy and we characterize exactly how to analyze this with Corollaries \ref{cor:SBS_bound} and \ref{cor:partial_SBS_bound}. In many cases, we can even have the ideal degeneracy of 1. However, we would like to point out that our theoretical results also let us prove we sometimes cannot do better than noise. For example, we can use Theorem~\ref{Lemma:cyclic_no_complement_CI} and Corollary~\ref{cor:SBS_bound} to show that for $G=SO(2)$, any full symmetry breaking scheme using only input symmetry must have infinite degeneracy.

\section{Experiments}
\label{sec:experiments}
Here, we provide some example tasks where we apply our framework of full symmetry breaking and partial symmetry breaking to an equivariant message passing GNN. We consider the cases where we can find an ideal equivariant SBS or partial SBS. This section serves primarily as a proof of concept for how our approach works in practice. Note that while these examples only have a single type of input in training for simplicity, this is by no means a restriction of our framework. In general one simply uses the given symmetry of the inputs and outputs in partial symmetry breaking in addition to the algorithms from Section~\ref{sec:SBS_construct} to choose what specific symmetry breaking object to use. This lets us work with multiple types of inputs and input symmetries with the same model.

\subsection{Full symmetry breaking: triangular prism}

\subsubsection{Task}

For an example of full symmetry breaking, we consider the task of picking a vertex of a triangular prism, similar to that described in Section~\ref{sec:equivariant_SBS}.

\textbf{Input:} Graph with 6 nodes with edges given by the edges of the prism and position features at the nodes corresponding to the positions of the vertices. We add an additional pseudoscalar feature of magnitude $1$.

\textbf{Input symmetry:} The prism with only the position features has $D_{3h}$ symmetry, but the addition of pseudoscalar features reduces this to $D_{3}$ symmetry.

\textbf{Output:} Vector which points from the center of the prism to one of the vertices. Note that there is a set of equally valid outputs.

\textbf{Output symmetry:} Nonzero vectors have $C_{\infty v}$ symmetry where the high symmetry axis is aligned along the vector.

We note that the there is no shared symmetry between the output and input if we include the pseudoscalar feature on the prism. Hence, this is an example of a full symmetry breaking task. Further, this is a spontaneous symmetry breaking task because as we rotate the prism, the set of valid vectors rotate as well.

\subsubsection{Architecture}

We apply our framework to the default equivariant message passing GNN implemented in the $\texttt{e3nn}$ pytorch library \cite{geiger2022e3nn}. Each layer consists of the equivariant 3D steerable convolutions followed by gated nonlinearities described in \cite{weiler20183d}. Node features in this network are separated into 3 types, position features, node features, and node attributes. The position features are used for point convolutions and node attributes are permanent features which remain through all message layers. Naturally, we input the positions of the prism vertices as position features. We input the pseudoscalar feature as a shared node attribute for all vertices. Similarly, to implement the input of a symmetry breaking object, we also add it as a shared node attribute for all vertices.

We speicify the irreps of the hidden features in our model up to $l=2$ of both parities and use up to $l=4$ spherical harmonics for point convolution filters. Further for the radial network, we use a 3 layer fully connected network with 16 hidden features in each layer. In the final layer, we specify a odd node $l=1$ (vector) feature. We take the sum of the final output vectors as the model output.

\subsubsection{Symmetry breaking set}

In this case, it turns out one can set the vector $(\sqrt{3}/2,1/2,0)$ as a canonical symmetry breaking object and it will generate an ideal SBS. See Appendix~\ref{app:full_SBS} for details on why this choice works. Following Algorithm~\ref{alg:full_SBS_from_obj}, we see that we will obtain a set of unit vectors parallel to an edge of the triangular faces of the prism as our SBS.

\subsubsection{Results}

\begin{figure}[h!]
    \centering
    \subfloat[\label{fig:triangular_prism_single}]{\includegraphics[width=0.35\textwidth]{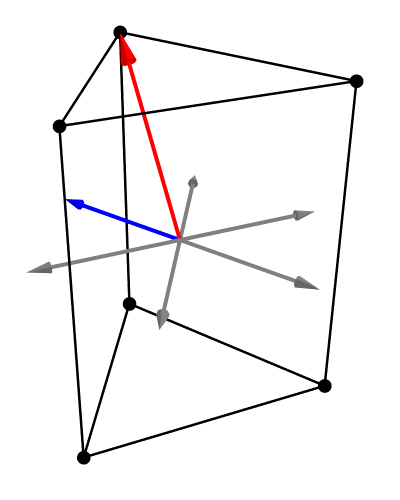}}\qquad\qquad\qquad
    \subfloat[\label{fig:triangular_prism_all}]{\includegraphics[width=0.35\textwidth]{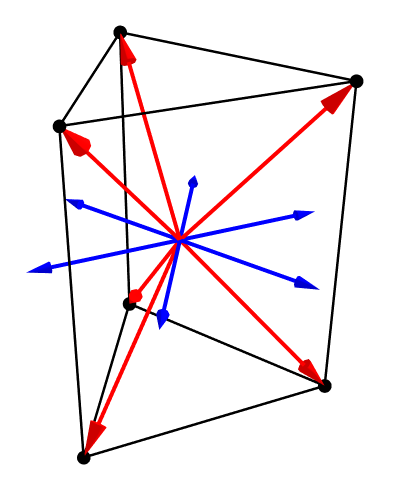}}
    \caption{(a) Output (red) generated by our model and symmetry breaking object (blue) given. (b) The set of all the outputs generated by our model if we feed in all symmetry breaking objects.}
    \label{fig:triangular_prism_model}
\end{figure}

Since our framework is equivariant, we fix the orientation of the prism in training.

We first fix a choice of one symmetry breaking object from our equivariant SBS and one of the vertices of the triangular prism. We then give the chosen symmetry breaking object as the additional input to our model and train the model to match the vector from the center of the prism to the chosen vertex using MSE loss. An example of the result of this training is shown in Figure~\ref{fig:triangular_prism_single}. We also observe that no matter which pair of vertex and symmetry breaking object we pick, our equivariant network is able to learn the vector pointing that that vertex. In practice, this means that we can match any of our symmetry breaking objects with a single observed output.

Once trained on one pair of symmetry breaking object and vertex, the equivariance of our GNN means that inputting the other symmetry breaking objects in our SBS gives the other symmetrically related outputs. This is shown in Figure~\ref{fig:triangular_prism_all}.

Further, rather than picking one vertex, we also tried modifying our loss so that we compute the MSE loss for all choices of vertex and take the minimum. Hence, our network can learn which vertex to pair with each symmetry breaking object. In this prism example, our pairing is random. This method of taking the minimum loss is especially useful when we have multiple instances of symmetry breaking in our data.

Finally, in Appendix~\ref{app:nonequivariant_SBS} we demonstrate that a non-equivariant SBS fails as described in Section~\ref{sec:equivariant_SBS} and in Appendix~\ref{app:nonideal_SBS} we demonstrate the degeneracy of nonideal SBS compared to an ideal one as described in Section~\ref{sec:nonideal_full_SBS}.

\subsection{Partial symmetry breaking: octagon to rectangle}

\subsubsection{Task}

For an example of partial symmetry breaking, we consider the task of deforming an octagon to a rectangle.

\textbf{Input:} Graph with 8 nodes with edges corresponding to the edges of the octagon and position features at nodes corresponding to positions of the vertices. We add an additional pseudoscalar feature of magnitude $1$.

\textbf{Input symmetry:} The octagon with only the position features has $D_{8h}$ symmetry, but the addition of pseudoscalar features reduces this to $D_{8}$ symmetry.

\textbf{Output:} Graph with 8 nodes with edges corresponding to the edges of the octagon and position features at nodes corresponding to new positions of the vertices. The positions are such that 4 of the vertices which form the shape of a rectangle remain unchanged and the remaining 4 vertices are shifted so their positions coincide with the nearest of the 4 vertices. Note that there is a set of equally valid outputs.

\textbf{Output symmetry:} Output graph has $D_{2h}$ symmetry.

We note that the shared symmetry between the input and output is $D_2$ compared to the $D_8$ symmetry of the input. Hence, we may apply our partial SBS framework here. The reason for forcing $D_8$ symmetry by adding chirality is to demonstrate that each step in Algorithm~\ref{alg:idea_partial_SBS_cond} in general is a nontrivial operation.

\subsubsection{Architecture}

We use the same default equivariant message passing GNN implemented as for the prism case. The symmetry breaking object is a different type of representation, so we specify different irreps for the shared node attribute for all vertices to input the symmetry breaking object. In addition, we keep the final output irreps as a vector at all nodes, but rather than averaging them, we now interpret them as a displacement to generate new positions for the distorted octagon.

\subsubsection{Symmetry breaking set}
 In this case, choosing $(0,0,1,0,0)$ for a $l=2$ irrep as a canonical partial symmetry breaking object generates an ideal partial symmetry breaking set. See Appendix~\ref{app:partial_SBS} The resulting SBS from Algorithm~\ref{alg:partial_SBS_from_obj} is the set of $l=2$ harmonics ``parallel'' to the edges of the octagon.

\subsubsection{Results}

\begin{figure}[h!]
    \centering
    \subfloat[\label{fig:octagon_single1}]{\includegraphics[width=0.35\textwidth]{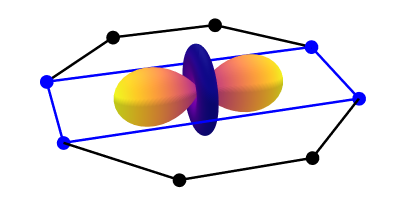}}\qquad\qquad
    \subfloat[\label{fig:octagon_single2}]{\includegraphics[width=0.35\textwidth]{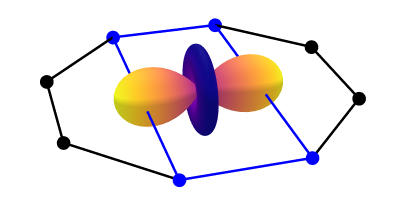}}\\
    \subfloat[\label{fig:octagon_single_mismatched}]{\includegraphics[width=0.4\textwidth]{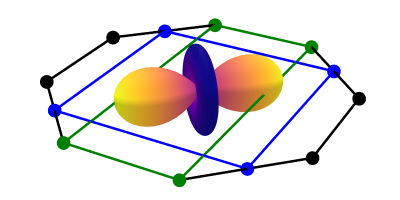}}
    \caption{(a) Output (blue) of our model when we match a symmetry breaking object with a compatible rectangle. (b) Output (blue) of our model when we match a symmetry breaking object with a different compatible rectangle. (c) Output (blue) when we match a symmetry breaking object with an incompatible rectangle (green). Note the square has symmetries of both the symmetry breaking object and the target rectangle.}
\end{figure}

Similar to the prism case, we try training by matching a specific symmetry breaking object to a rectangle distortion. When the symmetry breaking object and rectangle are compatible (share the same symmetries), then our model has no problem learning to deform the octagon into the rectangle. This is shown in Figures~\ref{fig:octagon_single1} and \ref{fig:octagon_single2}. An interesting failure case occurs when we try to match a symmetry breaking object and rectangle with incompatible symmetries. This is shown in Figure~\ref{fig:octagon_single_mismatched}. Here, the $D_2$ symmetry of the rectangle and of the symmetry breaking object are misaligned. As a result, our model predicts an output which has symmetry of $D_4$ which is the group generated when we include the symmetry elements of both the target rectangle and the symmetry breaking object. Hence, the resulting shape is a square.

As with the triangular prism case, we also tried letting the model choose which rectangle to deform to given a symmetry breaking object. In this case, our model computes loss separately for all 4 possible rectangle distortions and takes the minimum. We note that for a given symmetry breaking object, 2 of the possible rectangles are symmetrically compatible while 2 are not. Over 200 random initializations, we find roughly $30\%$ of the time our model attempts to match symmetrically incompatible symmetry breaking objects to a rectangle. This is better than the $50\%$ we would expect if it matches pairs randomly.

\subsection{BaTiO\texorpdfstring{$_3$}{3} phase transitions}

\subsubsection{Task}

Finally, we demonstrate our framework on a more realistic example. For this, we examine the crystal structure of barium titanate (BaTiO$_3$). Specifically, as we decrease temperature, there is a phase transition from a high space-group symmetry $P_{m\bar{3}m}$ state to a lower space-group symmetry $P_{4mm}$ state at $403$K \cite{kay1949xcv,oliveira2020temperature,woodward1997octahedral}. The high and low symmetry states are shown in Figures~\ref{fig:BaTiO3_symmetric} and \ref{fig:BaTiO3_tilted} respectively. Note that the real distortions are rather small and hard to see visually. Table~\ref{tab:struct_quants} provides some numerical quantities which help distinguish the two. In particular, there are 3 distinct Ti-O-Ti bond angles in a primitive cell, 2 of which are distorted equally to $171.80^\circ$ in the low symmetry structure. This bent angle is shown more clearly in the schematic in Figure~\ref{fig:BaTiO3_tilted}.

\textbf{Input:} Graph with 5 nodes corresponding to the atoms in a primitive cell of BaTiO$_3$. We have position features corresponding to the positions of the atoms in the cell. We fix the unit cell to be a cube with sides $4$\AA, close to the size of the real cells.

\textbf{Input symmetry:} The high symmetry structure has a space group symmetry of $Pm\bar{3}m$. Ignoring translational symmetries, this corresponds to $O_h$ point group symmetry (denoted as $m\bar{3}m$ in Hermann-Maugin notation). Note the symmetry is already listed in the materials project database \cite{jain2013commentary}.

\textbf{Output:} Graph with 5 nodes corresponding to the atoms in a primitive of BaTiO$_3$. We have position features corresponding to the positions of the atoms in the cell. We fix the unit cell to be a cube with sides $4$\AA, close to the size of the real cells.

\textbf{Output symmetry:} The high symmetry structure has a space group symmetry of $P4mm$. Ignoring translational symmetries, this corrsponds to $C_{4v}$ point group symmetry (denoted as $4mm$ in Hermann-Maugin notation). Note the symmetry is already listed in the materials project database \cite{jain2013commentary}.

The input and output share $D_{4h}$ symmetry compared to the $O_h$ symmetry of the input. Hence may apply our partial symmetry breaking framework here.

\begin{figure}[h!]
    \centering
    \subfloat[\label{fig:BaTiO3_symmetric}]{\includegraphics[width=0.45\textwidth]{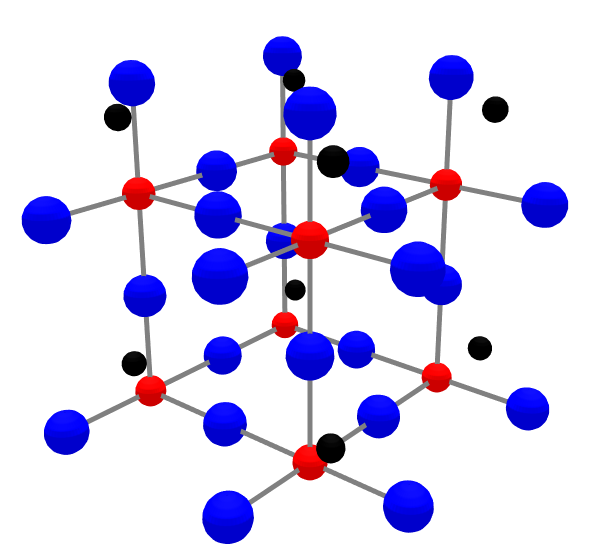}
    \qquad\qquad\includegraphics[width=0.35\textwidth]{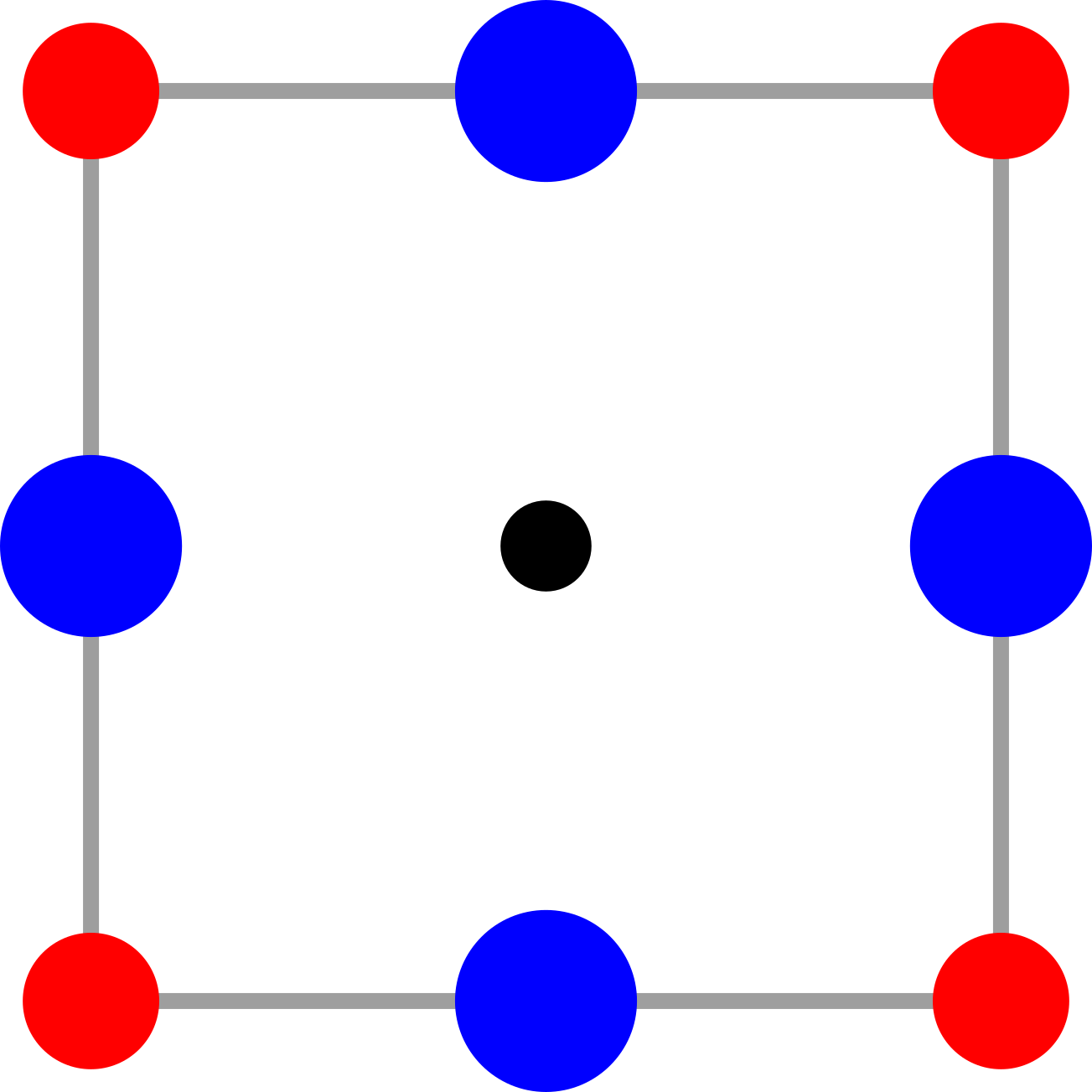}}\\
    \subfloat[\label{fig:BaTiO3_tilted}]{\includegraphics[width=0.45\textwidth]{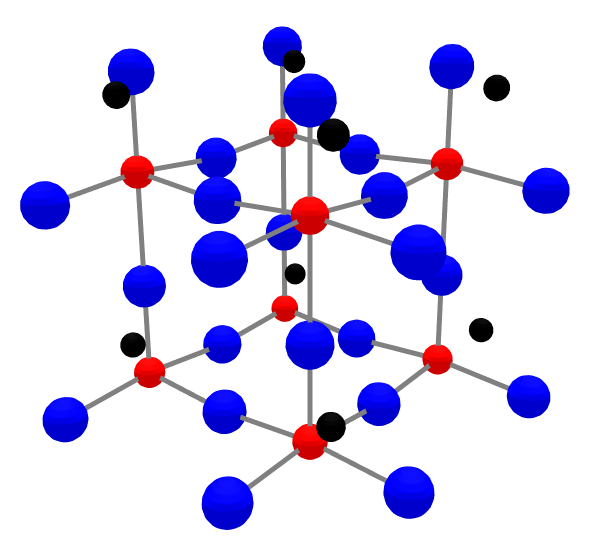}
    \qquad\qquad\includegraphics[width=0.35\textwidth]{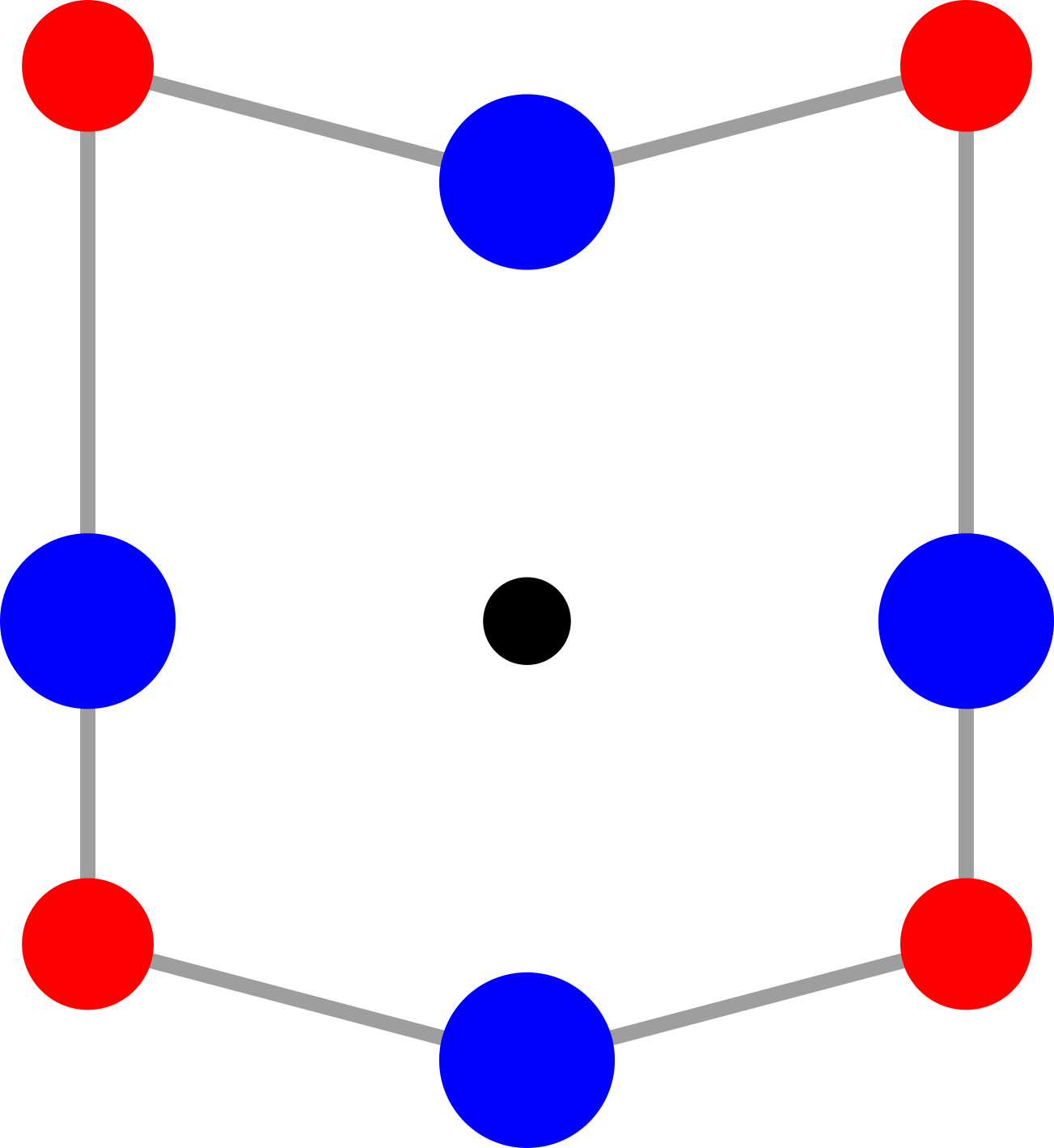}}
    \caption{(a) Initial high symmetry crystal structure of BaTiO$_3$. Left is an actual plot of the crystal structure and right is a side-on schematic. (b) Target low symmetry crystal structure of BaTiO$_3$. Left is an actual plot of the distorted crystal structure and right is a side-on schematic with exaggerated distortion. The angle of the bent bond is $171.80^\circ$.}
\end{figure}

\subsubsection{Architecture}

We use the same default equivariant message passing GNN implemented as for the previous examples. Here, we also input atom type as an additional node feature using one hot encoding. Similar to the octagon example, have vector features at each node as output and interpret it as the displacement to generate the distorted structure. We can choose a SBS consisting of a single vector so we have an additional vector as node attribute to input a symmetry breaking object. Finally, since crystals are periodic, we modified the network to include periodicity when computing relative displacement vectors.

\subsubsection{Symmetry breaking set}

Note that $O_h$ has itself as normalizer in $O(3)$ so the symmetry completely determines orientation. Hence any object sharing $C_{4v}$ symmetry works for generating an ideal equivariant partial SBS. A simple choice consists of vectors (odd parity $l=1$ object) pointing along the 4-fold rotation axes.

\subsubsection{Results}

As shown in Figure~\ref{fig:BaTiO3_model_tilted} and Table~\ref{tab:struct_quants}, our model is able to learn to distort the crystal structure appropriately when given an appropriate symmetry breaking object. Without such an input the model cannot provide any distortions. In additional, we computed some additional invariant quantities of the crystal structures and we see that these invariants for the output structure of the model with a SB object matches the invariants for the target lower symmetry structure.

\begin{figure}[h!]
    \centering
    \includegraphics[width=0.65\textwidth]{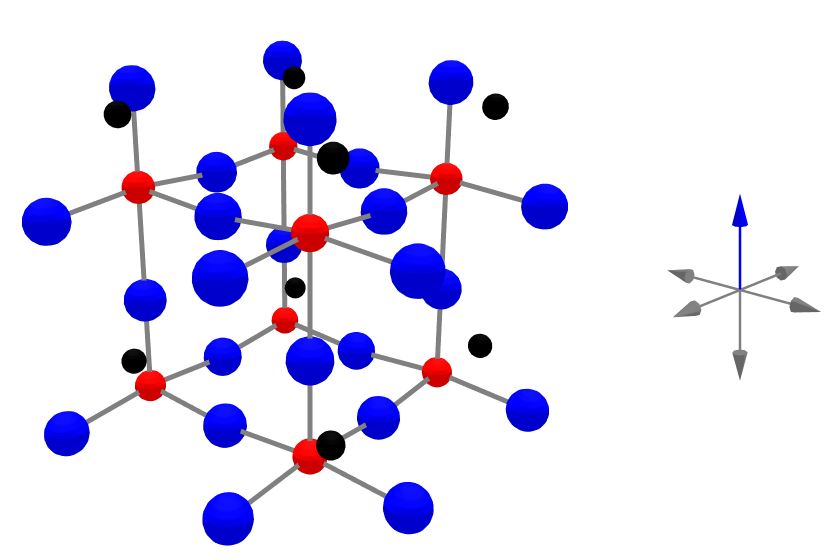}
    \caption{Distorted crystal structure generated by our model when given a symmetry breaking object shown on the right in blue.}
    \label{fig:BaTiO3_model_tilted}
\end{figure}

\begin{table}[h!]
    \centering
    \caption{Values of various quantities which help distinguish the high symmetry and low symmetry structures. Our models here try to distort the high symmetry structure to the low symmetry one.}

    \vskip 0.15in
    
    \begin{tabular}{|c|ccc|}
        \hline
        Structure & Bond length average & Bond length variance & Ti-O-Ti\\\hline
        High symmetry & 2 & 0 & 180$^\circ$\\
        Low symmetry & 2.003417 & 0.01392 & 171.80$^\circ$\\
        Model (no SBS) & 2 & 0 & 180$^\circ$\\
        Model (SB object $(1,0,0)$) & 2.003417 & 0.01392 & 171.80$^\circ$\\
        \hline
    \end{tabular}
    \label{tab:struct_quants}
\end{table}

\section{Conclusion}
\label{sec:conclusion}

We formalize the problem equivariant neural networks face in the spontaneous symmetry breaking setting. We propose the idea of equivariant symmetry breaking sets which allows ENNs to sample or generate all possible symmetrically related outputs given a highly symmetric input. Importantly, we show that minimizing these sets is intimately connected to a well studied group theory problem, and tabulate solutions for the ideal case for the point groups. We then demonstrate how our symmetry breaking framework works in practice on example problems.

One future direction is to include translations and tabulate complements for the space groups in their respective normalizers. This would be particularly useful for crystallography applications. Another direction is to automate finding stabilizers for partial symmetry breaking objects. In addition, our method assumes we can efficiently detect the symmetry of our input and outputs. Designing fast symmetry detection algorithms would also be extremely beneficial. Finally, designing efficient loss functions which do not punish symmetrically related outputs would be useful for any network dealing with spontaneous symmetry breaking.



\subsubsection*{Acknowledgments}
We thank the helpful discussions with Robin Walters, Elyssa Hofgard, and Rui Wang for framing the different types of symmetry breaking.

YuQing Xie was supported by the MIT College of Computing fellowship and the National Science Foundation Graduate Research Fellowship under Grant No. DGE-1745302. Tess Smidt was supported by DOE ICDI grant DE-SC0022215.

\bibliography{bibliography}
\bibliographystyle{tmlr}

\appendix
\section*{Table of Contents}
\startcontents[sections]
\printcontents[sections]{l}{1}{\setcounter{tocdepth}{2}}
\newpage

\section{Notation and commonly used symbols}
\label{app:notation}
Here, we present the notation we use throughout this paper and the typical variable names.
\begin{table}[h!]
    \centering
    \caption{Notation used throughout this paper}
    \vspace{0.15in}
    \begin{tabular}{rp{0.8\textwidth}}
        $\Stab_G(x)$ & Stabilizer of an element $x$ under a group $G$\\
        $N_G(S)$ & Normalizer of group $S$ in group $G$\\
        $\Cl_G(S)$ & Set of groups obtained by conjugating group $S$ with elements in $G$\\
        $\Orb_G(x)$ & Orbit of an element $x$ under action by elements of group $G$\\
        $\mathcal{P}(X)$ & Set of all subsets of $X$\\
        $G/S$ & When $G$ is a group, this is the set of left cosets. If $S$ is a normal subgroup, this also denotes the quotient group\\
        $X/S$ & When $X$ is a set, this is the equivalence classes induced by action of $S$ on $X$\\
        $S\leq G$ & If $S$ and $G$ are groups, this denotes that $S$ is a subgroup of $G$\\
        $f|_X$ & Function $f$ with domain restricted to $X$
    \end{tabular}
    \label{tab:notation}
\end{table}

\begin{table}[h!]
    \centering
    \caption{Commonly used symbols}
    \vspace{0.15in}
    \begin{tabular}{rp{0.8\textwidth}}
        $G$ & Group our network is equivariant under\\
        $\mathbf{1}$ & Used to denote the trivial group\\
        $e$ & Identity element of a group\\
        $x$ & Input\\
        $y$ & Output\\
        $S$ & Symmetry of our input, more precisely $\Stab_G(x)$\\
        $K$ & Symmetry of our output, more precisely $\Stab_S(y)$\\
        $B$ & Full symmetry breaking set\\
        $P$ & Partial symmetry breaking set
    \end{tabular}
    \label{tab:symbols}
\end{table}
\newpage
\section{Group theory}
\label{app:math_background}
Group theory is the mathematical language used to describe symmetries. Here, we present a brief overview of concepts from group theory we need to both define equivariance, and to understand our proposed symmetry breaking scheme. For a more comprehensive treatment of group theory, we refer to standard textbooks \cite{dresselhaus2007group,dummit2004abstract,kurzweil2004theory}. We begin by defining what a group is.
\begin{definition}[Group]
    Let $G$ be a nonempty set equipped with a binary operator $\cdot:G\times G\to G$. This is a group if the follwing group axioms are satsfied
    \begin{enumerate}
        \item Associativity: For all $a,b,c\in G$, we have $(a\cdot b)\cdot c=a\cdot(b\cdot c)$ 
        \item Identity element: There is an element $e\in G$ such that for all $g\in G$ we have $e\cdot g=g\cdot e=g$
        \item Inverse element: For all $g\in G$, there is an inverse $g^{-1}\in G$ such that $g\cdot g^{-1}=g^{-1}\cdot g=e$ for identity $e$.
    \end{enumerate}
\end{definition}
Some examples of groups include the group of rotation matrices with matrix multiplication as the group operation, the group of integers under addition, and the group of positive reals under multiplication. One very important group is the group of automorphisms on a vector space. This group is denoted $GL(V)$ and we can think of it as the group of invertible matrices. 

While abstractly groups are interesting on their own, we care about using them to describe symmetries. Intuitively, the group elements abstractly represent the symmetry operations. In order to understand what these actions are, we need to define a group action.
\begin{definition}[Group action]
    Let $G$ be a group and $\Omega$ a set. A group action is a function $\alpha:G\times \Omega\to \Omega$ such that $\alpha(e,x)=x$ and $\alpha(g,\alpha(h,x))=\alpha(gh,x)$ for all $g,h\in G$ and $x\in\Omega$.
\end{definition}

Often, we may want to relate two groups to each other. This is done using group homomorphisms, a mapping which preserves the group structure.
\begin{definition}[Group homomorphism and isomorphism]
    Let $G$ and $H$ be groups. A group homomorphism is a function $f:G\to H$ such that $f(u\cdot v)=f(u)\cdot f(v)$ for all $u,v\in G$. A group homomorphism is an isomorphism if $f$ is a bijection.
\end{definition}
Because there are many linear algebra tools for working with matrices, it is particular useful to relate arbitrary groups to groups consisting of matrices. Such a homomorphism together with the vector space the matrices act on is a group representation.
\begin{definition}[Group representation]
    Let $G$ be a group and $V$ a vector space over a field $F$. A group representation is a homomorphism $\rho:G\to GL(V)$ taking elements of $G$ to autmorphisms of $V$.
\end{definition}

Given any representation, there are often orthogonal subspaces which do not interact with each other. If this is the case, we can break our representation down into smaller pieces by restricting to these subspaces. Hence, it is useful to consider the representations which cannot be broken down. These are known as the irreducible representations (irreps) and often form the building blocks of more complex representations.
\begin{definition}[Irreducible representation]
    Let $G$ be a group, $V$ a vector space, and $\rho:G\to GL(V)$ a representation. A representation is irreducible if there is no nontrivial proper subspace $W\subset V$ such that $\rho|_{W}$ is a representation of $G$ over space $W$.
\end{definition}
There has been much work on understanding the irreps of various groups and many equivariant neural network designs use this knowledge.

One natural question is whether there is a subset of group elements which themselves form a group under the same group operation. Such a subset is a called a subgroup.
\begin{definition}[Subgroup]
    Let $G$ be a group and $S\subseteq G$. If $S$ together with the group operation of $G$ $\cdot$ satisfy the group axioms, then $S$ is a subgroup of $G$ which we denote as $S\leq G$.
\end{definition}

One particular feature of a subgroup is that we can use them to decompose our group into disjoint chunks called cosets.
\begin{definition}[Cosets]
    Let $G$ be a group and $S$ a subgroup. The left cosets are sets obtained by multiplying $S$ with some fixed element of $G$ on the left. That is, the left cosets are for all $g\in G$
    \[gS=\{gs:s\in S\}.\]
    We denote the set of left cosets as $G/S$. The right cosets are defined similarly except we multiply with a fixed element of $G$ on the right. That is, the right cosets are for all $g\in G$
    \[Sg=\{sg:s\in S\}.\]
    We denote the set of right cosets as $G\backslash S$.
\end{definition}
In general, the left and right cosets are not the same. However, for some subgroups they are the same. Those subgroups are called normal subgroups.
\begin{definition}[Normal subgroup]
    Let $G$ be a group and $N$ a subgroup. Then $N$ is a normal subgroup if for all $g\in G$, we have $gNg^{-1}=N$.
\end{definition}

It turns out that given a normal subgroup, one can construct a group operation on the cosets. The resulting group is called a quotient group.
\begin{definition}[Quotient group]
    Let $G$ be a group and $N$ a normal subgroup. One can define a group operation on the cosets as $aN\cdot bN=(a\cdot b)N$. The resulting group is called the quotient group and is denoted $G/N$.
\end{definition}
For subgroups $S$ which are not normal in $G$, it is often useful to consider a subgroup of $G$ containing $S$ where $S$ is in fact normal. The largest such subgroup is called the normalizer.
\begin{definition}[Normalizer]
    Let $G$ be a group and $S$ a subgroup. The normalizer of $S$ in $G$ is
    \[N_G(S)=\{g:gSg^{-1}=S\}.\]
\end{definition}

Similar to orthogonal vector spaces, one can imagine an analogous notion for groups. These are called complement subgroups.
\begin{definition}[Complement]
    Let $G$ be a group and $S$ a subgroup. A subgroup $H$ is a complement of $S$ if for all $g\in G$, we have $g=sh$ for some $s\in S$ and $h\in H$ and $S\cap H=\{e\}$.
\end{definition}
It turns out that if $S$ is a normal subgroup of $G$ and $H$ is a complement, then $H$ is isomorphic to the quotient group.

Finally, it is useful to define what we mean by symmetry of an object. These are all group elements which leave the object unchanged and is called the stabilizer.
\begin{definition}[Stabilizer]
    Let $G$ be a group, $\Omega$ some set with an action of $G$ defined on it, and $u\in\Omega$. The stabilizer of $u$ is all elements of $G$ which leave $u$ invariant. That is
    \[\Stab_G(u)=\{g:gu=u,g\in G\}.\]
\end{definition}
One can check that the stabilizer is indeed a subgroup. Closely related to the stabilizer is the orbit. This is all the values we get when we act with our group on some object.
\begin{definition}[Orbit]
    Let $G$ be a group, $\Omega$ some set with an action of $G$ defined on it, and $u\in\Omega$. The orbit of $u$ is the set of all values obtained when we act with all elements of $G$ on it. That is,
    \[\Orb_G(u)=\{gu:g\in G\}=Gu.\]
\end{definition}
It turns out one can show that the stabilizer of elements in the orbit are related. This relation turns out to be conjugation which we define below.
\begin{definition}[Conjugate subgroups]
    Let $S$ and $S'$ be subgroups of $G$. We say $S$ and $S'$ are conjugate in $G$ if there is some $g\in G$ such that $S=gS'g^{-1}$. We denote the set of all conjugate subgroups by
    \[\Cl_G(S)=\{gSg^{-1}:g\in G\}.\]
\end{definition}

\newpage
\section{Equivariant neural networks}
\label{app:ENNs_overview}
Here, we give a brief overview of equivariant neural networks. For a more in depth coverage of the general theory and construction of equivariance, we refer to works such as \citet{cohen2019general,finzi2020generalizing,kondor2018generalization}. We emphasize that the symmetry breaking techniques presented in the paper apply to any equivariant architecture.

We first define equivariance.

\begin{definition}[Equivariance]
    Let $G$ be a group with actions on spaces $X$ and $Y$. A function $f:X\to Y$ is said to be equivariant if for all $x\in X$ and $g\in G$ we have
    \[f(gx)=g f(x).\]
\end{definition}

Intuitively, we can interpret this as rotating the input giving the same result as just rotating the output. It is easy to check that the composition of equivariant functions is still an equivariant function. Hence, equivariant neural networks are designed using a composition of equivariant layers.

There has been considerable study into how one should design equivariant layers. One approach is to modify convolutional filters by transforming them with the elements of our group \cite{cohen2016group}. This approach is known as group convolution and is based on the intuition that convolutional filters are translation equivariant. In group convolution, one interprets our data as a signal over some domain. The first layer is a lifting convolution which transforms our data into a signal over the group. The remaining layers then just convolve this signal with filters which are also signals over the group.

One can further use group theory tools to break down the convolutional filters into irreps. This leads to steerable convolutional networks \cite{cohen2016steerable}. These can be extended and used to parameterize continuous filters which can be used for infinite groups \cite{cohen2018spherical}. It turns out the irreps of the group are natural data types for equivariant networks. Further, we can express the convolutions as tensor products of irreps. We can think of equivariant operations as being composed of tensor products of irreps, linear mixing of irreps, and scaling by invariant quantities. Combining these, we get tensorfield networks which works on point clouds and is rotation equivariant \cite{thomas2018tensor}. In this paper, we demonstrate our method using networks built from the $\texttt{e3nn}$ framework for $O(3)$ equivariance \cite{geiger2022e3nn}.
\newpage
\section{Limitations}
\label{app:limitations}

\subsection{Symmetry detection}
\label{app:symm_detect}
To use our procedure, we do assume knowledge of the symmetries of the inputs and outputs to our network. In the full symmetry breaking framework, we only need the symmetry of the input. In the partial symmetry breaking framework, we need the symmetry of both the input and the output. However, we argue that this is not a major concern.


First, symmetry detection is a well studied problem and there are many algorithms exist for various types of data \cite{bokeloh2009symmetry,keller2004algebraic,largent2012symmetrizer,mitra2006partial}. Further, sometimes the symmetries of the inputs and outputs are already known. This is especially true for crystallographic data \cite{jain2013commentary}. In addition, because we only need the symmetry to design equivariant SBSs, we only need to perform symmetry detection once. This can simply be incorporated as a preprocessing step for our data.

Further, we want to emphasize that our framework can be used to prove whether knowledge of input symmetry is beneficial. We prove in Lemma~\ref{Lemma:cyclic_no_complement_CI} that no finite subgroups of $SO(2)$ have complement in their normalizer (which is also just $SO(2)$). Combined with Corollary~\ref{cor:SBS_bound} this actually implies the degeneracy of any $SO(2)$-equivariant SBS for cyclic groups is infinite. Hence, we cannot do much better than a something like noise injection, which introduces asymmetry without knowledge of input symmetry.

\subsection{Loss functions}
\label{app:loss_func}
While this work focuses on allowing equivariant networks to produce a set of lower symmetry outcomes, it turns out another important problem is designing appropriate loss functions. Suppose we only have one example input output pair $x,y$ where $y$ shares no symmetries with $x$. In this case there is no problem. We can fix any $b\in B$ and train to minimize a simple MAE loss $||f(x,b)-y||^2$ for example.

However, suppose we observe $x,y$ and $x,y'$ in the data where $y'=sy$ and $s\in S=\Stab_G(x)$. Then suppose we try to minimize MSE loss $||f(x,b)-y||^2+||f(x,b')-y'||^2$, where $b'=s'b$. Then by equivariance, the second term in the loss is
\[||f(x,b')-y'||^2=||f(x,s'b)-y'||^2=||s'f(x,b)-sy||^2=||f(x,b)-s'^{-1}sy||^2.\]
So in fact we see we must choose $s'=s$. So with multiple input output pairs and a simple loss directly comparing outputs such as MSE, we have a problem pairing symmetry breaking objects with outputs.

However, suppose instead our loss was chosen such that $\texttt{loss}(y,y')=\texttt{loss}(y,y)$ is small if $y'=sy$ for any $s\in S$. Then even if our network outputs $sy$ instead of $y$ when given some symmetry breaking object $b$, we do not punish it. Hence, this pairing problem would not exist. A simple version of such a loss would be to compute the MSE for all possible symmetrically related outputs and take the closest one
\[\texttt{loss}(f(x,b),y)=\min_{s\in S}(||f(x,b)-sy||^2).\]
This is what we use for our experimental examples in this work. However, this can be inefficient for large or infinite $S$ and designing appropriate loss functions in such cases remains an open question.
\newpage
\section{Proofs}
\subsection{Proof of Lemma~\ref{Lemma:equi_stab_simple}}
\label{app:proof_equi_stab}

\begin{lemma}
    \label{Lemma:equi_stab}
    Let $X$ and $Y$ be spaces equipped with a group action of $G$. Suppose the action on $X$ is transitive. We can choose a $G$-equivariant $f:X\to Y$ such that $f(u)=y$ if and only if $\Stab_G(y)\geq\Stab_G(u)$. Further this uniquely defines $f$.
\end{lemma}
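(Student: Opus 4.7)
The plan is to prove the three assertions (necessity of the stabilizer containment, sufficiency/existence, and uniqueness) in that order, exploiting transitivity to pin down $f$ by its value at a single point $u$.

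For \textbf{necessity}, I would suppose an equivariant $f$ with $f(u)=y$ exists, take any $s\in\Stab_G(u)$, and compute $y=f(u)=f(su)=sf(u)=sy$. This shows $s\in\Stab_G(y)$, giving $\Stab_G(u)\leq\Stab_G(y)$ immediately. This direction is essentially the content of Lemma~\ref{Lemma:equi_stab_simple} specialized to transitive $X$, and requires no further machinery.

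For \textbf{sufficiency and uniqueness} simultaneously, I would use transitivity. Given any $x\in X$, pick some $g\in G$ with $x=gu$ (which exists by transitivity) and try to set $f(x):=gy$. The main obstacle, and the only real content of the proof, is showing this is well-defined: if $x=gu=g'u$, then $g^{-1}g'\in\Stab_G(u)$, and by the hypothesis $\Stab_G(u)\leq\Stab_G(y)$ we get $g^{-1}g'\in\Stab_G(y)$, so $g'y=gy$. Thus $f(x)$ depends only on $x$, not on the choice of $g$. Verifying $f(u)=y$ follows by taking $g=e$, and equivariance follows from $f(hx)=f((hg)u)=(hg)y=h(gy)=hf(x)$.

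For \textbf{uniqueness}, I would note that any equivariant $f'$ with $f'(u)=y$ must satisfy $f'(gu)=gf'(u)=gy$ for every $g\in G$; since every $x\in X$ is of the form $gu$ by transitivity, $f'$ agrees with the $f$ constructed above on all of $X$. No real difficulty arises here — uniqueness is essentially forced by the same well-definedness argument used for existence.

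The whole proof should fit in a short paragraph; the only subtle point is the well-definedness step, and it is exactly there that the hypothesis $\Stab_G(y)\geq\Stab_G(u)$ is used. Transitivity of the $G$-action on $X$ is what upgrades this from the weaker ``only if'' statement of Lemma~\ref{Lemma:equi_stab_simple} to a full if-and-only-if with uniqueness.
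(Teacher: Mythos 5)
Your proposal is correct and takes essentially the same route as the paper: both construct $f$ by transporting $y$ along group elements witnessing transitivity and use the hypothesis $\Stab_G(u)\leq\Stab_G(y)$ at the crucial consistency step. The only difference is organizational — you verify well-definedness of $f(gu):=gy$ over all choices of $g$ (making equivariance immediate), whereas the paper fixes a transversal of coset representatives and then checks equivariance directly; the underlying computation is identical.
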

\begin{proof}
    First suppose we did have $f(u)=y$. For any $g\in\Stab_G(u)$, we have by equivariance of $f$ that
    \[gy=f(gu)=f(u)=y.\]
    So $g\in\Stab_G(y)$.

    Next, suppose $\Stab_G(y)\geq\Stab_G(u)$. For any $x\in X$, there is some $r\in G$ so that $x=ru$. Let us pick exactly one such $r$ for each $X$ and form a set $R$. Hence any $x$ is uniquely written as $x=ru$ for $r\in R$. Define
    \[f(x)=f(ru)=ry.\]
    We claim $f$ is equivariant. For any $g\in G$ and $x\in X$, let $x=ru$ and $gx=r'u$ for some $r,r'\in R$. Then,
    \[f(gx)=f(gru)=f(r'u)=r'y.\]
    But note that $gx=r'u$ implies $r'^{-1}gx=r'^{-1}gru=u$. So $r'^{-1}gr\in\Stab_G(u)\leq\Stab_G(y)$. Hence, we also have $r'^{-1}gry=y$.
    So,
    \[f(gx)=r'y=r'(r'^{-1}gry)=gry=gf(x).\]
    Hence, $f$ is equivariant.

    Finally, for uniqueness, suppose $f,f'$ are two equivariant functions such that $f(u)=f'(u)=y$. Then by equivariance, for any $x=gu\in X$ we have
    \[f(x)=f(gu)=gy=f'(gu)=f'(x).\]
\end{proof}

\subsection{Formal justification of Definition~\ref{ESBS}}
\label{app:proof_normalizer}

We can justify Definition~\ref{ESBS} by characterizing exactly when $\sigma$ can be equivariant. This leads to the following proposition.
\begin{proposition}
    Let $G$ be a group and $S$ be a subgroup of $G$. Let $B\in\mathcal{P}(\mathbf{B})$ be a set where there is some group action of $G$ defined on $\mathbf{B}$. Then there exists an equivariant $\sigma|_{\Cl_G(S)}:\Cl_G(S)\to\mathcal{P}(\mathbf{B})$ such that $\sigma|_{\Cl_G(S)}=B$ if and only if $nB=B$ for all $n\in N_G(S)$.
\end{proposition}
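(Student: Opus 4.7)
The plan is to recognize this proposition as a direct specialization of Lemma~\ref{Lemma:equi_stab}. Indeed, $G$ acts on $\Cl_G(S)$ by conjugation, i.e.\ $g\cdot S' = gS'g^{-1}$, and this action is transitive by the very definition of $\Cl_G(S)$ as the conjugacy class of $S$. The stabilizer of the element $S\in\Cl_G(S)$ under this action is, tautologically, the normalizer $N_G(S)$. Meanwhile, the given action of $G$ on $\mathbf{B}$ lifts to an action on $\mathcal{P}(\mathbf{B})$ via $gU=\{gu : u\in U\}$, and the stabilizer of $B\in\mathcal{P}(\mathbf{B})$ under this action is precisely $\{g\in G : gB=B\}$.

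With these identifications in place, I would simply invoke Lemma~\ref{Lemma:equi_stab} with $X=\Cl_G(S)$, $Y=\mathcal{P}(\mathbf{B})$, $u=S$, and $y=B$. That lemma asserts that a $G$-equivariant $\sigma|_{\Cl_G(S)}:\Cl_G(S)\to\mathcal{P}(\mathbf{B})$ sending $S$ to $B$ exists if and only if $\Stab_G(B)\geq\Stab_G(S)$. Unpacking the two stabilizers using the previous paragraph, this condition reads $\{g\in G : gB=B\}\supseteq N_G(S)$, i.e.\ $nB=B$ for every $n\in N_G(S)$, which is exactly the condition in the proposition.

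For the presentation, I would briefly justify the two preliminary observations (transitivity of conjugation on $\Cl_G(S)$, and that $\Stab_G(S)=N_G(S)$), then state the identification of the two actions and apply the lemma. The proof itself is essentially one line once Lemma~\ref{Lemma:equi_stab} is cited, so there is no substantial obstacle; the only mild subtlety is being careful that Lemma~\ref{Lemma:equi_stab} requires a transitive action on the domain $X$, which is why the statement restricts $\sigma$ to $\Cl_G(S)$ rather than asking about an equivariant map on all of $\mathrm{Sub}(G)$ (which would decompose into separate orbits under conjugation, each handled independently by the same argument). As a bonus, Lemma~\ref{Lemma:equi_stab} also gives uniqueness of the resulting $\sigma|_{\Cl_G(S)}$, which is worth noting for the downstream use of the proposition in justifying Definition~\ref{ESBS}.
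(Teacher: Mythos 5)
Your proof is correct and is essentially identical to the paper's own argument: both identify the conjugation action on $\Cl_G(S)$ as transitive with stabilizer $N_G(S)$, note the induced action on $\mathcal{P}(\mathbf{B})$, and invoke Lemma~\ref{Lemma:equi_stab}. Your write-up just spells out the stabilizer of $B$ and the uniqueness remark more explicitly, which is a harmless elaboration.
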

\begin{proof}
    Note that $\Cl_G(S)$ is a set where action by conjugation is a transitive one. Also note by definition that $\Stab_G(S)$ for this action is precisely the definition of a normalizer $N_G(S)$. Then by Lemma~\ref{Lemma:equi_stab}, we see such a function exists if and only $B$ is also symmetric under $N_G(S)$.
\end{proof}

\subsection{Proof of Theorem~\ref{Theorem:complement}}
\label{app:proof_complement}
\begin{manualtheorem}{\ref*{Theorem:complement}}
    Let $G$ be a group and $S$ a subgroup. Let $B$ be a $G$-equivariant SBS for $S$. Then it is possible to choose an ideal $B$ if and only if $S$ has a complement in $N_G(S)$.
\end{manualtheorem}
\begin{proof}
    Suppose $B$ is transitive under $S$ and pick $b\in B$. Consider the stabilizer group $\Stab_{N_G(S)}(b)$. For any $g\in N_G(S)$, by transitivity under $S$ we must have $gb=sb$ for some $s\in S$. So, $s^{-1}gu=u$ implying that $h=s^{-1}g\in\Stab_{N_G(S)}(b)$. So we find that we can write any $g$ as $g=sh$ for some $s\in S$ and $h\in \Stab_{N_G(S)}(u)$ so
    \[N_G(S)=S\cdot\Stab_{N_G(S)}(u).\]
    But note that since $B$ is symmetry breaking, $S\cap\Stab_{N_G(S)}(u)=\{e\}$. Hence, $\Stab_{N_G(S)}(u)$ is indeed a complement.

    For the converse, suppose $H$ is a complement of $S$ in $N_G(S)$. We claim $B=N_G(S)/H$ is the equivariant SBS we desire. Note that clearly by construction, this is closed under $N_G(S)$ so we satisfy the equivariance condition. Further, note that $\Stab_S(H)=\Stab_{N_G(S)}(H)\cap S=H\cap S=\{e\}$. Since $B$ is transitive under $N_G(S)$, stabilizers of all other elements are obtained by conjugation and hence also trivial. Hence, it is indeed symmetry breaking. Finally, any $g\in N_G(S)$ is uniquely written as $sh$ for some $s\in S,h\in H$ so $gH=shH=sH$. So $B$ is transitive under $S$ as well.
\end{proof}

\subsection{Proof of Corollary~\ref{cor:SBS_bound}}
\label{app:proof_SBS_bound}
\begin{manualcorollary}{\ref*{cor:SBS_bound}}
    Let $G$ be a group and $S$ a subgroup. Let $M$ be such that $S\leq M\leq N_G(S)$. Let $B$ be a $G$-equivariant SBS for $S$ which is transitive under $N_G(S)$. Then it is possible to choose $B$ such that every $M$-orbit is also transitive under $S$ if and only if $S$ has a complement in $M$. In particular, such a $B$ has 
    \[\Deg_S(B)\leq|N_G(S)/M|.\]
\end{manualcorollary}
\begin{proof}
    Suppose we have such a $B$ and pick any $b\in B$. By transitivity of the orbit under $S$, we have $Mb=Sb$. Let $B'=Mb$. We can check that this is in fact an ideal $M$-equivariant SBS for $S$. That it is a symmetry breaker follows since $B$ is symmetry breaking. That it is $M$-equivariant and transitive follows since $Mb=Sb$ and $N_M(S)=M$. By Theorem \ref{Theorem:complement} this implies $S$ has a complement in $M$.

    Next, suppose we have a complement of $S$ in $M$. By Theorem~\ref{Theorem:complement} we can construct $B'$ which is an ideal $M$-equivariant SBS for $S$. We can lift this to a $G$-equivariant SBS for $S$ by just taking $B=N_G(S)B'$.

    Finally, to compute the order, we note that every $S$-orbit is also a $M$ orbit. Since $B$ is transitive under $N_G(S)$, there are at most $|N_G(S)/M|$ number of $M$-orbits and hence only that many $S$-orbits. So
    \[\Deg_S(B)\leq|N_G(S)/M|.\]
\end{proof}

\subsection{Justification of Definition~\ref{def:equi_part}}
\label{app:proof_generalized_normalizer}
Similar to the full SBS case, we can justify Definition~\ref{def:equi_part} by characterizing exactly when an equivariant $\pi$ can exist. This leads to the following proposition.
\begin{proposition}
    Let $G$ be a group, $S$ a subgroup of $G$, and $K$ a subgroup of $S$. Let $\mathbf{P}$ be a set with a group action of $G$ defined on it and $P\subset\mathbf{P}$. There exists an equivariant $\pi|_{\Orb_G((S,\Cl_S(K)))}:\Orb_G((S,\Cl_S(K)))\to \mathcal{P}(\mathbf{P})$ such that $\pi|_{\Orb_G((S,\Cl_S(K)))}((S,\Cl_S(K)))=P$ if and only if $N_G(S,K)$ leaves $P$ invariant.
\end{proposition}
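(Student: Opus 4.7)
The plan is to recognize this proposition as a direct application of Lemma~\ref{Lemma:equi_stab} to the transitive $G$-set $\Orb_G((S,\Cl_S(K)))$ with target $\mathcal{P}(\mathbf{P})$ equipped with its natural induced $G$-action. Once the stabilizer of the basepoint $(S,\Cl_S(K))$ is correctly identified as $N_G(S,K)$, the lemma immediately translates the existence of an equivariant $\pi$ sending $(S,\Cl_S(K))\mapsto P$ into the stabilizer containment $\Stab_G(P)\geq N_G(S,K)$, which unpacks to $nP=P$ for every $n\in N_G(S,K)$. This mirrors the strategy used for the full-SBS proposition in Appendix~\ref{app:proof_normalizer}; only the stabilizer computation is new.

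First I would spell out the natural action of $G$ on pairs of the form $(S',\Cl_{S'}(K'))$, namely $g\cdot(S',\Cl_{S'}(K'))=(gS'g^{-1},\Cl_{gS'g^{-1}}(gK'g^{-1}))$, and observe that this action is transitive on $\Orb_G((S,\Cl_S(K)))$ by definition of the orbit. Then I would compute the stabilizer of $(S,\Cl_S(K))$. An element $g$ fixes this pair iff $gSg^{-1}=S$ (so $g\in N_G(S)$) and $\Cl_{gSg^{-1}}(gKg^{-1})=\Cl_S(K)$. Given $g\in N_G(S)$, the second equation reduces to $\Cl_S(gKg^{-1})=\Cl_S(K)$, since conjugating by $g\in N_G(S)$ sends an $S$-conjugacy class to another $S$-conjugacy class (because $gsg^{-1}\in S$ for every $s\in S$). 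This in turn is equivalent to $gKg^{-1}\in\Cl_S(K)$. Combining both conditions gives exactly the definition
\[\Stab_G((S,\Cl_S(K)))=\{g\in N_G(S):gKg^{-1}\in\Cl_S(K)\}=N_G(S,K).\]

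With the stabilizer identified, I would apply Lemma~\ref{Lemma:equi_stab} with $u=(S,\Cl_S(K))$ and $y=P$: an equivariant function sending $u$ to $P$ exists iff $\Stab_G(P)\geq\Stab_G(u)=N_G(S,K)$. Unwinding the induced $G$-action on $\mathcal{P}(\mathbf{P})$, the condition $\Stab_G(P)\geq N_G(S,K)$ is precisely $nP=P$ for all $n\in N_G(S,K)$, which is the desired invariance. The main obstacle, such as it is, lies entirely in the stabilizer calculation above; once that identification is made, the rest is machinery supplied by Lemma~\ref{Lemma:equi_stab}, which also automatically gives uniqueness of $\pi$ on the orbit.
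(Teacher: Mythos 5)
Your proposal is correct and follows essentially the same route as the paper: identify $\Orb_G((S,\Cl_S(K)))$ as a transitive $G$-set, observe that the stabilizer of the basepoint is exactly $N_G(S,K)$, and invoke Lemma~\ref{Lemma:equi_stab}. The only difference is that you spell out the stabilizer computation (correctly) where the paper simply asserts it.
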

\begin{proof}
    By Lemma~\ref{Lemma:equi_stab}, we need $P$ to be closed under the stabilizer of the input. But the generalized normalizer $N_G(S,K)$ is precisely this stabilizer.
\end{proof}

\subsection{Proof of Theorem~\ref{Theorem:partial_SBS_complements}}
\label{app:proof_partial_SBS_complements}
\begin{manualtheorem}{\ref*{Theorem:partial_SBS_complements}}
    Let $G$ be a group and $S$ and $K$ be subgroups $K\leq S\leq G$. Let $P$ be a $G$-equivariant $K$-partial SBS. Then we can choose an ideal $P$ (exact and transitive under $S$) if and only if $N_S(K)/K$ has a complement in $N_{N_G(S,K)}(K)/K$.
\end{manualtheorem}
\begin{proof}
    Let $P=Su$ where $u$ has symmetry $\Stab_S(u)=K$. We can define an action of any coset $N_G(S,K)/K$ on $u$ as just the action of a coset representative on $u$. This is consistent since $u$ is invariant under $K$. In particular, note that $K$ is a normal subgroup of $N_S(K)$ so $N_S(K)/K$ is a quotient group. Let $B'=(N_S(K)/K)u$. Since $u$ is in a $K$-partial SBS, we must have $su\neq u$ for any $s\in S-K$. Hence, for any coset $gK\in N_S(K)/K$, $gu\neq u$ if $g\notin K$. Therefore, $B'$ must be a SBS for $N_S(K)/K$.

    Next, consider any coset $gK$ in $N_{N_G(S,K)}(K)/K$. Then we know $gu\in Su$ so $gu=su$ for some $s\in S$. Since $K$ was a symmetry of $u$, $gKg^{-1}=sKs^{-1}$ is a symmetry of $gu=su$. So the stabilizer of $su$ must be $sKs^{-1}=K$. Hence, $s$ must be in $N_S(K)$. Therefore the action of $gK$ on $u$ gives us an element of $B'=(N_S(K)/K)u$. Hence $B'$ is $N_{N_G(S,K)}(K)/K$-equivariant.
    
    By Theorem \ref{Theorem:complement}, the existence of an ideal $N_{N_G(S,K)}(K)/K$-equivariant SBS for $N_S(K)/K$ implies that $N_S(K)/K$ has a complement in $N_{N_G(S,K)}(K)/K$.

    For the converse direction, suppose that $A$ is a complement of $N_S(K)/K$ in $N_{N_G(S,K)}(K)/K$. Note the elements of $A$ are cosets of $K$ so we can define a set of elements of $N_G(S,K)$ as
    \[H=\bigcup_{C\in A}C.\]
    Define $P=\Orb_S(H)$. We claim that $P$ is a transitive exact equivariant partial SBS.
    
    We first show that $P$ is exact $K$-partial symmetry breaking. Consider $s\in S$. We can write
    \[sH=\bigcup_{C\in A}sC=\bigcup_{C\in sA}C.\]
    Now we see if $s\in K$, then since $K$ is the identity in the quotient group $sA=A$. Hence $sH=H$ in this case. If $s\in N_S(K)-K$, then $sK$ is not the identity in $N_S(K)/K$. But $A$ is a complement so $sA\neq A$ implying $sH\neq H$. Finally, if $s\notin N_S(K)$ then $sK\notin N_{N_G(S,K)}(K)/K$. So $sH\not\subset N_{N_G(S,K)}(K)$. But $H\subset N_{N_G(S,K)}(K)$ so $sH\neq H$. Hence, $\Stab_S(H)=K$ and since the rest of $P$ is just the orbit of $H$, stabilizers of the other elements are in $\Cl_S(K)$. Hence, $P$ as we constructed is an exact $K$-partial SBS.

    For equivariance consider any $n\in N_G(S,K)$ giving a coset
    \[nH=\bigcup_{C\in A}nC=\bigcup_{C\in nA}C.\]
    If $n\in N_{N_G(S,K)}(K)$ then since $A$ is a complement, $(nK)=(sK)(aK)$ for some $sK\in N_S(K)/K$ and $aK\in A$, so $nA=(nK)A=(sK)(aK)A=(sK)A=sA$ for some $s\in N_S(K)\subset S$. Hence, $nH=sH$ for some $s\in S$. If $n\notin N_{N_G(S,K)}(K)$, then there is some $s$ so that $nKn^{-1}=sKs^{-1}$. Therefore, $s^{-1}nKn^{-1}s=K$ so $s^{-1}n\in N_{N_G(S,K)}$. But we saw before that this means there is some $s'$ such that $s^{-1}nH=s'H$. Thus, $nH=ss'H$ and $ss'\in S$. So $nH\in P$ so $P$ is indeed closed under action by $N_G(S,K)$.
\end{proof}

\subsection{Proof of Corollary~\ref{corollary:partial_SBS_nonideal}}
\label{app:proof_partial_SBS_nonideal}
\begin{manualcorollary}{\ref*{corollary:partial_SBS_nonideal}}
    Let $G$ be a group, $S$ a subgroup, and $K$ a subgroup of $S$. Let $K'$ be a subgroup of $K$ and $M$ a subgroup of $N_G(S,K)\cap N_G(S,K')$ which contains $S$. Suppose $P$ is a $G$-equivariant $K$-partial SBS for $S$ which is transitive under $N_G(S,K)$. We can choose $P$ such that $\Stab_{S}(p)\in\Cl_{N_G(S,K)}(K')$ for all $p$ and all $M$-orbits in $P$ are transitive under $S$ if and only if $N_S(K')/K'$ has a complement in $N_M(K')/K'$. Further, such a $P$ has
    \[\Deg_{S,K}(P)\leq|K/K'|\cdot|N_G(S,K)/M|.\]
\end{manualcorollary}

\begin{proof}
    Suppose we had such a $P$. Pick some $p\in P$ such that $\Stab_S(p)=K'$. Since $M$-orbits are transitive under $S$, we have $Mp=Sp$. Let $P'=Mp$. We can check then that this is a $M$-equivariant set. Further, since $M\subset N_G(S,K')$, we see that this is an exact $K'$-partial symmetry breaking set. Hence, it is an ideal $M$-equivariant $K'$-partial SBS. Also note that since $M\subset N_G(S,K')$, we have $M=N_M(S,K')$. So by Theorem~\ref{Theorem:partial_SBS_complements}, $N_S(K')/K'$ must have a complement in $N_M(K')/K'$.

    Conversely, suppose $N_S(K')/K'$ has a complement in $N_M(K')/K'$. Again, we note $M=N_M(S,K')$ so by Theorem~\ref{Theorem:partial_SBS_complements} we have an ideal $M$-equivariant $K'$-partial SBS for $S$. We can lift this to $G$-equivariance by taking the orbit under $N_G(S,K)$.

    To see the order of such a $P$, we consider the $S$-orbits. Let $T$ be a transversal of $S/K$. For each $S$-orbit, we can pick some $p$ in that orbit so that $\Stab_S(p)\leq K$. We put the elements of $Kp$ in our $P_t$. Within each $S$-orbit, any $p'$ can be written as $sp$ for some $s\in S$ and any $s$ is uniquely written as $s=tk$ for some $t\in T$ and $k\in K$. So $p'=tkp$. However, note that any other $s'$ where $p'=s'p=sp$ can be written as $s'=sk'$ for some $k'\in \Stab_S(p)$. Hence, $p'$ is uniquely written as $p'=t(kp)$ since $kk'p=kp$. So each $S$-orbit contributes $|K/\Stab_S(p)|$ elements to $P_t$. However, since $\Stab_S(p)\in\Cl_{N_G(S,K)}(K')$, we must have $|K/\Stab_S(p)|=|K/K'|$. Finally, we know each $S$-orbit is also an $M$-orbit, since $P$ is transitive under $N_G(S,K)$, there are at most $|N_G(S,K)/M|$ different $S$-orbits. So
    \[\Deg_{S,K}(P)=|P_t|\leq |K/K'|\cdot |N_G(S,K)/M|.\]
\end{proof}

\subsection{Proof of Lemma~\ref{Lemma:gen_norm_form}}
\label{app:proof_gen_norm_form}
\begin{manuallemma}{\ref*{Lemma:gen_norm_form}}
    We have the following formula
    \[N_G(S,K)=S(N_G(S)\cap N_G(K)).\]
\end{manuallemma}
\begin{proof}
    For any $n\in N_G(S,K)$, by definition we must have $nKn^{-1}=sKs^{-1}$ for some $s\in S$. Hence, $s^{-1}nKn^{-1}s=K$ so $s^{-1}n\in N_G(K)$. Further, clearly $s\in N_G(S)$ and $n\in N_G(S)$ so also $s^{-1}n\in N_G(S)$. Therefore, $s^{-1}n\in N_G(S)\cap N_G(K)$. So $n=s(s^{-1}n)\in S(N_G(S)\cap N_G(K))$. Hence
    \[N_G(S,K)\subseteq S(N_G(S)\cap N_G(K)).\]

    Next, consider any $n'=sn\in S(N_G(S)\cap N_G(K))$ where $s\in S, n\in N_G(S)\cap N_G(K)$. Since $s\in N_G(S), n\in N_G(S)$ clearly $sn\in N_G(S).$ Further, we find
    \[snK(sn)^{-1}=s(nKn^{-1})s^{-1}=sKs^{-1}\in\Cl_S(K).\]
    Therefore by definition $sn\in N_G(S,K)$. So also
    \[S(N_G(S)\cap N_G(K))\subseteq N_G(S,K).\]

    Hence, we find that 
    \[N_G(S,K)=S(N_G(S)\cap N_G(K)).\]
\end{proof}
\newpage
\section{Classification of full symmetry breaking cases for \texorpdfstring{$O(3)$}{O(3)}}
\label{app:full_classification}
Here we tabulate the cases for full symmetry breaking for the finite subgroups of $O(3)$. These are the point groups and the normalizers are tabulated in the International Tables for Crystallography in Hermann–Mauguin notation \cite{koch2006normalizers}. We have translated these to Sch\"{o}nflies notation in Table~\ref{tab:normalizers}.

\begin{table}[h!]
    \centering
    \caption{Normalizers of the point groups in Sch\"{o}nflies notation. Note we have the equivalences $C_1=1$, $S_2=C_i$, $C_{1h}=C_{1v}=C_s$, $D_1=C_2$, $D_{1h}=C_{2v}$, $D_{1d}=C_{2h}$.}

    \vskip 0.15in
    
    \begin{tabular}{|c|c|}
        \hline
        Normalizer: & Groups: \\\hline
        $K_h$ & $1$, $C_i$\\
        $D_{\infty h}$ & $C_n$, $S_{2n}$, $C_{nh}$ $\forall n\geq2$; $C_s$\\
        $D_{(2n)h}$ & $C_{nv}$, $D_{nd}$, $D_{nh}$ $\forall n\geq 2$; $D_n$ $\forall n\geq 3$\\
        $I_h$ & $I$, $I_h$\\
        $O_h$ & $D_2$, $D_{2h}$, $T$, $T_d$, $T_h$, $O$, $O_h$\\
        \hline
    \end{tabular}
    \label{tab:normalizers}
\end{table}

In the following subsections we do casework by normalizers. For each, subgroup with a given normalizer, we give a valid complement by name if it exists. In some normalizers, the name of a subgroup is not sufficient to identify it. This is because there are multiple copies of subgroups with that name in the normalizer. In such cases, we must identify which copy of the subgroup we care about. To do so, we give the normalizers in terms of a group presentation found with the help of \cite{GAP4}. Group presentations are essentially a set of generators and relations among the generators. We can then specify any specific subgroups of the normalizer using the generators of the normalizer.

\subsection{Normalizer: \texorpdfstring{$K_h$}{Kh}}

All the groups with this normalizer do have complements. Note that in Sch\"{o}nflies notation, $K_h$ is just the entire group $O(3)$. The only subgroups with $O(3)$ as normalizer are the trivial group $C_1$ and inversion $C_i$. Clearly for the trivial group the complement is $O(3)$. For inversion, the complement is just $SO(3)$.

\begin{table}[h!]
    \centering
    \caption{Groups with normalizer $K_{h}=O(3)$ and their complements.}

    \vskip 0.15in
    
    \begin{tabular}{|c|c|}
        \hline
        Group & Complement \\\hline
        $1$ & $K_h=O(3)$ \\
        $C_i$ & $K=SO(3)$ \\
        \hline
    \end{tabular}
    \label{tab:Kh}
\end{table}

\subsection{Normalizer: \texorpdfstring{$D_{\infty h}$}{DIh}}
\label{app:DIh}

Unfortunately, most of the groups in this case have no complements. We provide a proof of this fact here. We begin by showing no nontrivial cyclic group has a complement in $C_{\infty}$ (which is $SO(2)$ in Sch\"{o}nflies notation).

\begin{table}[h!]
    \centering
    \caption{Groups with normalizer $D_{\infty h}$ and their complements.}

    \vskip 0.15in
    
    \begin{tabular}{|c|c|}
        \hline
        Group & Complement \\\hline
        $C_s$ & $C_{\infty v}$, $D_{\infty}$\\
        $C_n$, $S_{2n}$, $C_{nh}$ $\forall n\geq2$ & None\\
        \hline
    \end{tabular}
    \label{tab:DIh}
\end{table}

\begin{lemma}
    \label{Lemma:cyclic_no_complement_CI}
    Let $C_n$ be a cyclic group of order $n\geq2$ which is embedded in $C_\infty$. Note that it is a normal subgroup since all groups here are abelian. Then $C_n$ does not have a complement in $C_\infty$.
\end{lemma}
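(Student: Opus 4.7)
The plan is to argue by contradiction. Suppose $H \leq C_\infty$ were a complement of $C_n$, so that $H \cap C_n = \{e\}$ and every element of $C_\infty$ factors as $hs$ with $h \in H$ and $s \in C_n$. I will exhibit a nonidentity element forced to lie in $H \cap C_n$, producing the desired contradiction.

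The key property to exploit is that $C_\infty \cong \mathbb{R}/\mathbb{Z}$ (equivalently $SO(2)$) is divisible: every element admits an $n$th root. First pick a generator $\alpha$ of $C_n$, and then choose any $g \in C_\infty$ with $g^n = \alpha$. By the complement assumption, write $g = hs$ with $h \in H$ and $s \in C_n$. Since $C_\infty$ is abelian and $s$ has order dividing $n$, I obtain
\[
\alpha \;=\; g^n \;=\; h^n s^n \;=\; h^n.
\]
The right-hand side lies in $H$ while the left-hand side is a nonidentity element of $C_n$, so $\alpha \in H \cap C_n = \{e\}$, contradicting $n \geq 2$.

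The argument is very short, so there is no serious obstacle. The only fact to double-check is the divisibility of $C_\infty$, which is immediate from its parametrization by angles: given any rotation by $\theta$, the rotation by $\theta/n$ provides an $n$th root. This also explains intuitively why complements fail, since the presence of $n$th roots forces any subgroup meeting $C_n$ trivially to miss the element $\alpha$ entirely, hence to be too ``small'' to multiplicatively cover $C_\infty$ together with $C_n$.
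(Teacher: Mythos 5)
Your proof is correct and uses the same key idea as the paper's first proof: divisibility of $C_\infty$ forces the $C_n$-component of any $n$th power to vanish. The only (cosmetic) difference is that you apply this to a single well-chosen element — an $n$th root of a generator of $C_n$ — to land a nonidentity element of $C_n$ inside $H$ directly, whereas the paper applies it to every element to conclude $H=C_\infty$; both contradict $H\cap C_n=\{e\}$ in the same way.
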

\begin{proof}\textbf{1}
    Suppose there was a complement $H$. By definition, for any $g\in C_\infty$ we have $g=ch$ for unique $h\in H$ and $c\in C_n$.

    Next, note that $C_\infty$ is a divisible group. In particular, for any $g\in C_\infty$, there exists some $g'$ such that $g=(g')^n$. Let $g'$ be uniquely written as $h'c'$ for some $h'\in H$ and $c'\in C_n$. Then we have
    \[g=(c'h')^n=(c')^n(h')^n=(h')^n\]
    where we noted $(c')^n=e$. So $g$ is uniquely written as $g=ch$ for $c=e$ and $h=(h')^n$. But this holds for all $g$ so all elements of $C_\infty$ are just elements of $H$. This contradicts the fact that $H\cap C_n=\{e\}$.
\end{proof}
\begin{proof}\textbf{2}
    For those familiar with exact sequences, one can consider the following alternative proof. Consider short exact sequence
    \[1\to C_n\to C_\infty\to C_\infty/C_n\to1.\]
    We can check that $C_\infty/C_n\cong C_\infty$. Existence of a complement for $C_n$ implies the above sequence is split, which by the splitting lemma \cite{hatcher2002algebraic} implies $C_\infty\cong C_\infty\oplus C_n$, a contradiction.
\end{proof}
We can now extend the lemma above to there being no complement of any cyclic group in $D_\infty$ (which is $O(2)$ in Sch\"{o}nflies notation).
\begin{lemma}
    \label{Lemma:cyclic_no_complement_DI}
    Let $C_n$ be a cyclic group of order $n\geq2$ which is embedded in $D_\infty$ such that the rotation axis aligns with the infinite rotation axis in $D_\infty$. Note that it is a normal subgroup since all groups here are abelian. Then $C_n$ does not have a complement in $D_\infty$.
\end{lemma}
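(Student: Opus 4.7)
The plan is to reduce this statement to Lemma~\ref{Lemma:cyclic_no_complement_CI} by restricting a hypothetical complement to the rotation subgroup $C_\infty$. Suppose for contradiction that $H \leq D_\infty$ is a complement of $C_n$, so that $H \cap C_n = \{e\}$ and every element of $D_\infty$ is uniquely a product $ch$ with $c \in C_n$ and $h \in H$. The goal will be to show that $H' := H \cap C_\infty$ is a complement of $C_n$ in $C_\infty$, contradicting the previous lemma.

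First I would verify the trivial intersection: $H' \cap C_n \subseteq H \cap C_n = \{e\}$, which is immediate. The key step is the factorization $C_\infty = C_n H'$. For this I take an arbitrary rotation $g \in C_\infty$ and write it in $D_\infty$ as $g = c h$ with $c \in C_n$ and $h \in H$. Since $C_n \subset C_\infty$ (this uses the hypothesis that the rotation axis of $C_n$ is aligned with the infinite rotation axis, so $C_n$ really sits inside the rotation subgroup) and $C_\infty$ is itself a group, it follows that $h = c^{-1}g \in C_\infty$, hence $h \in H'$. Therefore $g \in C_n H'$, establishing $C_\infty \subseteq C_n H'$; the reverse containment is immediate.

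At that point $H'$ is a complement of $C_n$ in $C_\infty$, which contradicts Lemma~\ref{Lemma:cyclic_no_complement_CI}. I do not anticipate a major obstacle: the proof is essentially a one-line restriction argument once one notices that $C_n \subseteq C_\infty$ under the stated alignment hypothesis. The only subtlety worth flagging explicitly in the write-up is precisely that hypothesis, since without it $C_n$ could contain an order-$2$ reflection and the inclusion $C_n \subseteq C_\infty$ would fail; the lemma statement has been phrased to rule this out.
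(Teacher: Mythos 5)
Your argument is correct and is essentially identical to the paper's own proof: both define $H' = H \cap C_\infty$, observe $H' \cap C_n = \{e\}$, and use $h = c^{-1}g \in C_\infty$ to show $H'$ is a complement of $C_n$ in $C_\infty$, contradicting Lemma~\ref{Lemma:cyclic_no_complement_CI}. Your explicit remark about why the alignment hypothesis guarantees $C_n \subseteq C_\infty$ is a nice clarification the paper leaves implicit.
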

\begin{proof}
    Suppose there was a complement $H$. Consider $H'=H\cap C_\infty$. Clearly, we have $H'\cap C_n=\{e\}$. Next, for any $g\in C_\infty$, there is unique $c\in C_n$ and $h\in H$ such that $g=ch$. But $h=c^{-1}g\in C_\infty$ so $h\in H'$. So $H'$ is a complement of $C_n$ in $C_\infty$. But this contradicts Lemma~\ref{Lemma:cyclic_no_complement_CI}.
\end{proof}
Finally, we can prove that no subgroups except for $C_s$ in this case have complements. Note here that $K_h$ is just $O(3)$ and $K$ just $SO(3)$ in Sch\"onflies notation.
\begin{theorem}
    Consider any point group $A$ which has normalizer $D_{\infty h}$ in $K_h$. If $A$ has a nontrivial pure rotation, then it has no complement in $D_{\infty h}$.
\end{theorem}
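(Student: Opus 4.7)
The plan is to reduce to Lemma~\ref{Lemma:cyclic_no_complement_CI} via a single structural observation covering all three cases uniformly. By Table~\ref{tab:DIh}, the groups $A$ with normalizer $D_{\infty h}$ that contain a nontrivial rotation are exactly $C_n$, $C_{nh}$, and $S_{2n}$ for $n\geq 2$; each of these sits inside the abelian subgroup $C_{\infty h} := C_\infty \cup \sigma_h C_\infty$ of $D_{\infty h}$ (which is abelian because $\sigma_h$ is central in $D_{\infty h}$). In every case $R := A\cap C_\infty$ is a nontrivial finite cyclic group $C_m$ with $m\geq 2$.

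Assume toward contradiction that $H$ is a complement of $A$ in $D_{\infty h}$. For each $g\in C_\infty$, uniquely write $g = a h$ with $a\in A,\ h\in H$; since both $g$ and $a$ lie in $C_{\infty h}$, the element $h = a^{-1}g$ does as well. I would then partition
\[
G_1 = \{g\in C_\infty : a \in R\}, \qquad G_2 = \{g \in C_\infty : a \in A\setminus R\},
\]
noting that $A\setminus R \subseteq \sigma_h C_\infty$. A routine check using abelianness of $C_{\infty h}$ and $\sigma_h^2 = e$ gives $G_1 G_1 \subseteq G_1$, $G_2 G_2 \subseteq G_1$, $G_1 G_2 \subseteq G_2$, and closure under inverses, so $G_1 = R\cdot(H\cap C_\infty)$ is a subgroup of $C_\infty$ and $G_2$ is either empty or the unique nontrivial coset of $G_1$.

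The crux is then to rule out $G_2 \neq \emptyset$: otherwise $G_1$ would be an index-$2$ subgroup of $C_\infty \cong \mathbb{R}/2\pi\mathbb{Z}$, but this group is divisible and hence admits no such subgroup. Therefore $G_2 = \emptyset$ and $C_\infty = R\cdot (H\cap C_\infty)$. Combined with $R\cap(H\cap C_\infty) \subseteq A\cap H = \{e\}$, this exhibits $H\cap C_\infty$ as a complement of $R = C_m$ in $C_\infty$, directly contradicting Lemma~\ref{Lemma:cyclic_no_complement_CI}.

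The main obstacle I anticipate is the bookkeeping in the partition step: one has to track which of the two cosets $C_\infty$ and $\sigma_h C_\infty$ both $a$ and $h$ belong to in each product, and use $\sigma_h^2 = e$ to see that pairs of improper elements multiply back into $C_\infty$. Once that is handled, divisibility of $C_\infty$ together with Lemma~\ref{Lemma:cyclic_no_complement_CI} dispatches all three cases $C_n$, $C_{nh}$, $S_{2n}$ simultaneously with no subcasing.
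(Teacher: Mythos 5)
Your proof is correct, and while it shares the paper's overall strategy --- assume a complement $H$ exists, split elements by proper/improper parity, intersect $H$ with the rotation subgroup, and use divisibility of $C_\infty$ to force a contradiction with Lemma~\ref{Lemma:cyclic_no_complement_CI} --- the execution is genuinely leaner. The paper decomposes $A$, $H$, and all of $D_{\infty h}$ by inversion parity inside the full nonabelian group, must invoke normality of $A$ in $D_{\infty h}=AH$ to rearrange products, shows the subgroup $A_eH_e$ is either $D_\infty$ or $C_\infty$, and therefore needs the auxiliary Lemma~\ref{Lemma:cyclic_no_complement_DI} to dispatch the $D_\infty$ branch. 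You instead observe that every group in question ($C_n$, $C_{nh}$, $S_{2n}$, per Table~\ref{tab:DIh}) lies in the abelian subgroup $C_{\infty h}=C_\infty\cup\sigma_h C_\infty$, which lets you confine the entire argument to $C_\infty$: the partition $G_1\sqcup G_2$ of $C_\infty$ according to whether the (unique) $A$-component lies in $R=A\cap C_\infty$ obeys the multiplication table you state because abelianness of $C_{\infty h}$ replaces the paper's normality bookkeeping and because $A\setminus R$, when nonempty, is a single coset of $R$; the index-$2$ possibility is killed by divisibility (every square lands in $G_1$); and the contradiction then comes directly from Lemma~\ref{Lemma:cyclic_no_complement_CI}, with no case split and no need for the $D_\infty$ lemma. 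Both proofs rely on the same external input, namely the classification in Table~\ref{tab:normalizers} of which point groups have normalizer $D_{\infty h}$, so nothing is lost in generality; what your route buys is the elimination of one lemma and of the normal-subgroup manipulations, at the cost of the small preliminary check that $\sigma_h$ is central in $D_{\infty h}$ so that $C_{\infty h}$ is indeed abelian and contains all three families of groups.
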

\begin{proof}
    First, note that $K_h$ is the direct product of $K$ and inversion $C_i$. Suppose $A$ had a complement $H$.

    We can split $A=A_e\sqcup A_i$ where $A_e=A\cap K$ is the subgroup of pure rotations and $A_i$ is a coset consisting of elements with an inversion. Similarly, we can split $H=H_e\sqcup H_i$ and $D_{\infty h}=D_\infty\sqcup D_i$ into subgroups of pure rotations and coset of elements with inversions.

    We claim the elements of $G=A_eH_e$ form a group. Consider any $a,a'\in A_e$ and $h,h'\in H_e$. Since $A$ is a normal subgroup of $D_{\infty h}=AH$, we have $hA=Ah$ so $ha'=a''h$ for some $a''\in A$. But since $h,a'\in K$, we have $a''h\in K$ so $a''\in K$. Hence $a''\in A_e$. Therefore,
    \[(ah)(a'h')=a(ha')h'=a(a''h)h'=(aa'')(hh')\in A_eH_e.\]
    So $G=A_eH_e$ is a group.

    Next, since $H$ is a complement of $A$, clearly $A_e\cap H_e=\{e\}$. Since $G\leq AH=D_{\infty h}$, any $g=ah$ for unique $a\in A$ and $h\in H$ which by construction of $G$ are $a\in A_e$ and $h\in H_e$. So $H_e$ is certainly a complement of $A_e$ in $G$.
    
    Now, we claim either $G=D_\infty$ or $G=C_\infty$. For any $g\in D_\infty$, since $H$ is a complement, there is a unique $a\in A$ and $h\in H$ such that $g=ah$. In particular, we note we must either have $a\in A_e$ and $h\in H_e$ or $a\in A_i$ and $h\in H_i$ to have the right inversion parity. One possibility is $G=A_eH_e=D_\infty$. For the other possibility, suppose $D_\infty-G$ is nonempty. Fix $g\in D_\infty-G$ and consider any $g'\in D_\infty-G$. Then $g=ah$ and $g'=a'h'$ where $a,a'\in A_i$ and $hh'\in H_i$. Now, note that $a^{-1}a'$ is the combination of 2 elements with odd parity in $i$ so $a^{-1}a'\in A_e$. Next, since $A$ is a normal subgroup, we have $h^{-1}A=Ah^{-1}$ and in particular, $h^{-1}a^{-1}a'=a''h^{-1}$ for some $a''\in A$. But since $h$ has odd parity and $a^{-1}a'$ has even parity in inversion, $a''$ must have even parity in inversion. Hence, we have
    \[g^{-1}g'=(ah)^{-1}(a'h')=h^{-1}a^{-1}a'h'=a''(h^{-1}h').\]
    Since $h,h'$ both have odd parity, $h''=h^{-1}h'$ has even parity so in fact $g^{-1}g'=a''h''$ where $a''\in A_e$ and $h''\in H_e$. Therefore, $g^{-1}g'\in G$ so $D_\infty-G=gG$ is just a $G$-coset of $D_\infty$. We can similarly also show that $D_\infty-G=Gg$. Now, we claim $gG\cap C_\infty=\phi$. Suppose not. Then there is some $c\in gG\cap C_\infty$. Since $C_\infty$ is a divisible group, there is some $c'$ where $c=(c')^2$. But we must have $(c')^2\in G$, a contradiction. Hence $gG\cap C_\infty=\phi$ so $G\cap C_\infty=C_\infty$. To conclude, we must have $g\in D_\infty-C_\infty$ and it is clear $gC_\infty$ would generate the remaining elements in $D_\infty$. So $G=C_\infty$ in this case.
    
    From Table~\ref{tab:normalizers}, we can see that $A_e$ must in fact be a nontrivial cyclic group. But then the above implies that $H_e$ is a complement of this cyclic group in either $G=D_\infty$ or $G=C_\infty$, which contradict Lemma~\ref{Lemma:cyclic_no_complement_DI} and Lemma~\ref{Lemma:cyclic_no_complement_CI} respectively. So $A$ cannot have a complement in $D_{\infty h}$.
\end{proof}

\subsection{Normalizer: \texorpdfstring{$D_{(2n)h}$}{D(2n)h}}
\label{app:D2nh}

All groups with this normalizer do have complements. We list the subgroup and its complement in Table~\ref{tab:D2nh}. One presentation of $D_{(2n)h}$ is
\[\langle a,b,m|a^{2n},b^2,m^2,(ab)^2,(am)^2,(bm)^2\rangle.\]
Figure~\ref{fig:D2nh} depicts an example of a $D_{10h}$ object. The element $a$ correspond to a $2\pi/10$ rotation about the blue vertical axis, $b$ corresponds to a $\pi$ rotation about the red axis, and $m$ corresponds to a reflection across the mirror plane shown in orange.

\begin{figure}[h!]
    \centering
    \includegraphics[width=0.5\textwidth]{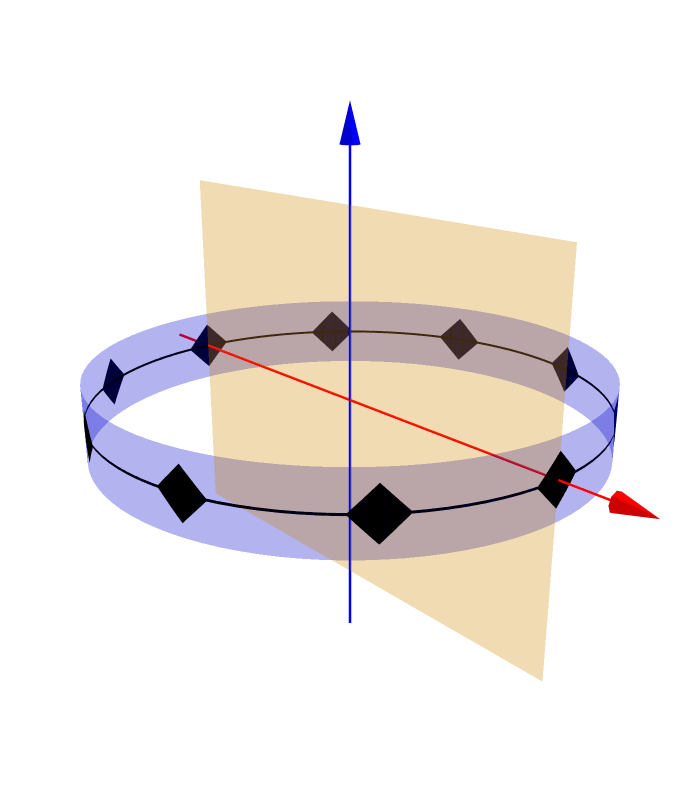}
    \caption{Object with symmetry$D_{10h}$. We can identify generator $a$ as the 10-fold rotation about the blue axis, generator $b$ as the 2-fold rotation about the red axis, and $m$ as the reflection over the plane shown in orange.}
    \label{fig:D2nh}
\end{figure}

\begin{table}[h!]
    \centering
    \caption{Groups with normalizer $D_{(2n)h}$ and their complements.}

    \vskip 0.15in
    
    \begin{tabular}{|c|c|c|c|}
        \hline
        Group & Generators of group & Complement & Generators of a complement \\\hline
        $C_{nv}$ & $a^2,m$ & $C_{2v}$ & $am,bm$ \\
        $D_{nd}$ & $a^2,abm,m$ & $C_s$ & $bm$ \\
        $D_{nh}$ & $a^2,b,m$ & $C_s$ & $am$\\
        $D_n$ & $a^2,b$ & $C_{2v}$ & $am,bm$\\
        \hline
    \end{tabular}
    \label{tab:D2nh}
\end{table}

\subsection{Normalizer: \texorpdfstring{$I_h$}{Ih}}

This case is simple, we either have $I$ or $I_h$. Clearly we just need to add inversion to get a complement in the former case and in the latter case we can just take the trivial group.

\begin{table}[h!]
    \centering
    \caption{Groups with normalizer $I_{h}$ and their complements.}

    \vskip 0.15in
    
    \begin{tabular}{|c|c|}
        \hline
        Group & Complement \\\hline
        $I$ & $C_i$ \\
        $I_h$ & $1$ \\
        \hline
    \end{tabular}
    \label{tab:Ih}
\end{table}

\subsection{Normalizer: \texorpdfstring{$O_h$}{Oh}}
\label{app:Oh}
All subgroups in this case have complements as well. One presentation of $O_h$ is
\[\langle a,b,i| a^4,b^4,i^2,(aba)^2,(ab)^3,iaia^{-1},ibib^{-1} \rangle.\]
Here, $a$ and $b$ are $\pi/2$ rotations about perpendicular axes and $i$ is just inversion.

\begin{table}[h!]
    \centering
    \caption{Groups with normalizer $O_{h}$ and their complements.}

    \vskip 0.15in
    
    \begin{tabular}{|c|c|c|c|}
        \hline
        Group & Generators of group & Complement & Generators of a complement \\\hline
        $D_2$ & $a^2,b^2$ & $D_{3d}$ & $ab,ba^2,i$\\
        $D_{2h}$ & $a^2,b^2,i$ & $D_3$ &  $ab,ba^2$\\
        $T$ & $ab,ba$ & $S_4$ & $a^2b,i$\\
        $T_d$ & $ab,ba,ai$ & $C_2$ & $a^2b$\\
        $T_h$ & $ab,ba,i$ & $C_2$ & $a^2b$\\
        $O$ & $a,b$ & $C_i$ & $i$\\
        $O_h$ & $a,b,i$ & $1$ & $\phi$\\
        \hline
    \end{tabular}
    \label{tab:Oh}
\end{table}
\newpage
\section{Equivariant full SBS better than exact partial SBS}
\label{app:full_better_exact}
We provide an outline of the construction of the counterexample. It is easiest to explain this by introducing the concept of a wreath product on groups. 
\begin{definition}[Wreath product]
    Let $H$ be a group with a group action on some set $\Omega$. Let $A$ be another group. We can define a direct product group indexed by $\Omega$ as the set of sequences $(a_\omega)_{\omega\in\Omega}$ where $a_\omega\in A$. The action of $H$ on $\Omega$ induces a semidirect product by reindexing. In particular, for all $h\in H$ and sequences in $A^\Omega$ we define
    \[h\cdot(a_\omega)_{\omega\in\Omega}=(a_{h^{-1}\omega})_{\omega\in\Omega}.\]
    The resulting group is the unrestricted wreath product and denoted as $A\Wr{\Omega} H$.

    If rather than a direct product group $A^\Omega$, we restrict ourselves to a direct sum where all but finitely many elements in our sequence is not the identity, then we get the restricted wreath product denoted as $A\wr{\Omega} H$.

    Note that the direct sum and direct product are the same for finite $\Omega$ so the restricted and unrestricted wreath products also coincide in those cases.
\end{definition}

Consider the space $\Omega=\{1,-1\}$ and an action of $D_4$ on $\Omega$ corresponding to the $A_2$ representation. Intuitively, if we think of $D_4$ as the rotational symmetries of a square in the $xy$-plane, this corresponds to how the $z$ coordinate transforms by flipping signs. Define a group $G'$ as $G'=C_2\wr{\Omega} D_4$. This is a group of order $32$ and is $\texttt{SmallGroup(32,28)}$ in the Small Groups library \cite{GAP4}. One presentation of this group is
\begin{equation}
    \langle a,b,c|a^2, b^4, (ab)^4, c^2, bcb^{-1}c, (ac)^4 \rangle.
    \label{eqn:SG(32,28)}
\end{equation}
In this presentation, we can interpret $a,b$ as generators of $D_4$ and $c$ as the generator one copy of $C_2$.

Consider the group $G=G'\times G'$ defined using the direct product. We can write generators of $G$ as $a_1,b_1,c_1,a_2,b_2,c_2$ corresponding to two copies of those in the presentation given in \eqref{eqn:SG(32,28)} where generators with different indices commute. Define $S$ as the subgroup generated by $a_1,b_1^2,c_1,a_2,b_2^2,c_2$ and $K$ as the subgroup generated by $c_1c_2$.

We can check that $N_G(S)=N_G(S,K)=G$. It is also not hard to check that $a_1b_1,a_2b_2$ generate a complement for $S$ in $G$. Hence, by Theorem~\ref{Theorem:complement}, we know that an ideal $G$-equivariant SBS is possible for $S$. Hence, we know the size of the equivariant full SBS is $|S|=256$.

Next, suppose we wanted a $G$-equivariant exact partial SBS. We can always generate this partial SBS by taking the orbit of some element $p$ under action by $N_G(S,K)=G$ where $\Stab_S(p)=K$. We claim we must also have $\Stab_G(p)=K$. Suppose not, then there must be some $g\in\Stab_G(p)$ such that $g\notin S$. However, we can check through casework or brute force that for all such $g$, either $gKg^{-1}\neq K$ or $g^2\in S-K$. But would mean that there are elements not in $K$ which stabilize $p$ so $\Stab_S(p)\neq K$, a contradiction. Hence, no such $g$ can exist so $\Stab_G(p)=K$. Finally, by Orbit-Stabilizer theorem, this means the set must have size $|P|=|\Orb_G(p)|=|G|/|\Stab_G(p)|=(8^2\cdot2^4)/2=512$. This is larger than the equivariant full SBS.

In our supplementary material, we provide a script in \citet{GAP4} which verifies our claims above. In particular it performs a brute force check that $\Stab_S(p)=K$ implies $\Stab_G(p)=K$ for the group $G,S,K$ described above.
\newpage
\section{Experiments}
\label{app:experiment_details}
We provide some additional details on our experiments here. We also provide our code for running these experiments in the supplementary material.
\subsection{Triangular prism}
\label{app:prism_details}
\subsubsection{Obtaining an ideal SBS}
\label{app:full_SBS}
\begin{figure}[h!]
    \centering
    \subfloat[\label{fig:triangular_prism_D6h}]{\includegraphics[width=0.45\textwidth]{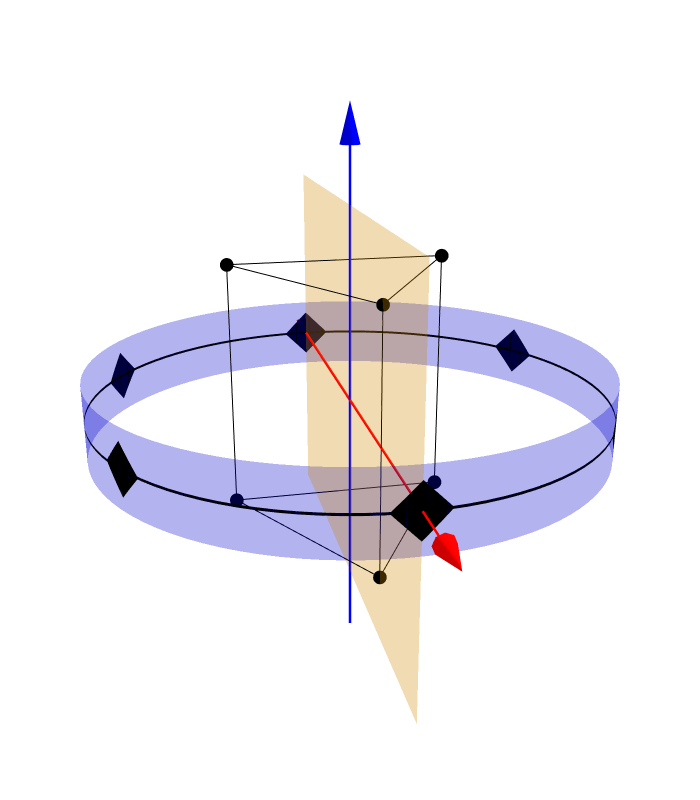}}
    \qquad
    \subfloat[\label{fig:triangular_prism_SBS}]{\includegraphics[width=0.45\textwidth]{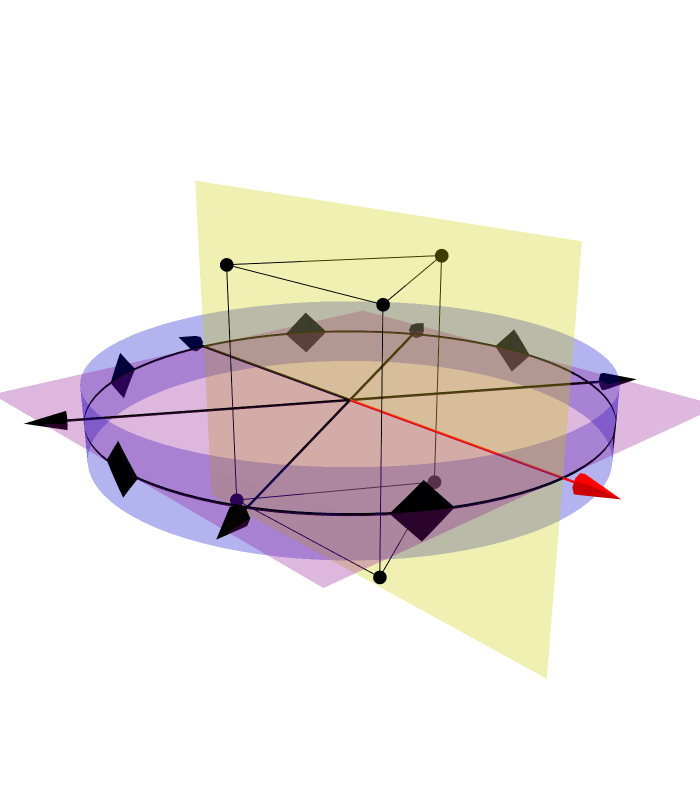}}
    \caption{(a) Triangular prism with $D_3$ symmetry and patterned cylinder with $D_{6h}$ symmetry. The generators are $a,b,m$ where $a$ is a $2\pi/6$ rotation about the blue axis, $b$ is a $\pi$ rotation about the red axis, and $m$ is a reflection across the orange plane. (b) An ideal symmetry breaking set for the triangular prism. A complement of $D_3$ in $D_{6h}$ is generated by the mirror planes shown here in yellow and purple. The vector in red is a symmetry breaking object with this complement as stabilizer. The orbit of this vector under the normalizer generates the other vectors shown in black.}
\end{figure}

Table~\ref{tab:D2nh} tells us $D_3$ is generated by $a^2,b$ and that a complement $H$ is generated by $am,bm.$ By Theorem~\ref{Theorem:complement}, we know that if we can pick some object $v$ with stabilizer $\Stab_{D_{6h}}(v)=H$, then the orbit of $v$ under $D_3$ gives an ideal equivariant SBS. The symmetry axis for $a$ and $b$ are depicted in Figure~\ref{fig:triangular_prism_D6h} and correspond to the $z$ and $x$ axes in the canonical orientation. The generators $am,bm$ of the complement are depicted as mirror planes in yellow and purple in Figure~\ref{fig:triangular_prism_SBS}.

In this case, we can see that the vector at the intersection of the mirror planes has the symmetries of the complement. This is depicted as the red vector in Figure~\ref{fig:triangular_prism_SBS}. One can further check it shares no symmetries with the triangular prism. The other $5$ arrows in black are the other symmetry breaking objects we obtain by taking the orbit of the red arrow under action by $D_{6h}$. In the canonical orientation, this red vector normalized to unit length is
\[(\cos(\pi/6),\sin(\pi/6),0)=(\sqrt{3}/2,1/2,0).\]

\subsubsection{Nonequivariant SBS}
\label{app:nonequivariant_SBS}
In addition to using an equivariant SBS as presented in the main paper, we also tried training with the non-equivariant SBS described in Section~\ref{sec:equivariant_SBS}. Recall that the symmetry breaking objects here are a vector pointing to one of the vertices of the triangle projected in the $xy$ plane and a vector pointing up or down corresponding to which triangle we pick from. We fix one pair of vectors as our symmetry breaking object and train our equivariant model to match it with a vertex.

As shown in Figure~\ref{fig:triangular_prism_nonequi}, our model is able to complete this. However, rotating the prism by $180^\circ$ and feeding this rotated prism along with our symmetry breaking object, we find that our model outputs a vector which does not point to any vertex. This is shown in Figure~\ref{fig:triangular_prism_nonequi_rotated}. Contrast this with the equivariant case in Figures~\ref{fig:triangular_prism_single_nonrotate} and \ref{fig:triangular_prism_single_rotated} where our model still produces a vector which points to a vertex of the prism.

\begin{figure}[h!]
    \centering
    \subfloat[\label{fig:triangular_prism_nonequi}]{\includegraphics[width=0.35\textwidth]{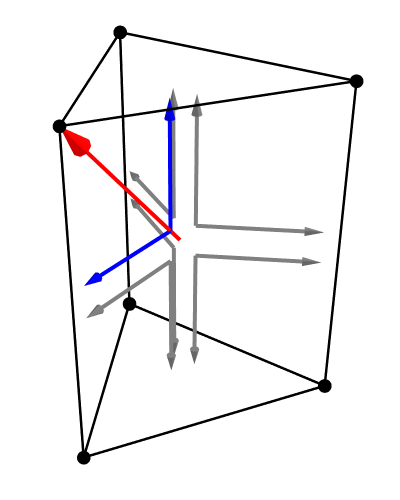}}
    \qquad\qquad\qquad
    \subfloat[\label{fig:triangular_prism_nonequi_rotated}]{\includegraphics[width=0.35\textwidth]{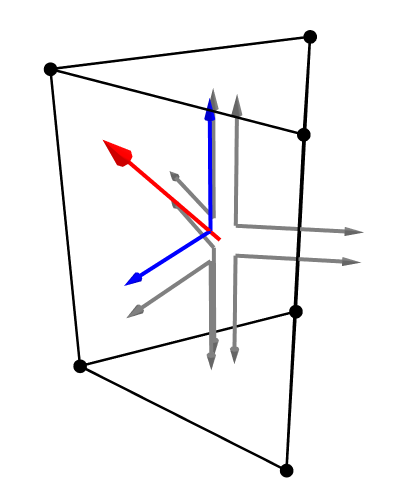}}\\
    \subfloat[\label{fig:triangular_prism_single_nonrotate}]{\includegraphics[width=0.35\textwidth]{triangular_prism/triangular_prism_single.png}}
    \qquad\qquad\qquad
    \subfloat[\label{fig:triangular_prism_single_rotated}]{\includegraphics[width=0.35\textwidth]{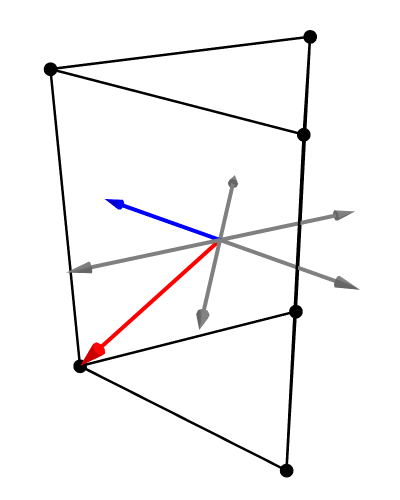}}
    \caption{(a) Output (red) generated by our model and symmetry breaking object (blue) given that is chosen from a non-equivariant SBS (b) Output (red) generated by our model and symmetry breaking object (blue) given when our prism is rotated by $180^\circ$ (c) Output (red) generated by our model and symmetry breaking object (blue) given that is chosen from an equivariant SBS (d) Output (red) generated by our model and symmetry breaking object (blue) given when our prism is rotated by $180^\circ$}
\end{figure}

\subsubsection{Nonideal SBS}
\label{app:nonideal_SBS}

We can modify the nonequivariant SBS into an equivariant one by adding the additional objects needed for closure under the normalizer. Doing so we see there are 2 $S$-orbits in this nonideal equivariant SBS shown in Figure~\ref{fig:triangular_prism_orbits}. This corresponds to a degeneracy of $2$ as defined in Section~\ref{app:nonideal_SBS}.

\begin{figure}[h!]
    \centering
    \subfloat[\label{fig:triangular_prism_orb1} $S$-orbit 1]{\includegraphics[width=0.35\textwidth]{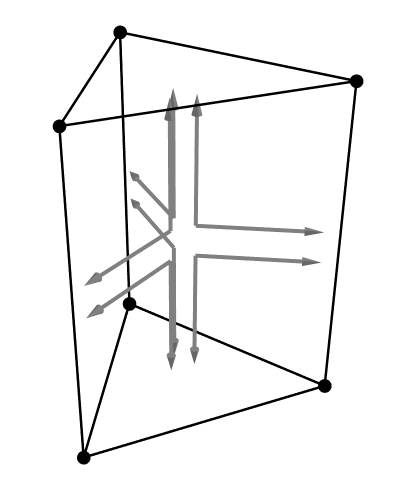}}
    \qquad\qquad\qquad
    \subfloat[\label{fig:triangular_prism_orb2} $S$-orbit 2]{\includegraphics[width=0.35\textwidth]{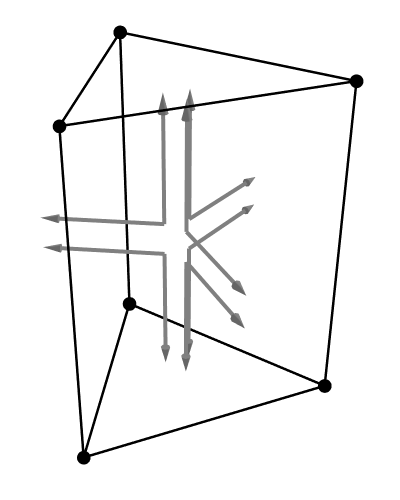}}
    \caption{(a) $S$-orbit corresponding to the original nonequivariant SBS (b) Additional $S$-orbit needed which makes the set closed under the normalizer.}
    \label{fig:triangular_prism_orbits}
\end{figure}

From the discussion in Section~\ref{sec:nonideal_full_SBS}, we expect this SBS to be less efficient than an ideal one. We first train using only a symmetry breaking object from one of the $S$-orbits. We see that when given objects from that orbit, the network correctly outputs vectors pointing to vertices of the prism but fails when given objects from the other $S$-orbit. However, if we use objects from both orbits in training, the output vectors point to vertices when given objects from either $S$-orbit. Hence, we need to use at minimum 2 symmetry breaking objects during training to guarantee correct behavior for all objects in the SBS compared to only needing one example to train for an ideal SBS.
\newpage
\begin{table}[h!]
    \centering
    \caption{Results of training using a nonideal equivariant SBS. If we only see one of the $S$-orbits in training, the network fails on the unseen orbits. If we see both $S$-orbits then the network behaves correctly.}
    
    \vskip 0.15in
    
    \begin{tabular}{|>{\centering\arraybackslash}m{1.5cm}>{\centering\arraybackslash}m{1.5cm}|>{\centering\arraybackslash}m{5.8cm}>{\centering\arraybackslash}m{5.8cm}|}
        \hline
        \multicolumn{2}{|c}{\Large{Seen in training}} & \multicolumn{2}{|c|}{\Large{Outputs}} \\\hline
        $S$-orbit 1 & $S$-orbit 2 & $S$-orbit 1 & $S$-orbit 2\\\hline
        \cellcolor{green!25} Yes & \cellcolor{red!25} No & \vspace{0.1cm} \includegraphics[width=0.3\textwidth]{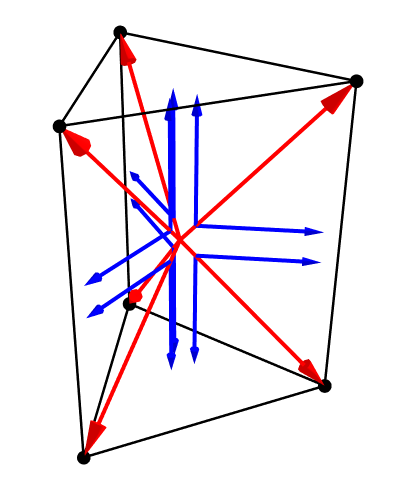} & \vspace{0.1cm} \includegraphics[width=0.3\textwidth]{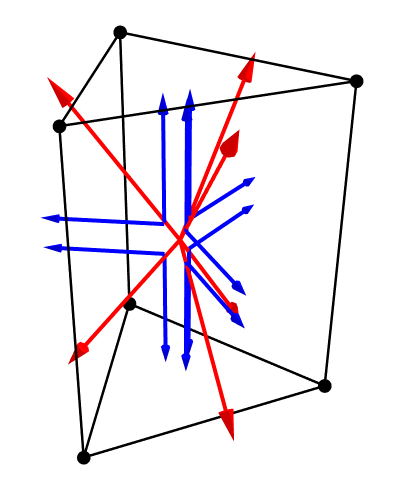} \\\hline
        \cellcolor{red!25} No & \cellcolor{green!25} Yes & \vspace{0.1cm} \includegraphics[width=0.3\textwidth]{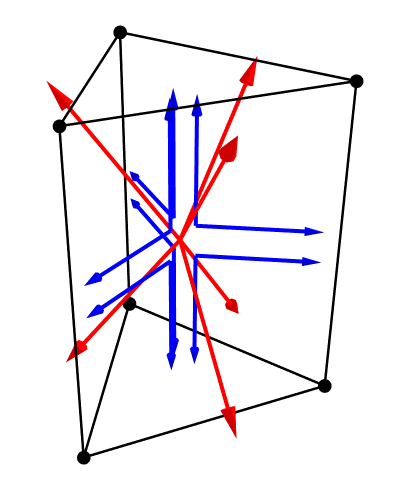} & \vspace{0.1cm} \includegraphics[width=0.3\textwidth]{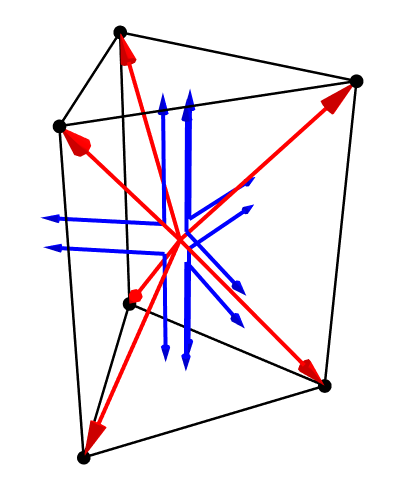} \\\hline
        \cellcolor{green!25} Yes & \cellcolor{green!25} Yes & \vspace{0.1cm} \includegraphics[width=0.3\textwidth]{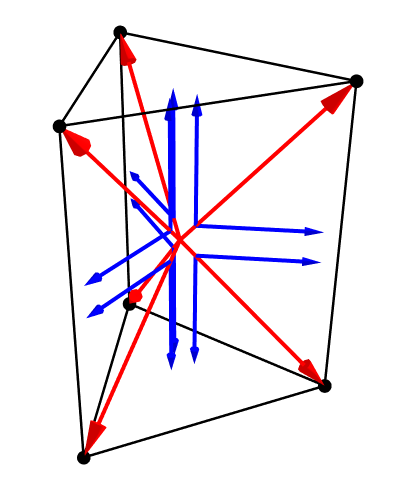} & \vspace{0.1cm} \includegraphics[width=0.3\textwidth]{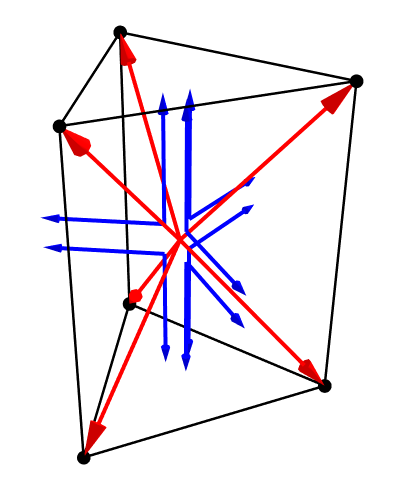}  \\\hline
    \end{tabular}
    \label{tab:nonideal_SBS}
\end{table}

\clearpage
\newpage
\subsection{Octagon to rectangle}
\label{app:octagon_details}
\subsubsection{Obtaining an ideal partial SBS}
\label{app:partial_SBS}
In this scenario, we want $G$-equivariance for $G=O(3)$ and we have $S=D_8$ and partial symmetry $K=D_2$.

\begin{figure}[h!]
    \centering
    \subfloat[\label{fig:octagon_D8h}]{\includegraphics[width=0.45\textwidth]{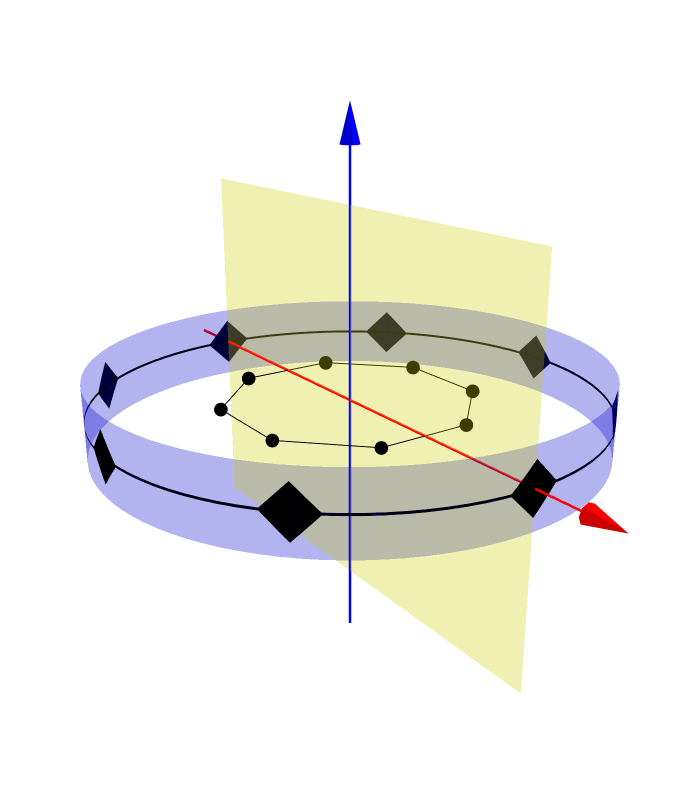}}
    \qquad
    \subfloat[\label{fig:octagon_PSBS}]{\includegraphics[width=0.45\textwidth]{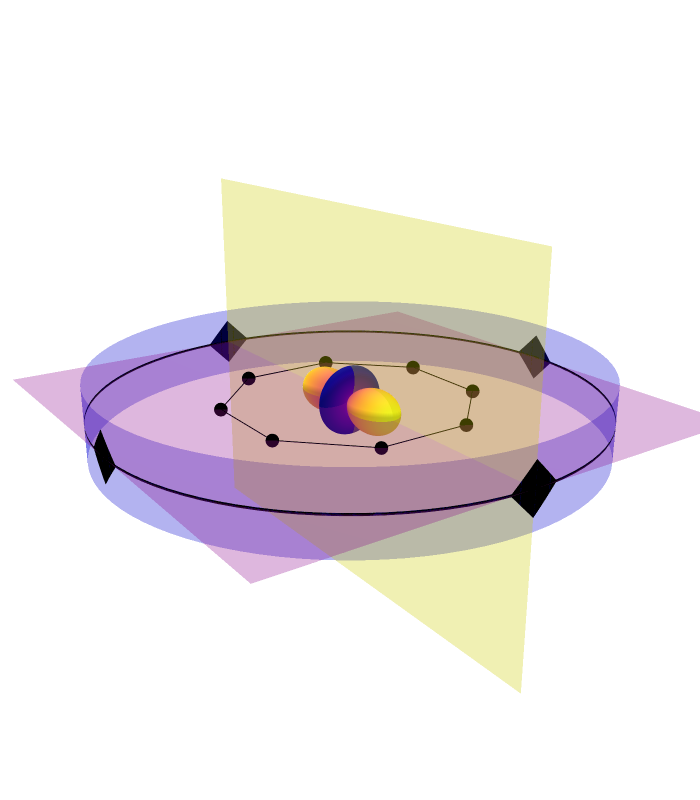}}
    \caption{(a) Octagon with $D_8$ symmetry we input and patterned cylinder with $N_G(S,K)=D_{8h}$ symmetry. The generators are $a^2,b,m$ where $a^2$ is a $2\pi/8$ rotation about the blue axis, $b$ is a $\pi$ rotation about the red axis, and $m$ is a reflection across the orange plane. (b) A symmetry breaking object which generates an ideal partial SBS. Note that in addition to the $D_2$ symmetry, this object is also symmetric under reflections across the purple and yellow planes.}
\end{figure}

We follow Algorithm~\ref{alg:idea_partial_SBS_cond} to understand what symmetry a canonical symmetry breaking object needs to generate an ideal partial SBS. First, we note $S=(e,D_8)$ and $K=(e,D_2)$. The first few steps are straightforward and we can evaluate them directly.

\begin{table}[H]
    \centering
    \begin{tabular}{cc}
        Operation & Evaluated \\\hline
        $N_1\leftarrow\texttt{Normalizers}[\name(S)]$ & $\texttt{Normalizers}[D_8]=(e,D_{16h})$ \\
        $(n,\name(N_2)))\leftarrow\texttt{Normalizers}[\name(K)]$ & $\texttt{Normalizers}[D_2]=(e,D_{4h})$\\
        $N_2\leftarrow(g_S^{-1}g_Kn,\name(N_2))$ & $(e,D_{4h})$\\
        $N\leftarrow N_1\cap N_2$ & $(e,D_{16h})\cap(e,D_{4h})=(e,D_{4h})$\\
        $N'\leftarrow(e,\name(S))\cap N_2$ & $(e,D_8)\cap (e,D_{4h})=(e,D_4)$\\
    \end{tabular}
    \label{tab:partial_eval}
\end{table}


The next step is
\[(Q_1,\phi)\leftarrow\texttt{Quotient}[N,(g_S^{-1}g_K,\name(K))]\]
where we need to create a quotient group. Here we note a presentation of $N=(e,D_{4h})$ is 
\[\langle a,b,m|a^4,b^2,m^2,(ab)^2,(am)^2,(bm)^2\rangle.\]
The normal subgroup $(g_S^{-1}g_K,\name(K))=(e,D_2)$ then consists of $e,a^2,b,a^2b$. Hence we can set the cosets
\[X=\{a,a^3,ab,a^3b\} \qquad\qquad Y=\{m,a^2m,bm,a^2bm\}\]
and we can check these generate the quotient group and we have relations $X^2=Y^2=(XY)^2=e$. Hence we have
\[Q_1=\langle X,Y| X^2,Y^2,(XY)^2\rangle.\]
Finally, setting
\begin{align*}
    \phi(a)=\phi(a^3)=\phi(ab)=\phi(a^3b)=X\\
    \phi(m)=\phi(a^2m)=\phi(bm)=\phi(a^2bm)=Y
\end{align*}
defines our homomorphism $\phi$.

The next step is
\[Q_2\leftarrow\phi(N').\]
We note $N'=(e,D_4)$ which is generated by $a,b$. We can find that $\phi(a)=X$ and $\phi(b)=\phi((a^3)(ab))=\phi(a^3)\phi(ab)=X^2=e$. Hence,
\[Q_2=\langle X| X^2\rangle.\]

The next step is
\[C\leftarrow\texttt{FindComplement}[Q_1,Q_2]\]
which in this case one can check to be generated by $Y$. This gives
\[C=\langle Y|Y^2\rangle.\]

Since $C$ exists, the next step is
\[(h,\name(H))\leftarrow\phi^{-1}(C).\]
The elements of $C$ are $e,Y$ and from the definition of $\phi$, we can see that $\phi^{-1}(C)$ consists of the elements
\[\phi^{-1}(e)=\{e,a^2,b,a^2b\} \qquad\qquad \phi^{-1}(Y)=\{m,a^2m,bm,a^2bm\}.\]
We can check that the group consisting of these elements is in fact generated by $a^2,b,m$ which is $(e,D_{2h})$. Hence
\[(h,\name(H))=(e,D_{2h}).\]

If an object has this symmetry then it can generate an idea partial SBS. In this case a simple choice is a $l=2$ object of even parity. This is depicted in Figure~\ref{fig:octagon_PSBS}. If we align it to the $x$ axis, this would correspond to $(0,0,1,0,0)$ using real spherical harmonic conventions. Applying Algorithm~\ref{alg:partial_SBS_from_obj} we would obtain the set of $l=2$ objects ``parallel'' to an edge of the octagon.

\subsection{BaTiO\texorpdfstring{$_3$}{3} experiment}
\label{app:BaTiO3}
\subsubsection{Atom matching algorithm}

We would like to predict atom distortions. However, our data consists of atom coordinates in the initial and target structures not necessarily in the same order. Hence, we need an algorithm to match similar atoms together so we know how much they are distorted. This process is complicated by the fact that we have periodic boundary conditions with periodicity determined by lattice vectors and that our atoms may be translated within the lattice. We assume our structures are given in the same rotational orientation.

For our algorithm, we use the insight that the distorted atom should still have similar vectors to neighboring atoms. Hence, we can compute a signature for an atom $a$ by taking the difference of the positions of that atom and all other atoms in the lattice. We call the set of position differences for atom $a$ from all other atoms the signature $\sigma_a$ of $a$. Note in our implementation, we also separate out the atom types in addition to the position differences.

Next, we need a way to compare signatures. Suppose we had atom $a$ from the initial structure and atom $a'$ from the target structure. Certainly, if they are different atom types, we assign a cost of $\infty$ to this pairing. Otherwise, we look at their signatures. However, we actually need to optimally pair the other atoms to do so. For a pair of atoms $b$ and $b'$ from the initial and target structures, we can give a cost of $\infty$ if they are different atoms and $\sigma_a[b]-\sigma_{a'}[b']$ otherwise. If the atoms are similar, this should be small, but it may also be shifted by lattice parameters from the smallest it could be. So we just look at all small lattice shifts $L$ and set the smallest $||\sigma_a[b]-\sigma_{a'}[b']+L||^2$ as the cost of pairing $b,b'$. This gives a cost matrix $M_{b,b'}$. With all the costs, we can run the matching algorithm in \citet{crouse2016implementing} to match the atoms. The cost of the assignment is the difference in the signatures of $a,a'$.

Finally, we can match the atoms using the comparison of signatures. We create a cost matrix where $C_{a,a'}$ is the difference in signatures of $a,a'$ from the initial and target structures. We then run another iteration of the algorithm from \citet{crouse2016implementing} to find our matching.

Because we only ever use differences of atoms, this matching algorithm is independent of translations.

\subsubsection{Translation invariant loss}
In the previous experiments, we could simply use a MSE loss of the vector differences. However, for crystals we wanted to have a loss which is invariant under translation. We realize that our matching algorithm in fact produces such a loss. By storing the matching information, we can effectively compute the same loss without having to run the matching algorithm every time. This is what we use to evaluate our model.

\newpage
\subsubsection{Symmetrically related outputs}
\begin{figure}[h!]
    \centering
    \subfloat[\label{fig:BaTiO3_model_tilted_0}]{\includegraphics[width=0.5\textwidth]{BaTiO3/BaTiO3_model_tilted_0.png}}
    \subfloat[\label{fig:BaTiO3_model_tilted_1}]{\includegraphics[width=0.5\textwidth]{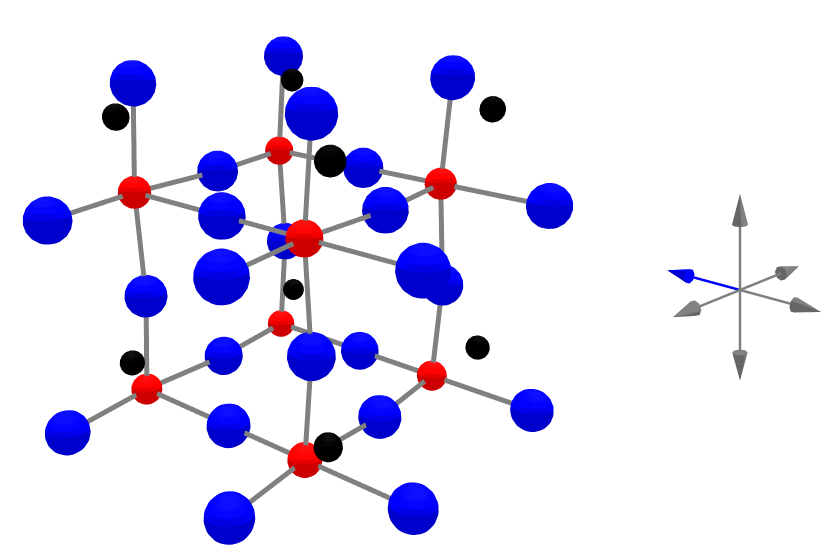}}\\
    \subfloat[\label{fig:BaTiO3_model_tilted_2}]{\includegraphics[width=0.5\textwidth]{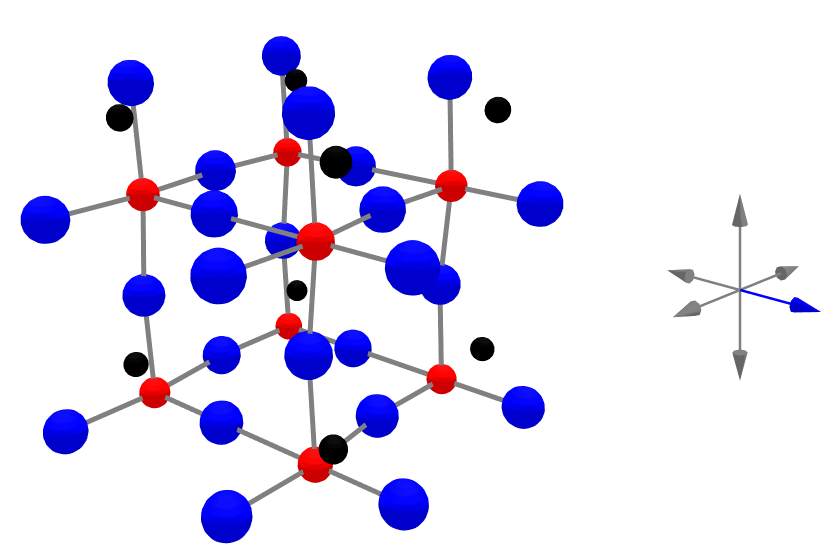}}
    \subfloat[\label{fig:BaTiO3_model_tilted_3}]{\includegraphics[width=0.5\textwidth]{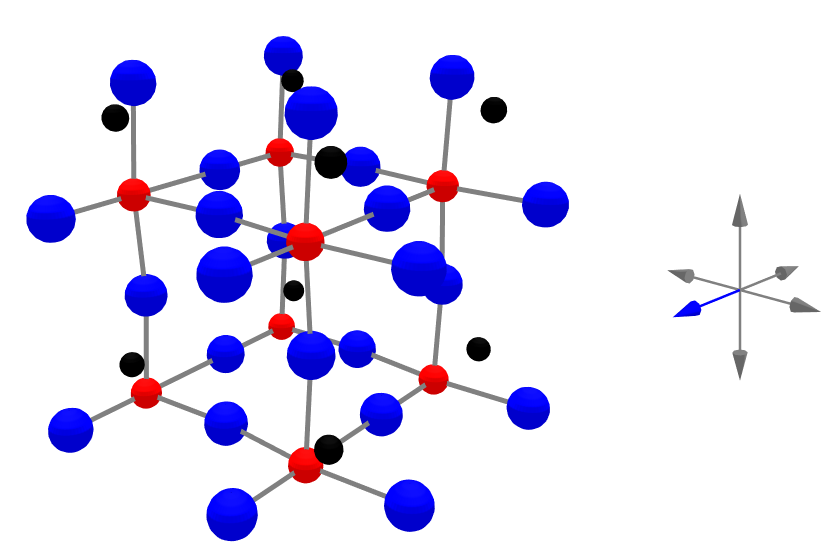}}\\
    \subfloat[\label{fig:BaTiO3_model_tilted_4}]{\includegraphics[width=0.5\textwidth]{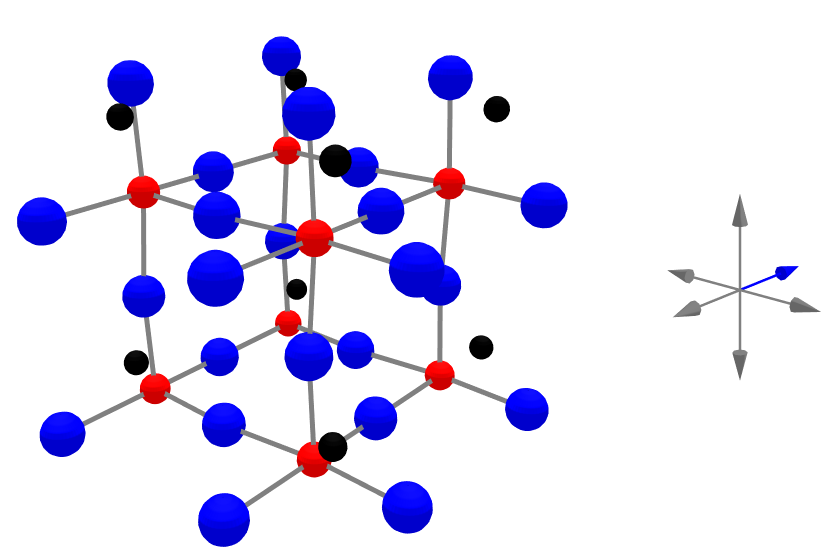}}
    \subfloat[\label{fig:BaTiO3_model_tilted_5}]{\includegraphics[width=0.5\textwidth]{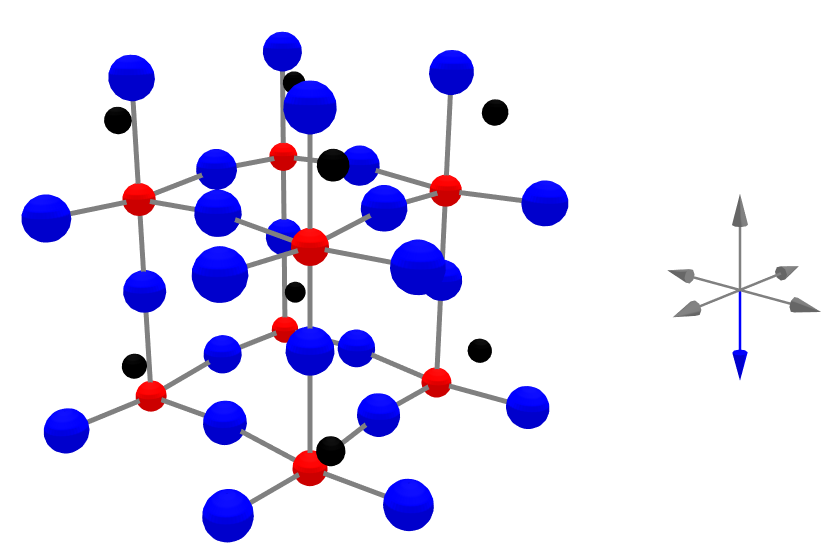}}

    \caption{Distortions of a highly symmetric crystal structure of BaTiO$_3$ when provided with each of the possible symmetry breaking objects in our ideal equivariant partial SBS.}
    
    \label{fig:BaTiO3_model_tilted_all}
\end{figure}

\begin{table}[h!]
    \centering
    \caption{Values of various quantities which help distinguish the high symmetry and low symmetry structures. Our models here try to distort the high symmetry structure to the low symmetry one.}

    \vskip 0.15in
    
    \begin{tabular}{|cc|ccccc|}
        \hline
        Structure & SB object & Bond length average & Bond length variance & Ti-O1-Ti & Ti-O2-Ti & Ti-O3-Ti\\\hline
        \multicolumn{2}{|c|}{High symmetry} & 2 & 0 & 180$^\circ$ & 180$^\circ$ & 180$^\circ$\\
        \multicolumn{2}{|c|}{Low symmetry} & 2.003417 & 0.01392 & 180$^\circ$ & 171.80$^\circ$ & 171.80$^\circ$\\
        Model & None & 2 & 0 & 180$^\circ$ & 180$^\circ$ & 180$^\circ$\\
        Model & $(1,0,0)$ & 2.003417 & 0.01392 & 180$^\circ$ & 171.80$^\circ$ & 171.80$^\circ$\\
        Model & $(-1,0,0)$ & 2.003417 & 0.01392 & 180$^\circ$ & 171.80$^\circ$ & 171.80$^\circ$\\
        Model & $(0,1,0)$ & 2.003417 & 0.01392 & 171.80$^\circ$ & 180$^\circ$ & 171.80$^\circ$\\
        Model & $(0,-1,0)$ & 2.003417 & 0.01392 & 171.80$^\circ$ & 180$^\circ$ & 171.80$^\circ$\\
        Model & $(0,0,1)$ & 2.003417 & 0.01392 & 171.80$^\circ$ & 171.80$^\circ$ & 180$^\circ$\\
        Model & $(0,0,-1)$ & 2.003417 & 0.01392 & 171.80$^\circ$ & 171.80$^\circ$ & 180$^\circ$\\
        \hline
    \end{tabular}
    \label{tab:struct_quants_all}
\end{table}

\end{document}